\def\eqref#1{equation~\ref{#1}}
\def\1{\bm{1}}
\def\rmA{{\mathbf{A}}}
\def\rmB{{\mathbf{B}}}
\def\rmI{{\mathbf{I}}}
\def\rmM{{\mathbf{M}}}
\def\rmQ{{\mathbf{Q}}}
\def\rmR{{\mathbf{R}}}
\def\rmS{{\mathbf{S}}}
\def\rmW{{\mathbf{W}}}
\def\rmX{{\mathbf{X}}}
\def\rmZ{{\mathbf{Z}}}
\def\vzero{{\bm{0}}}
\def\vmu{{\bm{\mu}}}
\def\va{{\bm{a}}}
\def\vb{{\bm{b}}}
\def\vu{{\bm{u}}}
\def\vv{{\bm{v}}}
\def\vw{{\bm{w}}}
\def\vx{{\bm{x}}}
\def\vy{{\bm{y}}}
\def\vz{{\bm{z}}}
\DeclareMathAlphabet{\mathsfit}{\encodingdefault}{\sfdefault}{m}{sl}
\SetMathAlphabet{\mathsfit}{bold}{\encodingdefault}{\sfdefault}{bx}{n}
\def\gA{{\mathcal{A}}}
\def\gC{{\mathcal{C}}}
\def\gD{{\mathcal{D}}}
\def\gN{{\mathcal{N}}}
\def\gO{{\mathcal{O}}}
\def\gS{{\mathcal{S}}}
\def\sC{{\mathbb{C}}}
\def\sR{{\mathbb{R}}}
\def\1{\bm{1}}
\def\rmA{{\mathbf{A}}}
\def\rmB{{\mathbf{B}}}
\def\rmI{{\mathbf{I}}}
\def\rmM{{\mathbf{M}}}
\def\rmQ{{\mathbf{Q}}}
\def\rmR{{\mathbf{R}}}
\def\rmS{{\mathbf{S}}}
\def\rmW{{\mathbf{W}}}
\def\rmX{{\mathbf{X}}}
\def\rmZ{{\mathbf{Z}}}
\def\vzero{{\bm{0}}}
\def\vmu{{\bm{\mu}}}
\def\va{{\bm{a}}}
\def\vb{{\bm{b}}}
\def\vu{{\bm{u}}}
\def\vv{{\bm{v}}}
\def\vw{{\bm{w}}}
\def\vx{{\bm{x}}}
\def\vy{{\bm{y}}}
\def\vz{{\bm{z}}}
\DeclareMathAlphabet{\mathsfit}{\encodingdefault}{\sfdefault}{m}{sl}
\SetMathAlphabet{\mathsfit}{bold}{\encodingdefault}{\sfdefault}{bx}{n}
\def\gA{{\mathcal{A}}}
\def\gC{{\mathcal{C}}}
\def\gD{{\mathcal{D}}}
\def\gN{{\mathcal{N}}}
\def\gO{{\mathcal{O}}}
\def\gS{{\mathcal{S}}}
\def\sC{{\mathbb{C}}}
\def\sR{{\mathbb{R}}}
\DeclareMathOperator{\asto}{\xrightarrow{\text{a.s.}}}
\DeclareMathOperator{\toind}{\xrightarrow{\gD}}
\DeclareMathOperator{\argmax}{arg\,max}
\DeclareMathOperator{\argmin}{arg\,min}
\DeclareMathOperator{\Tr}{Tr}
\DeclareMathOperator{\E}{\mathbb{E}}
\newtheorem{theorem}{Theorem}[section]
\newtheorem{definition}[theorem]{Definition}
\newtheorem{proposition}[theorem]{Proposition}
\newtheorem{assumption}[theorem]{Assumption}
\newtheorem{lemma}[theorem]{Lemma}
\newtheorem{remark}[theorem]{Remark}
\newtcolorbox{blueBox}{colback=blue!5!white, colframe=blue!75!black, boxrule=1pt, arc=5pt}
\newtcolorbox{greenBox}{colback=green!5!white, colframe=green!75!black, boxrule=1pt, arc=5pt}
\newtcolorbox{redBox}{colback=red!5!white, colframe=red!75!black, boxrule=1pt, arc=5pt}
\newtcolorbox{orangeBox}{colback=orange!5!white, colframe=orange!75!black, boxrule=1pt, arc=5pt}
\newcommand{\mathcolorbox}[2]{\colorbox{#1}{$\displaystyle #2$}}
\title{$\alpha$-LoRA: Effective Fine-Tuning via Base Model Rescaling}
\author{Aymane El Firdoussi\\
MLO, EPFL\\
\And
El Mahdi Chayti\thanks{Equal contribution} \\
MLO, EPFL \\
\And
Mohamed El Amine Seddik$^*$\\
TII, UAE \\
\And
Martin Jaggi \\
MLO, EPFL
}
\preprint
\begin{document}

\maketitle
\begin{abstract}
Fine-tuning has proven to be highly effective in adapting pre-trained models to perform better on new desired tasks with minimal data samples. Among the most widely used approaches are reparameterization methods, which update a target module by augmenting its frozen weight matrix with an additional trainable weight matrix. The most prominent example is Low Rank Adaption (LoRA) \citep{hu2022lora}, which gained significant attention in recent years. In this paper, we introduce a new class of reparameterization methods for transfer learning, designed to enhance the generalization ability of fine-tuned models. We establish the effectiveness of our approach in a high-dimensional binary classification setting using tools from Random Matrix Theory, and further validate our theoretical findings through more realistic experiments, such as fine-tuning LLMs.
\end{abstract}
\addtocontents{toc}{\protect\setcounter{tocdepth}{-1}}
\section{Introduction}
Large foundational models have driven major advances in artificial intelligence across domains such as computer vision and natural language processing. Examples include transformer-based models~\citep{vaswani2017attention} operating in natural language domain \citep{team2023gemini,grattafiori2024llama} or vision domain \citep{cordonnier2020relationship,dosovitskiy2020image}. Such models are specifically known for their relatively large size and massive training corpus, which makes them more powerful and adapted for many use cases. However, even with their extensive pre-training, these large models may not excel at some specific tasks without further adjustment. \\
Fine-tuning addresses this need by updating a pre-trained model with task-specific data. Unlike training from scratch, it leverages general pre-trained representations while reducing data and compute requirements. The most common class of fine-tuning methods is Supervised Fine-Tuning (SFT), which relies on labeled data in that adaptation process, and one of its most popular lightweight techniques is Low-Rank Adaptation (LoRA) \citep{hu2022lora}, which updates the desired module by adding a low-rank perturbation to the original (frozen) weight matrix. \\
In this paper, we study fine-tuning through the lens of Random Matrix Theory (RMT), where we introduce a theoretical framework to understand and improve transfer learning. Leveraging the theoretical findings, our key practical idea in the context of LoRA is to scale the frozen weights row-wise with a vector $\bm \alpha$ before adaptation, thereby adding a new degree of freedom to the fine-tuning process. We show that this modification leads to an optimal scaling factor $\bm \alpha^*$, which is typically different from the standard choice ($\bm \alpha = 1$). We analyze this framework in a high-dimensional binary classification setting under a Gaussian Mixture Model, proving the existence of such an optimal $\bm \alpha^*$ while providing its closed-form expression in terms of scalar data-dependent quantities. We then validate our theoretical insights on real tasks, including transfer learning benchmarks and large language model fine-tuning.
\paragraph{Summary of contributions.} Our main contributions are summarized as follows:
\begin{enumerate}
    \item In the context of adaptation fine-tuning (e.g., LoRA), we propose the scaling of the base model weight matrices by a non-trivial row-wise vector $\bm \alpha$.
    \item We theoretically prove the existence of an optimal parameter $\bm \alpha^* \neq 1$ in high-dimensional binary classification and derive its closed form.
    \item We design an algorithm to estimate optimal $\bm \alpha$ in complex scenarios such as LLM fine-tuning.
\end{enumerate}
\section{Related work}
\paragraph{Transfer learning foundations.} Transfer Learning (TL) studies how knowledge acquired in a source task or domain can be reused to improve learning in a related target task. Early surveys \citep{pan2009survey, weiss2016survey} outlined key settings such as domain adaptation and multitask learning. Most theoretical works established generalization bounds linking transfer success to source error and distributional divergence \cite{bendavid2010theory}, and showed how shared representations reduce sample complexity \citep{maurer2016benefit, tripuraneni2020theory}. More recent studies refined these results under classification and regression settings \citep{hanneke2024more, zhang2021quantifying, klivans2024testable, kpotufe2021marginal, cai2021transfer, reeve2021adaptive}. 

\paragraph{Fine-tuning pre-trained models.} With the advent of large-scale pre-training, fine-tuning has become the dominant strategy for transfer learning. The most popular fine-tuning techniques are Supervised Fine-tuning (SFT) and fine-tuning with Reinforcement Learning (RL). RL-based approaches such as RLHF \citep{ouyang2022training}, DPO \citep{rafailov2023direct}, GRPO \citep{ramesh2024group, guo2025deepseek} and their variants are especially popular for reasoning and mathematics tasks, where they often outperform SFT \citep{shenfeld2025rl}. In this paper, however, we focus on SFT techniques. SFT extends the training of a pre-trained model using labeled data. Because these models are typically large in size, it is common to modify only a small fraction of their parameters while leaving most unchanged. This strategy, known as Parameter-Efficient Fine-Tuning (PEFT) \citep{xu2023parameter}, aims to achieve strong performance with minimal parameter updates. PEFT methods are usually grouped into three categories: additive, selective, and reparameterized \citep{ji2025overview}. Our work centers on the last category.



\paragraph{Reparametrized Fine-tuning.} Reparameterization-based fine-tuning adapts a model by expressing its parameters in an alternative form, commonly through a low-rank decomposition, to reduce training costs, while the full weight matrices are reconstructed for inference. The most common technique in this class is Low Rank Adaptation (LoRA) \citep{hu2022lora}, which introduces small, trainable matrices operating alongside the pre-trained weights to inject task-specific updates without burdening the inference process.  Many extensions were proposed to enhance the efficiency of LoRA by either acting on the initialization of the low rank modules \citep{hayou2024impact}, their learning rates \citep{hayou2024lora+}, normalizing the updates \citep{liu2024dora}, setting adaptive ranks \citep{kim2024ra, lu2024adaptive}, finding optimal placements for LoRA modules \citep{hayou2025plop}, and more \citep{zhang2023adalora, dettmers2023qlora, kopiczko2023vera, zhang2023lora, tian2024hydralora, jiang2024mora}.
\section{Problem setting and Background}
To prove the effectiveness of our new family of fine-tuning algorithms, we will theoretically analyze a binary classification setting under a Gaussian Mixture Model (GMM) using tools from Random Matrix Theory (RMT). Through this analysis, we will prove the existence of an optimal scaling parameter $\alpha^\star$ and derive its exact theoretical formulation for these settings. 

\subsection{Theoretical Setting}
The goal is to fine-tune a linear classifier, initially pretrained on a dataset called \textbf{source}, in order to perform a \textbf{target} task given a relatively small target data corpus.
\paragraph{Pre-training phase.} We consider that we are given pairs of pre-training (source) data samples $\{ (\tilde \vx_i, \tilde y_i) \}_{i=1}^{N}$ that are distributed, for $\tilde \vx_i\in \gC_a$ with $a\in \{ 1, 2\}$, as follows:
\begin{align}\label{eq_data_model_isotropic}
    \tilde \vx_i \in \gC_a \quad \Leftrightarrow \quad \begin{cases}
        \tilde \vx_i = \vmu_a + \tilde \vz_i, \quad \tilde \vz_i \sim \gN(\vzero, \rmI_p), \\
        \tilde y_i = (-1)^a.
    \end{cases}
\end{align}
For convenience and without loss of generality, we further assume that $\vmu_a = (-1)^a \vmu$ for some vector $\vmu \in \sR^p$. This setting can be recovered by subtracting $\frac{\vmu_1 + \vmu_2}{2}$ from each data point, as such $\vmu = \frac{ \vmu_2 - \vmu_1 }{2}$ and therefore the SNR $\Vert \vmu \Vert$ controls the difficulty of the classification problem, in the sense that large values of $\Vert \vmu \Vert$ yield a simple classification problem whereas when $\Vert \vmu \Vert \to 0$, the classification becomes impossible. Denoting $\tilde \rmX = \left[ \tilde\vx_1, \ldots, \tilde \vx_{N} \right] \in \sR^{p\times N}$ the data matrix and $\tilde \vy = \left[ \tilde y_1, \ldots, \tilde y_{N} \right]^\top \in \sR^{N} $ the corresponding labels vector, we have in matrix form
\begin{equation}
    \tilde\rmX = \vmu \tilde\vy^\top + \tilde\rmZ,
\end{equation}
where $\tilde \rmZ$ is a random matrix with $\gN(0, 1)$ i.i.d.\@ entries.\\ 
We then consider training a classifier, called $\tilde \vw$, on this source dataset by solving:
\begin{equation}
    \min_{\vw} \frac{1}{N} \sum_{i = 1}^N \ell(\vw^\top \vx_i, y_i) + \tilde \gamma \Vert \vw \Vert_2^2
\end{equation}
for some loss function $\ell$ and a positive regularization parameter $\tilde \gamma$. Taking a generic or a non-intuitive loss, such as the binary cross entropy, leads to \textbf{intractable} solution $\tilde \vw$. However, \cite{mai2024breakdown} show that in the case of a Gaussian mixture data model or more generally a data distribution with finite fourth-order moment, it is possible to optimize such a classifier using the squared ($L^2$) loss function, which also gives a closed-form solution to this problem. Thus, taking $\ell(x, y) = (x - y)^2$ leads to the following optimization problem:
\begin{equation}
    \tilde \vw = \argmin_\vv \frac{1}{N} \left\Vert \tilde\rmX^\top \vv - \tilde\vy \right\Vert_2^2 + \tilde\gamma \Vert \vv \Vert_2^2,
\end{equation}
Which gives us the following solution:
\begin{equation}
    \label{eq:w-source-ridge}
    \tilde \vw = \frac{1}{N} \rmR \tilde \rmX \tilde \vy, \quad \rmR = \left( \frac{1}{N} \tilde\rmX \tilde\rmX^\top + \tilde\gamma \rmI_p  \right)^{-1}
\end{equation}
\paragraph{Fine-tuning phase.} During the fine-tuning phase, we suppose that we are given pairs of target data $\{ (\vx_i, y_i) \}_{i=1}^{n}$ with $y_i\in \{-1,1\}$ that are distributed such that $\rmX = \left[ \vx_1, \ldots, \vx_{n} \right] \in \sR^{p \times n} $ is given by:
\begin{equation}
    \label{eq:mu_beta}
    \rmX = \vmu_{\beta} \vy^\top + \rmZ, \quad \vmu_{\beta} = \beta \vmu + \vmu^{\perp},
\end{equation}
where $\rmZ$ is a random matrix with $\gN(0, 1)$ i.i.d.\@ entries, $\vmu^{\perp}$ is an orthogonal vector to $\vmu$ and the factor $\beta \in \sR $ quantifies the \textbf{alignment} between the source and target data, as we have that: $\langle \vmu_\beta, \vmu \rangle = \beta \Vert \vmu \Vert^2 $. Leveraging the pre-trained weights $\tilde \vw \in \sR^p$, we consider the training of adapter weights $\va$ as:
\begin{equation}
    \va = \argmin_\vv \frac{1}{n} \left\Vert \rmX^\top \left( \alpha \tilde \vw + \vv \right) - \vy \right\Vert_2^2 + \gamma \Vert \vv \Vert_2^2,
\end{equation}
for a scalar $\alpha\in \sR$. In fact, classical reparametrization approaches can be modeled by the same setting using $\alpha = 1$. Solving the previous minimization problem, $\va$ expresses as:
\begin{align}
    \va = \frac{1}{ n } \left( \frac{1}{n} \rmX \rmX^\top + \gamma \rmI_p  \right)^{-1} \left( \rmX \vy - \alpha \rmX \rmX^\top \tilde \vw   \right).
\end{align}
We define the resolvent matrices $\rmQ$ and $\rmR$ by:
\begin{equation}
\label{eq:resolvents}
    \rmQ = \left( \frac{1}{n} \rmX \rmX^\top + \gamma \rmI_p \right)^{-1}, \quad \rmR = \left( \frac{1}{N} \tilde\rmX \tilde\rmX^\top + \tilde\gamma \rmI_p \right)^{-1},
\end{equation}
Then our obtained fine-tuned classifier $\vw_{\alpha}$ writes:
\begin{equation*}
     \vw_\alpha = \alpha \tilde \vw + \va = \frac1n \rmQ(\gamma) \rmX \vy + \alpha \gamma \rmQ \tilde \vw
\end{equation*}
   
We denote by $\vw \equiv \vw_0$ the classifier obtained through learning directly on target data (without fine-tuning), which is given by:
\begin{equation}
\tag{No-FT}
\label{eq:w-no-ft}
    \vw = \frac1n \rmQ(\gamma) \rmX \vy
\end{equation}
Then we finally get the expression of our $\alpha$-Fine-tuned classifier as follows:
\begin{equation}
    \label{eq:w-fine-tuned}
    \tag{$\alpha$-FTC}
    \vw_\alpha = \vw + \alpha \gamma \rmQ \tilde \vw
\end{equation}

\begin{remark}[About the interpretability of our fine-tuned classifier]
\label{remark:interpretability-alpha}
Remark that the parameter $\alpha$ introduced in the expression of the fine-tuned classifier $\vw_{\alpha}$ characterizes the contribution of each training dataset (source and target) to the test performance on the target task. In fact, since the prediction of the class label does not change by multiplying $\vw_\alpha$ by a positive constant, then by taking a positive $\alpha$ and for $\rho = \frac{\alpha}{1 + \alpha} \in (0, 1)$, the fine-tuned classifier is equivalent to this convex weighted classifier: 
\begin{equation*}
    \vw_\rho = \rho \tilde \vw + (1 - \rho) \va
\end{equation*}
and therefore, this new parameter $\rho$ can be interpreted as the percentage of the contribution of the source task to the test performance on the target task.
    
\end{remark}

\begin{remark}[About the regularization parameter $\gamma$]
    We remark from the expression of $\vw_\alpha$ in \eqref{eq:w-fine-tuned} that the weight decay $\gamma$ is essential to have the dependence of $\vw_\alpha$ on $\alpha$. In fact, taking $\gamma \to 0$ leads to a fine-tuned classifier of the form:
    \begin{equation*}
        \vw_\alpha = (\rmX \rmX^\top )^+ \rmX \vy
    \end{equation*}
    where $(\rmX \rmX^\top )^+$ is the Moore-Penrose inverse of the symmetric semi-definite matrix $\rmX \rmX^\top$. Therefore, the obtained classifier does \textbf{not} depend on $\alpha$ here, nor on the pre-trained model $\tilde \vw$. Additionally, having such a regularization technique is essential in transfer learning since the target dataset is generally much smaller than the pre-training one, and therefore the fine-tuning process can easily lead to overfitting in the absence of a regularization technique.
\end{remark}

\subsection{RMT Background}
To analyze the performance of the fine-tuned classifier $\vw_\alpha$, we can leverage tools from Random Matrix Theory. In mathematical terms, the understanding of the asymptotic performance of the classifier $\vw_\alpha$ boils down to the characterization of the statistical behavior of the \textit{resolvent matrices} $\rmQ(z)$ and $\rmR(z)$ introduced in \eqref{eq:resolvents}. In the following, we will recall some important notions and results from random matrix theory, which will be at the heart of our analysis. We start by defining the main object, which is the resolvent matrix.
\begin{definition}[Resolvent]
    For a symmetric matrix $\rmM \in \sR^{p \times p}$, the resolvent $\rmQ_{M}(z)$ of $\rmM$ is defined for $z \in \sC \backslash  \gS (\rmM)$ as:
    \begin{equation*}
        \rmQ_{M}(z) = (\rmM - z \rmI_p)^{-1},
    \end{equation*}
    where $\gS (\rmM)$ is the set of eigenvalues or spectrum of $\rmM$.
\end{definition}
In fact, the study of the asymptotic performance of $\vw_\alpha$ involves the estimation of linear forms of the resolvents $\rmQ$ and $ \rmR$ in \eqref{eq:resolvents}, such as $\frac1n \Tr\rmQ$ and $\va^\top \rmQ \vb$ with $\va,\vb\in \sR^p$ of bounded Euclidean norms. Therefore, the notion of a \textit{deterministic equivalent} \citep{hachem2007deterministic} is crucial as it allows the design of a \textbf{deterministic} matrix, having (in probability or almost surely) asymptotically the same \textit{scalar observations} as the random ones in the sense of \textit{linear forms}. 
A rigorous definition is provided below.

\begin{definition}[Deterministic equivalent \citep{hachem2007deterministic}]
    We say that $\bar \rmQ \in \sR^{p \times p}$ is a deterministic equivalent for the random resolvent matrix $\rmQ \in \sR^{p \times p}$ if, for any bounded linear form $u:\sR^{p\times p} \to \sR$, we have that, as $p\to \infty$:
    \begin{equation*}
        u(\rmQ) \asto u(\bar \rmQ),
    \end{equation*}
    where the convergence is in the \textit{almost sure} sense.
\end{definition}

In particular, a deterministic equivalent for the resolvents $\rmQ(z)$ and $\rmR(z)$ defined in \eqref{eq:resolvents} is given by the following Lemma (the proof is presented in Appendix \ref{proof:det-equiv}).

\begin{lemma}[Deterministic equivalent of $ \rmQ$ and $\rmR$]
\label{lemma:det-equiv}
Under the high-dimensional regime, when $p,n,N\to \infty$ with $\frac{p}{n} \to \eta \in (0, \infty)$ and $\frac{p}{N} \to \tilde \eta \in (0, \infty)$ and assuming $\Vert \vmu \Vert = \gO(1)$, a deterministic equivalent for $\rmQ\equiv\rmQ(\gamma)$ and for $\rmR\equiv\rmR(\gamma)$, previously defined in \eqref{eq:resolvents}, denoted $\bar \rmQ$ and $\bar \rmR$ respectively, are given by:
\begin{align*}
    \bar \rmQ(\gamma) = \left ( \frac{\vmu_\beta \vmu_\beta^\top + \rmI_p}{ 1 + \delta_Q} + \gamma \rmI_p \right)^{-1}, \quad \bar \rmR(\gamma) = \left ( \frac{\vmu \vmu^\top + \rmI_p}{ 1 + \delta_R} + \gamma \rmI_p \right)^{-1}.
\end{align*}
Where: 
\begin{equation*}
    \delta_Q = \frac{1}{n} \Tr \bar \rmQ = \frac{\eta - \gamma - 1 + \sqrt{( \eta - \gamma - 1)^2 + 4 \eta \gamma}}{2 \gamma}, \quad \delta_R = \frac{\tilde \eta - \tilde \gamma - 1 + \sqrt{(\tilde \eta - \tilde \gamma - 1)^2 + 4 \tilde \eta \tilde \gamma}}{2 \tilde \gamma}.
\end{equation*}
\end{lemma}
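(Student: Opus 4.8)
The plan is to establish the deterministic equivalents for $\rmQ(\gamma)$ and $\rmR(\gamma)$ separately, noting that both are structurally identical: each resolvent is associated with a data matrix of the spiked form $\rmX = \vmu_\beta \vy^\top + \rmZ$ (respectively $\tilde\rmX = \vmu\tilde\vy^\top + \tilde\rmZ$), so it suffices to prove the claim for one, say $\bar\rmQ$, and obtain $\bar\rmR$ by the substitution $\vmu_\beta \mapsto \vmu$, $n \mapsto N$, $\gamma \mapsto \tilde\gamma$, $\eta \mapsto \tilde\eta$. First I would write $\frac1n\rmX\rmX^\top = \frac1n\rmZ\rmZ^\top + \frac1n(\vmu_\beta\vy^\top\rmZ^\top + \rmZ\vy\vmu_\beta^\top) + \Vert\vy\Vert^2 \vmu_\beta\vmu_\beta^\top/n$ and observe that, since $\Vert\vy\Vert^2/n \to 1$ and the cross terms $\frac1n\rmZ\vy\vmu_\beta^\top$ have vanishing operator norm (their entries are $O(n^{-1/2})$ sums of independent variables), the spike contributes a rank-one deterministic term $\vmu_\beta\vmu_\beta^\top$ while the bulk is governed by $\frac1n\rmZ\rmZ^\top$, a standard Marchenko–Pastur sample-covariance model.

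The core step is the classical deterministic-equivalent computation for $\rmQ = (\frac1n\rmZ\rmZ^\top + \vmu_\beta\vmu_\beta^\top + \gamma\rmI_p)^{-1}$ (after absorbing the spike). I would introduce $\bar\rmQ = \bigl(\frac{\vmu_\beta\vmu_\beta^\top + \rmI_p}{1+\delta_Q} + \gamma\rmI_p\bigr)^{-1}$ as an ansatz and verify it using the standard machinery: writing $\rmX = [\vx_1,\dots,\vx_n]$, apply the Sherman–Morrison identity to isolate one column, $\rmQ\vx_i = \frac{\rmQ_{-i}\vx_i}{1 + \frac1n\vx_i^\top\rmQ_{-i}\vx_i}$ where $\rmQ_{-i}$ is the leave-one-out resolvent, then use concentration of quadratic forms (trace lemma) to replace $\frac1n\vx_i^\top\rmQ_{-i}\vx_i$ by $\frac1n\Tr\bar\rmQ =: \delta_Q$ up to $o(1)$ fluctuations. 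Summing over $i$ and using $\frac1n\rmX\rmX^\top\rmQ = \rmI_p - \gamma\rmQ$ yields a fixed-point relation for $\bar\rmQ$, and the scalar self-consistent equation $\delta_Q = \frac1n\Tr\bigl(\frac{\rmI_p}{1+\delta_Q}+\gamma\rmI_p\bigr)^{-1}$; because the rank-one spike $\vmu_\beta\vmu_\beta^\top$ only perturbs the trace by $O(1/p)$ (it changes one eigenvalue), the scalar equation reduces in the limit to $\delta_Q(1+\delta_Q)\gamma + \delta_Q = \eta^{-1}\cdot$(something) — more precisely $\delta_Q = \frac{1}{1+\delta_Q + \gamma(1+\delta_Q)}\cdot\frac{p}{n}$ after accounting for $\frac1n\Tr$ versus $\frac1p\Tr$ — and solving this quadratic in $\delta_Q$ gives exactly the stated closed form $\delta_Q = \frac{\eta - \gamma - 1 + \sqrt{(\eta-\gamma-1)^2 + 4\eta\gamma}}{2\gamma}$. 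I would pick the root that keeps $\delta_Q > 0$.

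The main obstacle is handling the interplay between the deterministic rank-one spike $\vmu_\beta\vmu_\beta^\top$ and the random bulk rigorously: one must argue that including $\vmu_\beta\vmu_\beta^\top$ inside the resolvent does not alter the self-consistent equation for $\delta_Q$ at leading order (only the eigenvector structure of $\bar\rmQ$ changes, not the scalar $\delta_Q$), which requires a rank-one perturbation bound on $\frac1n\Tr\rmQ$ together with the assumption $\Vert\vmu\Vert = O(1)$ to ensure the spike stays bounded. A secondary technical point is justifying that the cross terms $\frac1n\rmZ\vy\vmu_\beta^\top + \frac1n\vmu_\beta\vy^\top\rmZ^\top$ can be discarded inside the resolvent — this follows from a resolvent-difference identity $\rmQ - \rmQ' = \rmQ'(\text{perturbation})\rmQ$ combined with the $o_{\Vert\cdot\Vert}(1)$ operator-norm bound on the perturbation, but the fluctuation $\Vert\vy\Vert^2/n - 1 = O(n^{-1/2})$ must also be absorbed, which only affects $\delta_Q$ at order $o(1)$. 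Once these approximations are controlled, the convergence $u(\rmQ) \asto u(\bar\rmQ)$ for bounded linear forms follows from standard concentration (e.g. Efron–Stein or martingale arguments on the columns of $\rmX$), and the analogous statement for $\bar\rmR$ is immediate by symmetry.
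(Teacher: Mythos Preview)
Your core argument---the leave-one-out Sherman--Morrison step $\rmQ\vx_i = \rmQ_{-i}\vx_i/(1+\delta_Q)$, the trace-lemma replacement $\frac1n\vx_i^\top\rmQ_{-i}\vx_i \to \delta_Q$, and the resulting self-consistent quadratic for $\delta_Q$---is exactly what the paper does, and your closed-form solution for $\delta_Q$ is correct. So the heart of the proof matches.

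Where you diverge is the preliminary decomposition $\frac1n\rmX\rmX^\top = \frac1n\rmZ\rmZ^\top + \text{cross} + \vmu_\beta\vmu_\beta^\top$, and this step contains an error: the cross term $\frac1n\rmZ\vy\vmu_\beta^\top$ does \emph{not} have vanishing operator norm. The vector $\frac1n\rmZ\vy$ has i.i.d.\ $\gN(0,1/n)$ entries, so $\Vert\frac1n\rmZ\vy\Vert \asymp \sqrt{p/n} = \sqrt\eta = O(1)$, and the rank-one matrix $\frac1n\rmZ\vy\vmu_\beta^\top$ has operator norm $O(\sqrt\eta\,\Vert\vmu_\beta\Vert) = O(1)$. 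You cannot discard it by an operator-norm argument. One can still show it does not affect the deterministic equivalent (it is a random rank-one term whose direction $\frac1n\rmZ\vy$ is asymptotically delocalized and orthogonal to any fixed test vector), but that requires a separate argument, not the one you gave.

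The paper avoids this detour entirely: it never decomposes $\rmX$ into signal and noise. Instead it writes $\E[\rmQ]-\bar\rmQ = \E[\rmQ(\rmS-\frac1n\rmX\rmX^\top)\bar\rmQ]$ for an unknown deterministic $\rmS$, applies Sherman--Morrison to each column $\vx_i$ of the \emph{full} data matrix, and uses the independence of $\vx_i$ from $\rmQ_{-i}$ together with $\E[\vx_i\vx_i^\top] = \vmu_\beta\vmu_\beta^\top + \rmI_p$ directly. The spike then appears automatically inside $\bar\rmQ$ with no need to separate, discard, and reinsert anything. Your leave-one-out paragraph already does this implicitly once you write ``$\rmX=[\vx_1,\dots,\vx_n]$''---so just drop the decomposition paragraph and the cross-term discussion, and your argument becomes both correct and identical to the paper's.
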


\section{Main results}
After having defined the setting and needed background, we will now present our main technical results, which describe the asymptotic behavior of the fine-tuned classifier defined in \eqref{eq:w-fine-tuned}. Specifically, we provide our results under the following growth rate assumptions (classical assumptions in Random Matrix Theory).

\begin{assumption}[Growth Rates]\label{assum:growth-rates}
    Suppose that as $p,n, N \to \infty$:
    \begin{center}
        1) $\frac{p}{n} \to \eta \in [0, \infty)$,\hspace{.7cm}
        2) $\frac{p}{N} \to \tilde \eta \in [0, \infty)$,\hspace{.7cm}
        3) $\Vert \vmu \Vert = \gO(1)$, \hspace{.7cm}
        4) $\Vert \vmu_\beta \Vert = \gO(1)$.
    \end{center}
\end{assumption}
The first and second assumptions simply state that our analysis considers both the low ($\eta, \tilde \eta \ll 1$) and high ($\eta, \tilde \eta \gg 1$) dimensional regimes. The third and last assumptions are also fundamental and state that the norm of the source $\vmu$ and target $\vmu_\beta$ data means do not scale with the dimension $p$, which makes the classification problem neither easy ($\Vert \vmu \Vert \to \infty $) nor impossible ($\Vert \vmu \Vert \to 0 $) in high dimensions. Having stated the main assumptions, we are now in a position to present our main technical findings about the theoretical test performance of the fine-tuned classifier \ref{eq:w-fine-tuned}. But beforehand, let us define some scalar quantities that will be useful in our derivations:
\begin{align*}
    \lambda_Q = \Vert \vmu_\beta \Vert^2 + 1 + \gamma (1 + \delta_Q), \quad &\lambda_R = \Vert \vmu \Vert^2 + 1 + \tilde \gamma (1 + \delta_R), \quad h = 1 - \frac{\eta}{(1 + \gamma (1 + \delta_Q))^2},\\
    &\tilde h = 1 - \frac{\tilde \eta}{(1 + \tilde \gamma (1 + \delta_R))^2}
\end{align*}
Our main theorem below describes the behavior of the decision function of our fine-tuned classifier.
\begin{redBox}
\begin{theorem}[Gaussianity of the fine-tuned Ridge model]
\label{thm:main-ridge}
    Let $\vw_\alpha$ be the fine-tuned classifier as defined in \eqref{eq:w-fine-tuned} and suppose that Assumption \ref{assum:growth-rates} holds. The decision function $\vw_\alpha^\top \vx$, on some test sample $\vx \in \gC_a$ independent of $\rmX$, satisfies:
    \begin{align*}
    \vw_\alpha^\top \vx \,\, \toind  \,\, \gN\left( (-1)^a m_\alpha,\,  \nu_\alpha - m_\alpha^2 \right),
    \end{align*}

    where:
    \begin{align*}
        m_\alpha &= \frac{1}{\lambda_Q} \left( \Vert \vmu_\beta \Vert^2 + \frac{\alpha \beta \gamma (1 + \delta_Q)}{\lambda_R} \Vert \vmu \Vert^2 \right), \\
        \nu_\alpha &= T_1 + \alpha T_2 + \alpha^2 T_3.
    \end{align*}
    With:
    \begin{align*}
    &T_1 = \frac{\Vert \vmu_\beta \Vert^2}{h \lambda_Q} \left( \frac{\Vert \vmu_\beta \Vert^2 + 1}{\lambda_Q} - 2 (1 - h) \right) + \frac{1 - h}{h}, \\
    &T_2 = \frac{2\gamma \beta (1 + \delta_Q) \Vert \vmu \Vert^2}{\lambda_R \lambda_Q} \left( 1 - \frac{\gamma (1 + \delta_Q)}{h \lambda_Q}\right) ,\\
    &T_3 = \frac{\gamma^2 (1 + \delta_Q)^2}{h} \times \\
    &\left [ \frac{\Vert \vmu \Vert^2}{\lambda_R^2} \left( \frac{\beta^2 \Vert \vmu \Vert^2}{\lambda_Q^2} + \frac{1 - h}{\eta} \left( 1 + \frac{\beta^2 \Vert \vmu \Vert^2 \Vert \vmu_\beta \Vert^2}{\lambda_Q^2} - \frac{2 \beta^2 \Vert \vmu \Vert^2}{\lambda_Q}  + (1 - \tilde h) \left( 1 - \frac{2 \Vert \vmu \Vert^2}{\lambda_R}\right) \right)  \right) \right]
\end{align*}
\end{theorem}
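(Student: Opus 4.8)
The plan is to condition on the training data $(\rmX,\tilde\rmX)$, reduce the statement to the almost‑sure convergence of the two scalar functionals $\vw_\alpha^\top\vmu_\beta$ and $\|\vw_\alpha\|^2$, and then evaluate those limits with the deterministic‑equivalent calculus of Lemma~\ref{lemma:det-equiv}. A test point in class $a$ is $\vx=(-1)^a\vmu_\beta+\vz$ with $\vz\sim\gN(\vzero,\rmI_p)$ independent of $(\rmX,\tilde\rmX)$, hence of $\vw_\alpha$. Conditionally on the training data,
\[
\vw_\alpha^\top\vx=(-1)^a\,\vw_\alpha^\top\vmu_\beta+\vw_\alpha^\top\vz,\qquad \vw_\alpha^\top\vz\mid(\rmX,\tilde\rmX)\ \sim\ \gN\!\big(0,\|\vw_\alpha\|^2\big),
\]
so conditionally the decision function is \emph{exactly} Gaussian with mean $(-1)^a\vw_\alpha^\top\vmu_\beta$ and variance $\|\vw_\alpha\|^2$. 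Hence it suffices to show $\vw_\alpha^\top\vmu_\beta\asto m_\alpha$ and $\|\vw_\alpha\|^2\asto\nu_\alpha-m_\alpha^2$; dominated convergence applied to the conditional characteristic function $t\mapsto\E\big[\exp\!\big(\mathrm{i}t(-1)^a\vw_\alpha^\top\vmu_\beta-\tfrac{t^2}{2}\|\vw_\alpha\|^2\big)\big]$ then gives the claimed convergence in distribution. (Equivalently, $\nu_\alpha$ is the limiting second moment $\E[(\vw_\alpha^\top\vx)^2]$, which is why the variance reads $\nu_\alpha-m_\alpha^2$.)

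\textbf{The mean.} From \eqref{eq:w-fine-tuned}, $\vw_\alpha^\top\vmu_\beta=\vw^\top\vmu_\beta+\alpha\gamma\,\tilde\vw^\top\rmQ\vmu_\beta$. Writing $\vw=\tfrac1n\rmQ\rmX\vy$ with $\rmX=\vmu_\beta\vy^\top+\rmZ$, a leave‑one‑column‑out (Sherman–Morrison) expansion of $\rmQ$ — in which each data column is independent of the reduced resolvent — shows that, in bilinear forms $\vw^\top\vv$ against deterministic $\vv$, $\vw\approx\bar\rmQ\vmu_\beta/(1+\delta_Q)=\vmu_\beta/\lambda_Q$, the Sherman–Morrison denominator $1+\delta_Q$ cancelling the $(1+\delta_Q)$ in $\bar\rmQ\vmu_\beta=(1+\delta_Q)\vmu_\beta/\lambda_Q$; thus $\vw^\top\vmu_\beta\asto\|\vmu_\beta\|^2/\lambda_Q$. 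Since $\tilde\rmX$ is independent of $\rmX$, the same argument applied to $\rmR$ gives $\tilde\vw\approx\vmu/\lambda_R$ against deterministic vectors; taking $\vv=\rmQ\vmu_\beta$, replacing $\rmQ$ by $\bar\rmQ$, and using $\langle\vmu,\vmu_\beta\rangle=\beta\|\vmu\|^2$ yields $\tilde\vw^\top\rmQ\vmu_\beta\asto\beta(1+\delta_Q)\|\vmu\|^2/(\lambda_R\lambda_Q)$. Summing the two contributions gives $m_\alpha$.

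\textbf{The second moment.} Expanding $\|\vw_\alpha\|^2=\|\vw\|^2+2\alpha\gamma\,\vw^\top\rmQ\tilde\vw+\alpha^2\gamma^2\,\tilde\vw^\top\rmQ^2\tilde\vw$ exhibits the quadratic‑in‑$\alpha$ structure; since $m_\alpha$ is affine in $\alpha$, $\nu_\alpha=\lim(\|\vw_\alpha\|^2+m_\alpha^2)$ is the quadratic $T_1+\alpha T_2+\alpha^2T_3$. Each coefficient needs second‑order deterministic equivalents — for $\tfrac1n\Tr\rmQ^2$, $\vmu_\beta^\top\rmQ^2\vmu_\beta$, $\vmu^\top\rmQ^2\vmu$, $\tilde\vw^\top\rmA\tilde\vw$ for fixed $\rmA$, and the mixed forms combining $\rmQ$ and $\rmR$ — which I obtain by differentiating the fixed‑point identity for $\bar\rmQ(\gamma)$ in $\gamma$ (using $\rmQ^2=-\partial_\gamma\rmQ$) together with the standard deterministic equivalent of $\rmQ\rmA\rmQ$; the self‑consistent correction in that formula is exactly what produces the factors $h=1-\eta/(1+\gamma(1+\delta_Q))^2$ and $\tilde h$. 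For $\|\vw\|^2=\tfrac1{n^2}\vy^\top\rmX^\top\rmQ^2\rmX\vy$ I additionally use $\rmX\rmX^\top=n(\rmQ^{-1}-\gamma\rmI_p)$ to rewrite $\rmQ^2\rmX\rmX^\top$ as $n\rmQ-n\gamma\rmQ^2$, reducing everything to traces and bilinear forms already controlled; the terms $\vw^\top\rmQ\tilde\vw$ and $\tilde\vw^\top\rmQ^2\tilde\vw$ use independence of $\tilde\vw$ and $\rmQ$ together with the alignment $\tilde\vw\sim\vmu/\lambda_R$, now kept in the sharper form that retains the noise component of $\tilde\vw$ (which survives in quadratic forms). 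Collecting terms and simplifying with the definitions of $\lambda_Q,\lambda_R,\delta_Q,\delta_R$ gives the closed forms for $T_1,T_2,T_3$.

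\textbf{Main obstacle.} The hard part is the second‑moment computation: propagating the coupled leave‑one‑out expansions through the quadratic‑in‑resolvent and mixed source/target quantities while tracking every self‑consistent correction, and then collapsing the resulting expressions into the compact forms $T_1,T_2,T_3$. By contrast, the conditioning step is essentially free from the Gaussianity of the test point, and the mean computation is a lighter rehearsal of the machinery needed for the variance.
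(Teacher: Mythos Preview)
Your approach is correct and essentially the same as the paper's: both reduce the claim to the first two moments of $\vw_\alpha^\top\vx$ via leave-one-out/Sherman--Morrison, the $\rmQ\rmA\rmQ$ deterministic equivalent that produces the $h$-correction, and a quadratic-form computation for $\tilde\vw^\top\rmA\tilde\vw$ that retains the noise trace term. Your explicit conditioning argument for Gaussianity and your use of the compact form $\vw_\alpha=\vw+\alpha\gamma\rmQ\tilde\vw$ are mild presentational improvements over the paper, which computes $\E[\vw_\alpha^\top\vx]$ and $\E[(\vw_\alpha^\top\vx)^2]$ directly from the three-term expansion $\vw_\alpha=\vw+\alpha\tilde\vw-\tfrac{\alpha}{n}\rmQ\rmX\rmX^\top\tilde\vw$ and regroups by powers of $\alpha$ only at the end.
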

\end{redBox}
In simple terms, Theorem \ref{thm:main-ridge} states that the decision function of the classifier in \eqref{eq:w-fine-tuned} is asymptotically equivalent to the thresholding of two monovariate Gaussian random variables with respective means $m_\alpha$ and $-m_\alpha$ and standard deviation $\nu_\alpha - m_\alpha^2$, where the statistics $m_\alpha$ and $\nu_\alpha$ are expressed in terms of the scalar quantities defined above (see Figure \ref{fig:distribution-decision-ridge} in the Appendix). Having characterized the distribution of the decision function of $\vw_\alpha$, we can now estimate its generalization performance, such as its test accuracy.
\begin{figure}[t]
    \centering
    \includegraphics[width = \textwidth]{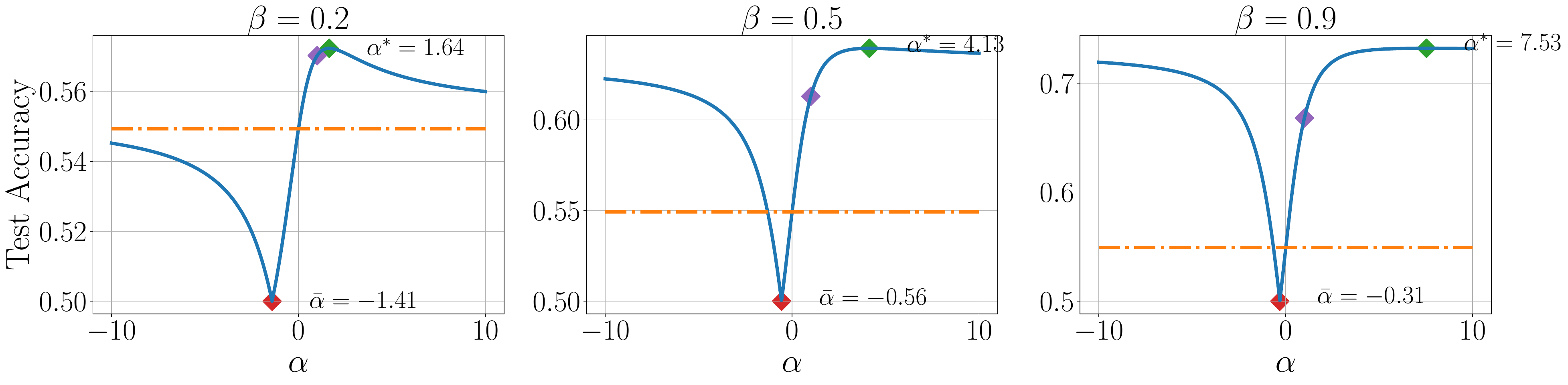}
    \includegraphics[width = .75\textwidth]{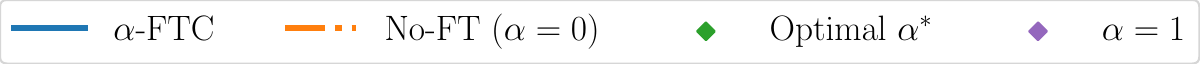}
    \caption{Theoretical Test Accuracy variation with $\alpha$ for $N = 5000$, $n = 40$, $p = 1000$, and the theoretical model is modified to take $\beta $ in $(0,1)$: $\vmu_\beta = \beta \vmu + \sqrt{1 - \beta^2} \vmu^\perp$, where $\Vert \vmu \Vert = \Vert \vmu^{\perp} \Vert = 0.8  $. Finally the regularization parameters are: $\tilde \gamma = 2$ and $\gamma = 10^{-1}$. }
    \label{fig:accuracy-alpha}
\end{figure}
\begin{proposition}[Asymptotic test accuracy of $\vw_\alpha$]
\label{prop:test-perf}
The asymptotic test accuracy of $\vw_\alpha$ defined in \eqref{eq:w-fine-tuned}, under Assumption \ref{assum:growth-rates}, and as the number of test samples $n_{\text{test}} \to \infty$, is given by:
\begin{align*}
    \gA_{\text{test}} \asto 1 - \varphi\left( (\nu_\alpha - m_\alpha^2)^{-\frac12} m_\alpha \right), \quad \text{where: } \varphi(x) = \frac{1}{ \sqrt{2\pi} } \int_x^{+\infty} e^{-\frac{t^2}{2}} \mathrm{dt}.
\end{align*}
\end{proposition}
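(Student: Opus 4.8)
The plan is to read the asymptotic test accuracy directly off the Gaussian characterization of the decision function given in Theorem~\ref{thm:main-ridge}, after a law-of-large-numbers reduction over the test set. First, fix the training data (hence $\vw_\alpha$) and recall that a test point $\vx \in \gC_a$ has the form $\vx = (-1)^a \vmu_\beta + \vz$ with $\vz \sim \gN(\vzero, \rmI_p)$ independent of $\vw_\alpha$. Thus, conditionally on $\vw_\alpha$, the scalar $\vw_\alpha^\top \vx$ is \emph{exactly} Gaussian, namely $\gN\big((-1)^a \vw_\alpha^\top \vmu_\beta,\ \Vert \vw_\alpha \Vert^2\big)$. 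A test sample from $\gC_a$ is correctly classified iff $(-1)^a \vw_\alpha^\top \vx > 0$, which has conditional probability $\Phi\big(\vw_\alpha^\top \vmu_\beta / \Vert \vw_\alpha \Vert\big)$ — the sign $(-1)^a$ of the mean cancels, so this value does not depend on $a$ and hence neither on the class priors. Writing $\hat{\gA}_{\text{test}}$ for the fraction of correctly classified points among $n_{\text{test}}$ i.i.d.\ test samples and applying the strong law of large numbers conditionally on the training data, $\hat{\gA}_{\text{test}} \asto \Phi\big(\vw_\alpha^\top \vmu_\beta / \Vert \vw_\alpha \Vert\big)$ as $n_{\text{test}} \to \infty$.

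It remains to take $p,n,N \to \infty$ under Assumption~\ref{assum:growth-rates}. Comparing the conditional law $\gN\big((-1)^a \vw_\alpha^\top \vmu_\beta, \Vert \vw_\alpha \Vert^2\big)$ with the limiting law $\gN\big((-1)^a m_\alpha, \nu_\alpha - m_\alpha^2\big)$ of Theorem~\ref{thm:main-ridge} identifies the almost sure limits $\vw_\alpha^\top \vmu_\beta \asto m_\alpha$ and $\Vert \vw_\alpha \Vert^2 \asto \nu_\alpha - m_\alpha^2$; concretely these two scalar (bilinear) functionals of the resolvents $\rmQ,\rmR$ defined in \eqref{eq:resolvents} are exactly of the type controlled by Lemma~\ref{lemma:det-equiv}, and substituting the deterministic equivalents $\bar\rmQ,\bar\rmR$ returns $m_\alpha$ and $\nu_\alpha - m_\alpha^2$. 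Since $\nu_\alpha - m_\alpha^2 > 0$ (it is the variance of the non-degenerate Gaussian limit in Theorem~\ref{thm:main-ridge}), the map $(u,v) \mapsto \Phi(u/\sqrt{v})$ is continuous at $(m_\alpha, \nu_\alpha - m_\alpha^2)$, so by the continuous mapping theorem
\[
\Phi\!\left( \frac{\vw_\alpha^\top \vmu_\beta}{\Vert \vw_\alpha \Vert} \right) \asto \Phi\!\left( \frac{m_\alpha}{\sqrt{\nu_\alpha - m_\alpha^2}} \right) = 1 - \varphi\!\left( (\nu_\alpha - m_\alpha^2)^{-\frac12} m_\alpha \right),
\]
where the last equality uses $\Phi = 1 - \varphi$ for the Gaussian tail function $\varphi$ of the statement. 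Chaining this with the test-set limit of the previous paragraph (for instance along a joint regime in which $n_{\text{test}} \to \infty$ together with $p,n,N$, or by a routine diagonal extraction) yields $\gA_{\text{test}} \asto 1 - \varphi\big((\nu_\alpha - m_\alpha^2)^{-1/2} m_\alpha\big)$, as claimed.

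I expect the only genuinely delicate point to be the second paragraph: upgrading the in-distribution conclusion of Theorem~\ref{thm:main-ridge} to almost sure convergence of the two scalar functionals $\vw_\alpha^\top \vmu_\beta$ and $\Vert \vw_\alpha \Vert^2$, and checking the strict positivity $\nu_\alpha - m_\alpha^2 > 0$ needed to legitimize the continuous-mapping step. Both facts are essentially already contained in the proof of Theorem~\ref{thm:main-ridge} and in the deterministic-equivalent machinery of Lemma~\ref{lemma:det-equiv}; everything else — the conditional Gaussianity of $\vw_\alpha^\top \vx$ for frozen training data, the strong law over the test sample, and the elementary Gaussian tail computation — is routine.
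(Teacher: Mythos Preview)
Your proof is correct and follows the same route as the paper, namely reading the accuracy off the Gaussian limit of Theorem~\ref{thm:main-ridge}; in fact the paper treats Proposition~\ref{prop:test-perf} as an immediate corollary and supplies no separate argument, so your conditional-Gaussianity plus SLLN reduction and the continuous-mapping step are already more detailed than what the paper provides.
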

Therefore, thanks to Proposition \ref{prop:test-perf}, we now have the exact formulas of the theoretical test accuracy of our classifier $\vw_\alpha$, which can be used to characterize the expression of the optimal/worst parameters of the model (for instance, the $\alpha$) to use for the fine-tuning process. In particular, we will derive the theoretical expressions of the extremum of $\alpha$ that lead to either the best or the worst test accuracy on the target task (proof in Appendix \ref{sec:rmt_analysis_1}).
\begin{redBox}
    \begin{theorem}[Optimal $\alpha$]
    \label{thm:optimal-alpha-ridge}
        Maximizing the term $\left( (\nu_\alpha - m_\alpha^2)^{-\frac12} m_\alpha \right)$ in terms of $\alpha$ leads to maximizing the test accuracy $\gA_{\text{test}}$, and gives a unique maximizer $\alpha^\star$ given by:
        \begin{equation*}
            \alpha^\star = \frac{\lambda_R T_2 \Vert \vmu_\beta \Vert^2 - 2 \beta \gamma T_1 (1 + \delta_Q) \Vert \vmu \Vert^2}{\beta \gamma  T_2 (1 + \delta_Q) \Vert \vmu \Vert^2 - 2 \lambda_R T_3 \Vert \vmu_\beta \Vert^2}
        \end{equation*}
        Plus, solving $ (\nu_\alpha - m_\alpha^2)^{-\frac12} m_\alpha  = 0$ leads to the unique minimizer $\bar \alpha $ of $\gA_{\text{test}}$, which is given by:
        \begin{equation*}
            \bar \alpha = - \frac{\lambda_R \Vert \vmu_\beta \Vert^2}{\beta \gamma (1 + \delta_Q) \Vert \vmu \Vert^2}
        \end{equation*}
    \end{theorem}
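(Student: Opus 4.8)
The plan is to reduce the accuracy optimization to a one–dimensional calculus problem in $\alpha$. By Proposition~\ref{prop:test-perf} the test accuracy is $\gA_{\text{test}} = 1 - \varphi\!\big(g(\alpha)\big)$ with $g(\alpha) := (\nu_\alpha - m_\alpha^2)^{-1/2} m_\alpha$, and since $\varphi$ is strictly decreasing, maximizing $\gA_{\text{test}}$ is equivalent to maximizing $g(\alpha)$, while $\gA_{\text{test}}$ attains the chance value $1/2$ exactly when $g(\alpha)=0$. The structural fact that makes everything explicit is that, by Theorem~\ref{thm:main-ridge}, $m_\alpha$ is \emph{affine} in $\alpha$: $m_\alpha = A + B\alpha$ with $A = \Vert\vmu_\beta\Vert^2/\lambda_Q$ and $B = \beta\gamma(1+\delta_Q)\Vert\vmu\Vert^2/(\lambda_Q\lambda_R)$, whereas $\nu_\alpha = T_1 + \alpha T_2 + \alpha^2 T_3$ is quadratic; hence both $m_\alpha^2$ and $\nu_\alpha - m_\alpha^2$ are quadratic polynomials in $\alpha$.

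For the minimizer I would simply solve $m_\alpha = 0$: the unique root is $\alpha = -A/B$, and substituting the expressions for $A$ and $B$ gives $\bar\alpha = -\lambda_R\Vert\vmu_\beta\Vert^2 / \big(\beta\gamma(1+\delta_Q)\Vert\vmu\Vert^2\big)$. At this value $g=0$, so $\gA_{\text{test}} = 1-\varphi(0) = 1/2$, which is the worst attainable accuracy; that it is attained at a single point is immediate since $m_\alpha$ is non-constant affine.

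For the maximizer I would work with $f(\alpha) := m_\alpha^2/\nu_\alpha$, noting the identity $f = g^2/(1+g^2)$, so that over the region $\{m_\alpha\neq 0\}$ maximizing $f$ is the same as maximizing $g^2$. Differentiating, $f'(\alpha)=0$ factors as $(A+B\alpha)\big[\,2B\,\nu_\alpha - (A+B\alpha)\,\nu_\alpha'\,\big]=0$; the first factor reproduces $\bar\alpha$, and in the bracketed factor the $\alpha^2$ coefficients cancel, leaving the \emph{linear} equation $(BT_2 - 2AT_3)\,\alpha + 2BT_1 - AT_2 = 0$. Clearing the common factor $1/(\lambda_Q\lambda_R)$ from $A$ and $B$, its unique root is exactly the claimed $\alpha^\star = \big(\lambda_R T_2 \Vert\vmu_\beta\Vert^2 - 2\beta\gamma T_1 (1+\delta_Q)\Vert\vmu\Vert^2\big)/\big(\beta\gamma T_2 (1+\delta_Q)\Vert\vmu\Vert^2 - 2\lambda_R T_3\Vert\vmu_\beta\Vert^2\big)$. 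To upgrade this stationary point to the global maximizer I would use that $\nu_\alpha>0$ and $\nu_\alpha - m_\alpha^2>0$ for all $\alpha$ (a second moment and a variance), so $f$ is smooth, non-negative, bounded by $1$, with $f(\pm\infty)=B^2/T_3 \in [0,1)$ and $f(\bar\alpha)=0$; since $f$ has exactly the two critical points $\bar\alpha$ and $\alpha^\star$, a short monotonicity argument forces $\bar\alpha$ to be the global minimum and $\alpha^\star$ the global maximum of $f$, hence of $g^2$. Combined with $g(0)>0$ and the unimodality of $g$ (one stationary point), this makes $g(\alpha^\star)>0$ the global maximum of $g$ itself, giving the maximal accuracy at $\alpha^\star$.

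The main obstacle I anticipate is the last step's bookkeeping: establishing rigorously, under Assumption~\ref{assum:growth-rates}, that $\nu_\alpha - m_\alpha^2 > 0$ on all of $\sR$ (so that $g$ and $f$ are globally well defined and the "accuracy monotone in $g$" reduction is valid everywhere), and that the stationarity equation is non-degenerate, i.e. $BT_2 - 2AT_3 \neq 0$, so $\alpha^\star$ is well defined and isolated. The algebra identifying the cleared-denominator linear equation with the stated closed form — and the analogous check for $\bar\alpha$ — is routine but must be carried out carefully, tracking the factor $1/(\lambda_Q\lambda_R)$ that cancels in the ratio.
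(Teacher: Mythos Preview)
Your proposal is correct and follows essentially the same approach as the paper: reduce to maximizing the argument of $\varphi$ via Proposition~\ref{prop:test-perf}, exploit that $m_\alpha$ is affine and $\nu_\alpha$ quadratic, and solve the resulting first-order condition, which collapses to a linear equation in $\alpha$. The paper's own proof is in fact terser than yours---it simply asserts that differentiating $(\nu_\alpha - m_\alpha^2)^{-1/2}m_\alpha$ and setting the derivative to zero yields the stated $\alpha^\star$, and that $m_\alpha=0$ yields $\bar\alpha$---so your explicit factorization via $f=m_\alpha^2/\nu_\alpha$, the verification that the $\alpha^2$ coefficients cancel, and the global-maximizer discussion all go beyond what the paper provides.
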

\end{redBox}
\begin{wrapfigure}{r}{0.5\textwidth}
    \vspace{0pt}  
    \includegraphics[width=.4\textwidth]{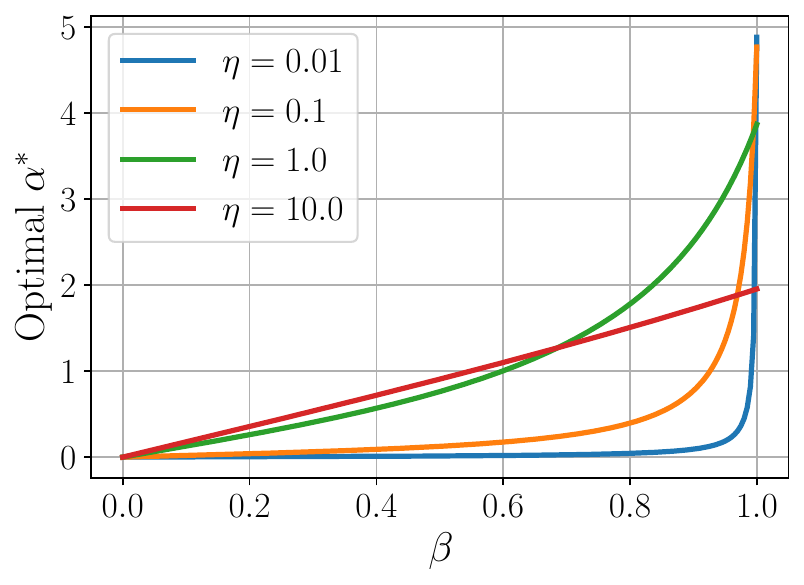}
    \captionsetup{aboveskip=0pt, belowskip=-30pt}  
    \caption{Variations of the optimal parameter $\alpha^\star$ with respect to the alignment between the source $\vmu$ and target $\vmu_\beta$ dataset means. These latter were chosen of norm $1$, $N = 2000$, $n = 200$ and $\gamma = \tilde \gamma = 1$.}
    \label{fig:optimal-alpha-ridge}
    \vspace{15pt}  
\end{wrapfigure}
Figure \ref{fig:accuracy-alpha} shows the evolution of the theoretical test accuracy with the parameter $\alpha$ for different source datasets (i.e, different alignments $\beta$). In particular, we observe the existence of an optimal parameter $\alpha^\star$ that is generally different from $1$ (standard approach), and as can be previously anticipated, its impact on the test accuracy is more visible in the case of a higher alignment factor $\beta$, which means in this case that we put higher emphasis on the base model to generalize better in the new task (see Remark~\ref{remark:interpretability-alpha}).

Focusing on the optimal $\alpha^\star$, Figure \ref{fig:optimal-alpha-ridge} clearly depicts the non-trivial contribution of the dimension $p$ to the choice of $\alpha$. It is clear that $\alpha^\star$ is non-decreasing with the alignment $\beta$ between the source and target task, but its effect gets amplified with the dimension $p$ of the problem. Notably, the influence of $\alpha$ is more pronounced in low-resource settings ($p \gg n$) compared to cases where sufficient fine-tuning data is available. This further underscores the crucial role of $\alpha$ in effectively leveraging the pre-trained model and source data. Additionally, as $\beta \to 0$, we also remark that $\alpha^\star \to 0$, which means that fine-tuning has no added value when the source and target tasks are \textbf{unrelated} and orthogonal.

\section{Experiments}
In this section, we present some experiments on real datasets to validate our approach. We start by fine-tuning linear models on the Amazon Review dataset \citep{blitzer2007biographies} to verify our theoretical findings. After that, we formalize our new class of reparametrization methods and verify its efficiency by experiments on fine-tuning LLMs on the GLUE tasks \citep{wang2018glue}.
\subsection{Within our theoretical model: Linear Binary Classification}
Here we present our experiments on the Amazon Review dataset \citep{blitzer2007biographies} to validate our theory. This dataset includes several binary classification tasks corresponding to positive versus negative reviews of \texttt{books}, \texttt{dvd}, \texttt{electronics}, and \texttt{kitchen}. We apply the standard scaler from \texttt{scikit-learn} \citep{pedregosa2011scikit} and estimate $\Vert \vmu \Vert$, $\Vert \vmu^\perp \Vert$ and $\beta$ with the normalized data. Figure \ref{fig:accuracy-alpha-amazon} depicts the variation in test accuracy of three transfer tasks with respect to the parameter $\alpha$ and gives a comparison between the three main schemes: $\alpha = 0$ (i.e., learning directly on the target data without using previous source knowledge), $\alpha = 1$ (classical approach) and with the optimal $\alpha^\star$ obtained using the theoretical formula in Theorem \ref{thm:optimal-alpha-ridge}. Depending on the tasks, we see a clear improvement in the test accuracy for $\alpha^\star$ compared to the other schemes, which further highlights the impact of this scaling parameter. Table \ref{table:accuracy-amazon} summarizes the results obtained for all the possible transfer tasks between the sub-datasets. 
\begin{figure}[t!]
    \centering
    \includegraphics[width=\textwidth]{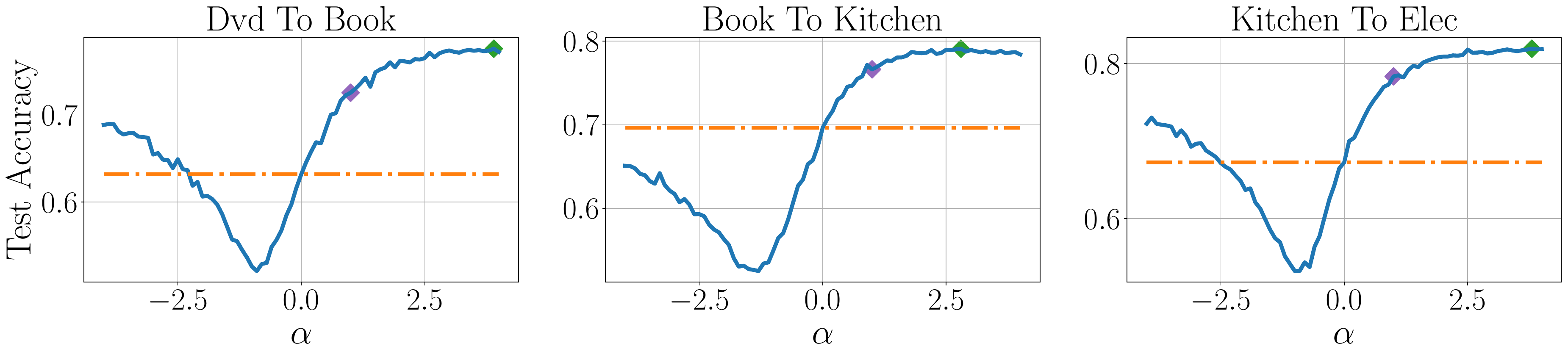}
    \includegraphics[width =.8\textwidth]{figures/legend_4.pdf}
    \caption{Test accuracy variation with $\alpha$ for different transfer learning schemes from the Amazon Review dataset \citep{blitzer2007biographies}. The considered parameters here are: $N = 2000$, $n = 40$, $p = 400$, $\gamma = 10^{-1}$ and $\tilde \gamma = 2$. }
    
    \label{fig:accuracy-alpha-amazon}
\end{figure}
\begin{table}[h!]
  \caption{Test accuracy (in \%) comparison over Amazon review datasets \citep{blitzer2007biographies} for $N = 2000$, $ n = 40$, $ p = 400$, and optimal regularization parameters $\gamma = \tilde \gamma = 1$. As theoretically anticipated, our new fine-tuning approach yields better classification accuracy than training directly on the target dataset ($\alpha = 0$) or using $\alpha = 1$. The results were computed for $3$ random seeds.}
  \label{table:accuracy-amazon}
  \vspace{.1cm}
  \centering
  \resizebox{\textwidth}{!}{%
  \begin{tabular}{l|l|c|c|c}
    \toprule
     \textbf{Source Dataset}     & \textbf{Target Dataset}    & $\mathbf{\bm \alpha = 0}$     & $\mathbf{\bm \alpha = 1}$ & \textbf{Optimal $\alpha^\star$} \\
    \midrule
    \texttt{Books} & \texttt{Dvd} ($\beta = 0.8$)  & 64.12 $\pm$ 0.03 & 75.67 $\pm$ 0.24  & $\mathcolorbox{green!50}{77.35 \pm 0.14}$ ($\alpha^\star = 2.47$) \\
     & \texttt{Electronics} ($\beta$ = 0.71) & 68.61 $\pm$ 0.74 & 76.65 $\pm$ 0.02  & $\mathcolorbox{green!50}{77.12 \pm 0.17}$ ($\alpha^\star = 1.68$) \\
     &\texttt{Kitchen} ($\beta$ = 0.79) & 69.24 $\pm$ 0.95 & 78.19 $\pm$ 0.05  & $\mathcolorbox{green!50}{78.96 \pm 0.26}$ ($\alpha^\star =1.9$) \\
     \midrule
     
    \texttt{Dvd} & \texttt{Books} ($\beta$ = 0.78)  & 63.43 $\pm$ 0.67 & 75.22 $\pm$ 0.24  & $\mathcolorbox{green!50}{77.59 \pm 0.07}$ ($\alpha^\star= 2.47$) \\
     & \texttt{Electronics} ($\beta$ = 0.71) & 68.61 $\pm$ 0.74 & 76.72 $\pm$ 0.17  &  $\mathcolorbox{green!50}{$76.88$}$ $\pm 0.42$ ($\alpha^\star = 1.69$) \\
     &\texttt{Kitchen} ($\beta$ = 0.78) & 69.24 $\pm$ 0.95 & 78.11 $\pm$ 0.23  & $\mathcolorbox{green!50}{78.72} \pm 0.54$ ($\alpha^\star = 1.88$) \\
     \midrule

    \texttt{Electronics} & \texttt{Books} ($\beta$ = 0.51)  & 63.43 $\pm$ 0.67 & 72.2 $\pm$ 0.1  & $\mathcolorbox{green!50}{73.29 \pm 0.13}$ ($\alpha^\star = 1.67$) \\
     & \texttt{Dvd} ($\beta$ = 0.52) & 64.12 $\pm$ 0.03 & 72.41 $\pm$ 0.16  & $\mathcolorbox{green!50}{73.48 \pm 0.17}$  ($\alpha^\star = 1.69$) \\
     &\texttt{Kitchen} ($\beta$ = 0.9) & 69.24 $\pm$ 0.95 &  81.58 $\pm$ 0.15 & $\mathcolorbox{green!50}{83.02 \pm 0.1}$ ($\alpha^\star = 2.29$) \\
     \midrule

     \texttt{Kitchen} & \texttt{Books} ($\beta$ = 0.52)  & 63.43 $\pm$ 0.67 & 72.86 $\pm$ 0.1  & $\mathcolorbox{green!50}{74.27 \pm 0.14}$ ($\alpha^\star = 1.84$) \\
     & \texttt{Dvd} ($\beta$ = 0.53) & 64.12 $\pm$ 0.03 & 73.15 $\pm$ 0.08 & $\mathcolorbox{green!50}{74.15 \pm 0.09}$ ($\alpha^\star = 1.82$) \\
     &\texttt{Electronics} ($\beta$ = 0.83) & 68.61 $\pm$ 0.74 & 80.14 $\pm$ 0.02  & $\mathcolorbox{green!50}{81.89 \pm 0.18}$ ($\alpha^\star = 2.31$) \\

    \bottomrule
  \end{tabular}}
\end{table}

We note that our approach yields optimal results for all transfer tasks, which clearly validates our theoretical results and underscores the efficiency of our method in terms of its generalization capabilities. This can also be observed in Figure \ref{fig:accuracy-alpha-amazon}, which shows that the optimal test accuracy is obtained for a parameter $\alpha$ that is not necessarily equal, nor even close, to $1$. 
\subsection{Beyond our theoretical model: Supervised Fine-tuning for LLMs} 
To go beyond linear models, we now fine-tune \texttt{roberta-base} language model \citep{DBLP:journals/corr/abs-1907-11692}) on downstream classification taken from GLUE benchmarks \citep{wang2018glue}. To adapt our theoretical insights from the linear model to complex, multi-layered architectures like LLMs, we generalize the scalar scaling parameter $\alpha$ to a vector $\bm \alpha$. This extension provides finer-grained control, allowing the model to rescale the contribution of the frozen base weights on a per-output-neuron basis. This added flexibility is crucial for capturing the intricate functional specialization within different dimensions of a neural network's hidden states. Consequently, the update rule for a weight matrix $\rmW^\star$ is modified from a simple scalar product to a row-wise scaling operation, as detailed below:
\begin{equation}
    \boxed{
    \rmW_{\text{new}} = \bm \alpha \odot \rmW^\star + \rmW
    }
\end{equation}
where $\odot$ is the element-wise product between vectors, $\rmW^\star \in \sR^{d_{out} \times d_{in}}$ is the original layer weights (frozen during training), $\bm \alpha \in \sR^{d_{out}}$ (each element in the output dimension is then multiplied by a scalar), and $\rmW \in \sR^{d_{out} \times d_{in}}$ is the trainable weight matrix. Additionally, $\rmW$ can be approximated with a low-rank matrix: $\rmW = \rmA \rmB$, where: $\rmA \in \sR^{d_{out} \times r }$ and $\rmB \in \sR^{r \times d_{in}}$, a method that we call {\color{blue}$\alpha$-LoRA}. We then report in Table \ref{table:accuracy-glue} the test performance obtained using standard LoRA and our $\alpha$-LoRA method evaluated on six GLUE tasks: MNLI, QNLI, MRPC, RTE, SST-2, and QQP.

\begin{table}[h!]
  \caption{Test accuracy comparison over GLUE classification tasks \citep{wang2018glue} using \texttt{roberta-base} model. As theoretically anticipated, our new fine-tuning approach yields better test classification accuracy than the standard LoRA method ($\bm \alpha = 1$). The details about these experiments are presented in Appendix \ref{sec:appendix-llms-exp-details}.}
  \vspace{.1cm}
  \centering
  \label{table:accuracy-glue}
  \resizebox{\textwidth}{!}{%
  \begin{tabular}{l|c|c|c|c|c|c}
    \toprule
    \textbf{Method} & \textbf{MNLI} & \textbf{QNLI} & \textbf{MRPC} & \textbf{RTE} & \textbf{SST-2} & \textbf{QQP}\\
    \midrule
     \textbf{LoRA} &  85.77 $\pm$ 0.16  & 91.95 $\pm$ 0.03   & 88.40 $\pm$ 0.31 &  74.01 $\pm$ 1.64 &  94.00 $\pm$ 0.11 &  88.80 $\pm$ 0.02 \\
    $\alpha$-\textbf{LoRA} &    \textbf{86.12 $\pm$ 0.06}     &    \textbf{92.20 $\pm$ 0.13} &  \textbf{89.46 $\pm$ 0.53}    &    \textbf{77.62 $\pm$ 0.59}   &   \textbf{94.38 $\pm$ 0.01} &   \textbf{88.86 $\pm$ 0.03}\\
    \bottomrule
  \end{tabular}
  }
\end{table}
\begin{figure}[t!]
    \centering
    \includegraphics[width=\textwidth]{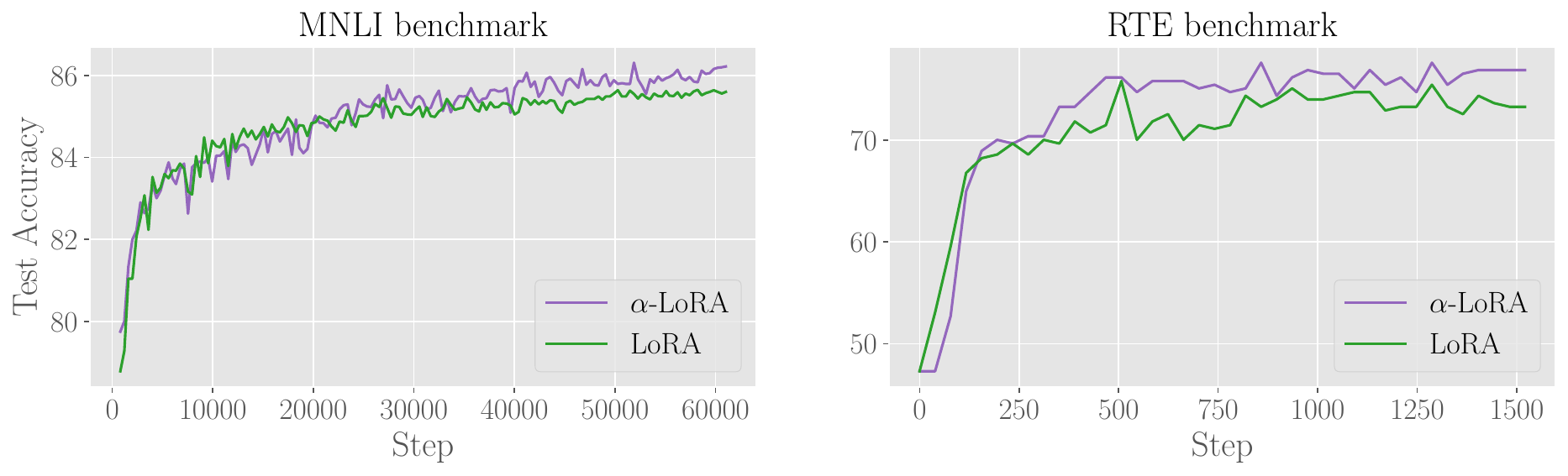}
    \caption{Test accuracy evolution of \texttt{roberta-base} finetuned on MNLI and RTE for a single fixed seed (seed $5$ for MNLI and seed $123$ for RTE).}
    \label{fig:accuracy-with-steps-roberta}
\end{figure}
We note that from Table \ref{table:accuracy-glue} and Figure \ref{fig:accuracy-with-steps-roberta}, our method leads to higher generalization performance compared to standard LoRA across all GLUE benchmarks, which further validates our theoretical findings of the previous section.
\paragraph{Finding the parameters $\bm{\alpha}$.} We designed a practical heuristic algorithm to automatically update $\bm \alpha$ during training. In fact, we consider each vector $\bm \alpha$ as a trainable parameter and update these vectors once each $T$ step (which can be tuned) with either \texttt{Adam} or \texttt{AdamW} by sampling a new batch, different from the one used to train the reparametrization weights $\rmW$, and then taking a gradient step over this new batch. The design choices of our algorithm can be justified by the following:
\begin{itemize}
    \item Because the vectors $\bm\alpha$ applied to each module lie in the whole Euclidean space $\sR^d$, it is not possible to find such a parameter through a simple grid search, as this will give a very costly and impractical algorithm.
    \item Additionally, finding theoretical formulas for each vector $\bm \alpha$ is very hard, if not impossible. Therefore, it is crucial to have an algorithm that updates the vectors $\bm \alpha$ automatically.
    \item Finally, because we want to optimize the \textbf{generalization} performance of our fine-tuning method, training $\bm \alpha$ in the same way as the reparametrization weights $\rmW$ can easily lead to overfitting of the model, which justifies sampling of new batches to update $\bm \alpha$ and the update rate $T$. Our specific choices are detailed for reproducibility in Appendix \ref{sec:appendix-llms-exp-details}.
\end{itemize} 
Figure \ref{fig:accuracy-alpha-glue} shows that our algorithm leads to optimal scaling vectors $\bm \alpha^\star$ in their neighborhood, which proves the effectiveness of our algorithm and the fine-tuning method in general. The pseudo-code \ref{algo} of our algorithm is detailed in the Appendix \ref{sec:appendix-llms-exp-details}.

\begin{figure}[h!]
    \centering
    \includegraphics[width=\textwidth]{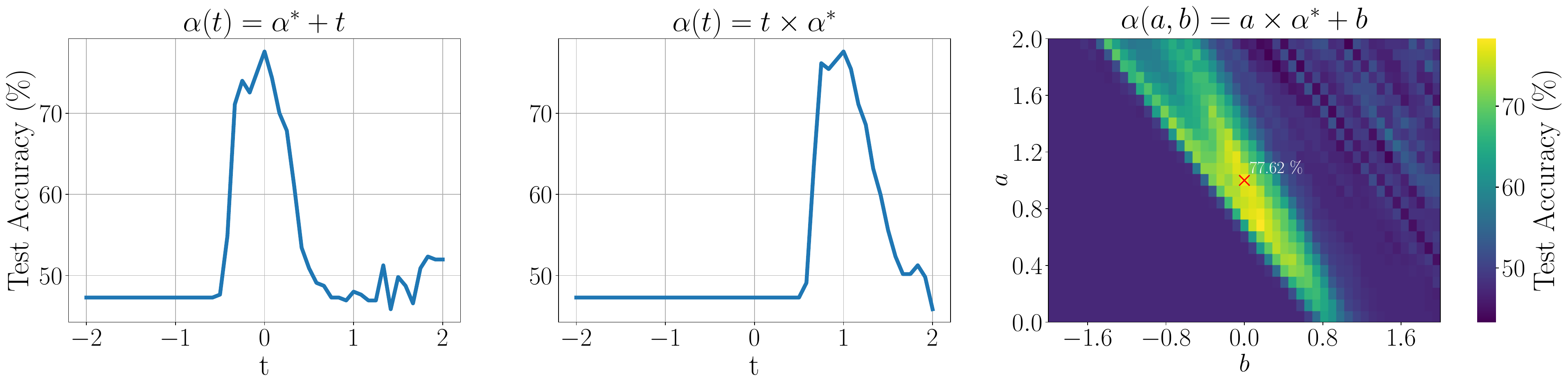}
    \caption{Test accuracy of \texttt{roberta-base} finetuned on RTE for different values of $\bm{\alpha}$ in the neighborhood of the obtained $\bm{\alpha^\star}$. The values of the parameters $\bm \alpha^\star$ in this experiment range between $0.85$ and $1.14$.} 
    \label{fig:accuracy-alpha-glue}
\end{figure}

\paragraph{Overhead induced by the additional parameters $\bm\alpha$.} We note that the number of additional trainable parameters $\bm \alpha$ induced by our algorithm \ref{algo} is negligible compared to the standard approach (fixed $\bm \alpha = 1$), for example in the case of our experiments with \texttt{roberta-base} model, the increase in the number of trainable parameters is only of $0.02 \%$. Additionally, investigating the resulting values of these learned $\bm \alpha$ vectors as reported in Figures \ref{fig:alpha-mnli}, \ref{fig:alpha-qnli} and \ref{fig:alpha-rte} in the Appendix, we notice that we get similar values for query and value matrices, thus we can use a shared parameter for both weight matrices (or for the whole attention module more generally), reducing the overhead even further. 

\section{Conclusion and limitations}
In this work, we introduced a new class of reparametrization-based fine-tuning methods that leverage an additional scaling parameter to improve the generalization of transfer learning. Using tools from Random Matrix Theory, we proved the existence and impact of an optimal scaling factor in high-dimensional binary classification. We show that this factor is typically different from the standard choice. Our theoretical analysis was further supported by experiments on real-world tasks, where our proposed approach consistently outperformed standard LoRA on multiple benchmarks.

Although promising, our framework also has limitations. Theoretical guarantees are derived under specific assumptions on data distributions and model structure, which may not fully capture the complexity of modern deep architectures. 
We believe future work could extend these insights to broader model families, design more efficient algorithms for parameter selection, and further explore the trade-off between generalization and efficiency in transfer learning. Furthermore, an exciting avenue for investigation is the integration of our $\alpha$-scaling technique with other advanced adapter methods. Since our approach is complementary to improvements in the adapter's architecture, such as DoRA or other LoRA variants, combining them could lead to synergistic gains in fine-tuning performance.

\bibliography{iclr2026_conference}
\bibliographystyle{iclr2026_conference}

\appendix
\newpage
\appendix
\section*{Appendix}
This appendix is organized as follows: Section \ref{sec:useful_lemmas} lists some useful lemmas that will be at the core of our analysis. In Section \ref{sec:rmt_analysis_1}, we provide a proof of Theorem \ref{thm:main-ridge} using Random Matrix Theory. Section \ref{sec:rmt_analysis_random} extends the theory to the case of an arbitrary source classifier (not necessarily Ridge) which will be useful also for another extension of the theory to multi-source fine-tuning \ref{sec:appendix-multi-source}. Finally, \ref{sec:appendix-llms-exp-details} lists the details about our experiments on LLM fine-tuning. 

\tableofcontents
\setcounter{tocdepth}{2}
\addtocontents{toc}{\protect\setcounter{tocdepth}{2}}

Throughout the whole Appendix, we will try to analyze the performance of the fine-tuned classifier defined in \eqref{eq:w-fine-tuned}:
\begin{equation*}
    \vw_\alpha = \vw + \alpha \tilde \vw - \frac{\alpha}{n} \rmQ(\gamma) \rmX \rmX^\top \tilde \vw
\end{equation*}
\paragraph{Notations.} Here are two notations that we will use along the whole analysis:
\begin{equation}
    \label{eq:lambdas}
    \lambda_Q = \Vert \vmu_{\beta} \Vert^2 + 1 + \gamma (1 + \delta_Q), \quad \lambda_R = \Vert \vmu \Vert^2 + 1 + \tilde \gamma (1 + \delta_R)
\end{equation}

\section{Useful results}\label{sec:useful_lemmas}

\subsection{General lemmas}
Here we will list useful lemmas used in our analysis.
\begin{lemma}[Resolvent identity]\label{lemma:resolvent-identity}
    For invertible matrices $\rmA$ and $\rmB $, we have:
    \begin{equation*}
        \rmA^{-1} - \rmB^{-1} = \rmA^{-1}(\rmB - \rmA)\rmB^{-1}.
    \end{equation*}
\end{lemma}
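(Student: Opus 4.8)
The plan is to verify the identity by a direct algebraic expansion of the right-hand side, using only the defining property of the matrix inverse; no asymptotic or probabilistic machinery is needed here. First I would distribute the product $\rmA^{-1}(\rmB - \rmA)\rmB^{-1}$ over the subtraction, writing it as $\rmA^{-1}\rmB\rmB^{-1} - \rmA^{-1}\rmA\rmB^{-1}$. Then I would collapse $\rmB\rmB^{-1} = \rmI_p$ in the first summand and $\rmA^{-1}\rmA = \rmI_p$ in the second, which immediately leaves $\rmA^{-1} - \rmB^{-1}$, matching the left-hand side.

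The only hypothesis actually used is that $\rmA$ and $\rmB$ are invertible, so that $\rmA^{-1}$ and $\rmB^{-1}$ exist and satisfy $\rmA^{-1}\rmA = \rmA\rmA^{-1} = \rmI_p$ (and similarly for $\rmB$); associativity of matrix multiplication is what permits the regroupings above. An equivalent route, if one prefers not to start from the answer, is to left-multiply the claimed equality by $\rmA$ and right-multiply by $\rmB$, reducing it to $\rmB - \rmA = \rmB - \rmA$, which is trivially true; invertibility of $\rmA$ and $\rmB$ makes this manipulation reversible, so the reduced identity is equivalent to the original.

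There is no substantive obstacle: the statement is a one-line computation, and it is recorded here only because it is invoked repeatedly in the appendix — for instance when comparing resolvents $\rmQ(z)$ at different arguments, or when relating $\rmQ$ to its deterministic equivalent $\bar\rmQ$ — so isolating it as a lemma streamlines the later random matrix arguments.
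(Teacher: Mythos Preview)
Your proof is correct: the direct expansion $\rmA^{-1}(\rmB-\rmA)\rmB^{-1}=\rmA^{-1}\rmB\rmB^{-1}-\rmA^{-1}\rmA\rmB^{-1}=\rmA^{-1}-\rmB^{-1}$ is exactly the standard one-line verification, and your alternative of clearing denominators by multiplying on the left by $\rmA$ and on the right by $\rmB$ is equally valid. The paper in fact states this lemma without proof (it is recorded only as a tool for later use), so there is nothing further to compare against.
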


\begin{lemma}[Sherman-Morisson]
\label{lemma:Sherman-Morisson}
For $\rmA \in \sR^{p \times p}$ invertible and $\vu, \vv \in \sR^p$, $\rmA  + \vu \vv^\top $ is invertible if and only if:  $1 + \vv^\top \rmA^{-1} \vu \neq 0$, and:
\begin{equation*}
    (\rmA + \vu \vv^\top)^{-1} = \rmA^{-1} - \frac{\rmA^{-1} \vu \vv^\top \rmA^{-1}}{1 + \vv^\top \rmA^{-1} \vu}.
\end{equation*}
Besides,
\begin{equation*}
    (\rmA + \vu \vv^\top)^{-1} \vu = \frac{\rmA^{-1} \vu}{1 + \vv^\top \rmA^{-1} \vu}.
\end{equation*}
\end{lemma}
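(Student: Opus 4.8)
The plan is to prove all three claims by elementary verification, relying only on associativity of matrix multiplication and on the fact that $\vv^\top \rmA^{-1} \vu$ is a scalar; no determinant lemma or spectral machinery is needed. Throughout I write $c = 1 + \vv^\top \rmA^{-1} \vu \in \sR$ for the quantity in the denominator, and I use that $\rmA^{-1}$ exists by hypothesis.

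First I would establish the reverse implication of the equivalence simultaneously with the inversion formula. Assuming $c \neq 0$, set $\rmB = \rmA^{-1} - \frac{1}{c}\rmA^{-1}\vu\vv^\top\rmA^{-1}$ and compute $(\rmA + \vu\vv^\top)\rmB$ directly. Expanding gives four terms; after simplifying $\rmA\rmA^{-1} = \rmI_p$ and pulling out the scalar $\vv^\top\rmA^{-1}\vu = c-1$ from the cross term, the whole expression becomes $\rmI_p + \vu\vv^\top\rmA^{-1}\bigl(-\tfrac1c + 1 - \tfrac{c-1}{c}\bigr)$, and the parenthesized coefficient telescopes to $0$. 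Hence $(\rmA + \vu\vv^\top)\rmB = \rmI_p$, and the symmetric computation gives $\rmB(\rmA + \vu\vv^\top) = \rmI_p$, so $\rmA + \vu\vv^\top$ is invertible with inverse exactly the claimed $\rmB$.

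Next I would prove the forward implication by contraposition: if $c = 0$ then $\rmA + \vu\vv^\top$ is singular. The key observation is that $c = 0$ forces $\vu \neq \vzero$ (else $c = 1$), whence $\rmA^{-1}\vu \neq \vzero$ since $\rmA^{-1}$ is invertible. Then
\[
(\rmA + \vu\vv^\top)\,\rmA^{-1}\vu = \vu + \vu\,(\vv^\top\rmA^{-1}\vu) = (1 + \vv^\top\rmA^{-1}\vu)\,\vu = c\,\vu = \vzero,
\]
exhibiting a nonzero kernel vector, so the matrix is not invertible. Combined with the previous paragraph, this yields the stated equivalence $\rmA + \vu\vv^\top \text{ invertible} \iff c \neq 0$.

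Finally, the displayed identity for $(\rmA + \vu\vv^\top)^{-1}\vu$ is a one-line corollary: applying the inversion formula to $\vu$ and again substituting $\vv^\top\rmA^{-1}\vu = c-1$ collapses the right-hand side to $\rmA^{-1}\vu\,(1 - \tfrac{c-1}{c}) = \tfrac1c\,\rmA^{-1}\vu$, which is the claim. There is no genuine obstacle here, as this is a classical rank-one update identity and the argument is a routine verification; the only point requiring the slightest care is the contrapositive direction, where one must notice that the hypothesis $c = 0$ itself guarantees that the candidate kernel vector $\rmA^{-1}\vu$ is nonzero.
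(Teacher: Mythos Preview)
Your proof is correct and complete. The paper does not actually supply a proof of this lemma; it is listed among the ``useful results'' as a classical identity and invoked without justification, so there is nothing to compare against beyond noting that your direct verification is the standard argument and covers all three claims (including the biconditional, which many presentations omit).
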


\subsection{Deterministic equivalents}
Recall the expression of the resolvents defined in equation \eqref{eq:resolvents}:
\begin{equation*}
    \rmQ = \left( \frac{1}{n} \rmX \rmX^\top + \gamma \rmI_p \right)^{-1}, \rmR = \left(\frac{1}{N} \tilde \rmX \tilde \rmX^\top + \tilde \gamma \rmI_p \right)^{-1}
\end{equation*}
We define the matrices $\rmQ_{-i}$ and $\rmR_{-i}$ as the resolvents obtained by removing the contribution of the $i^{th}$ sample, i.e:
\begin{equation*}
    \rmQ_{-i} = \left( \frac{1}{n} \sum_{k \neq i} \vx_k \vx_k^\top + \gamma \rmI_p  \right)^{-1}, \quad \rmR_{-i} = \left( \frac1N \sum_{k \neq i} \tilde \vx_k \tilde \vx_k^\top + \tilde \gamma \rmI_p \right)^{-1}
\end{equation*}
then we have that:
\begin{equation*}
    \rmQ = \left( \rmQ_{-i}^{-1} + \frac{1}{n} \vx_i \vx_i^\top \right)^{-1}, \quad \rmR = \left( \rmR_{-i}^{-1} + \frac1N \tilde \vx_i \tilde \vx_i^\top \right)^{-1}
\end{equation*}
Thus by Sherman-Morisson's lemma:
\begin{equation*}
    \rmQ = \rmQ_{-i} - \frac{1}{n} \frac{\rmQ_{-i} \vx_i \vx_i^\top \rmQ_{-i}}{1 + \delta_Q}, \quad \rmR = \rmR_{-i} - \frac{\frac1N \rmR_{-i} \tilde \vx_i \tilde \vx_i^\top \rmR_{-i}}{1 + \delta_R}
\end{equation*}
where:
\begin{equation*}
    \delta_Q = \frac{1}{n} \Tr \bar \rmQ = \frac{\eta - \gamma - 1 + \sqrt{(\eta - \gamma - 1)^2 + 4 \eta \gamma }}{2 \gamma}, \quad \delta_R = \frac{1}{N} \Tr \bar \rmR = \frac{\tilde \eta - \tilde \gamma - 1 + \sqrt{(\tilde \eta - \tilde \gamma - 1 )^2 + 4 \tilde \eta \gamma}}{2 \tilde \gamma}
\end{equation*}
Thus, we get that:
\begin{equation}
    \rmQ \vx_i = \frac{\rmQ_{-i} \vx_i}{1 + \delta_Q}, \quad \rmR \tilde \vx_i = \frac{\rmR_{-i} \tilde \vx_i}{1 + \delta_R}
\end{equation}
Using the above identities, we can easily prove the deterministic equivalents of $\rmQ$ and $\rmR$ stated in Lemma \ref{lemma:det-equiv}, which we will do in the following.

\begin{lemma}[Deterministic equivalent of $\rmQ$ and $\rmR$]
    Under the high-dimensional regime and the assumptions \ref{assum:growth-rates}, a deterministic equivalent for $\rmQ\equiv\rmQ(\gamma)$ and for $\rmR\equiv\rmR(\gamma)$, denoted $\bar \rmQ$ and $\bar \rmR$ respectively, as defined in \eqref{eq:resolvents} are given by:
    \begin{align*}
        \bar \rmQ(\gamma) = \left ( \frac{\vmu_\beta \vmu_\beta^\top + \rmI_p}{ 1 + \delta_Q} + \gamma \rmI_p \right)^{-1}, \quad \bar \rmR(\gamma) = \left ( \frac{\vmu \vmu^\top + \rmI_p}{ 1 + \delta_R} + \gamma \rmI_p \right)^{-1}.
    \end{align*}
    Where: 
    \begin{equation*}
        \delta_Q = \frac{1}{n} \Tr \bar \rmQ = \frac{\eta - \gamma - 1 + \sqrt{( \eta - \gamma - 1)^2 + 4 \eta \gamma}}{2 \gamma}, \quad \delta_R = \frac{1}{N} \Tr \bar \rmR = \frac{\tilde \eta - \tilde \gamma - 1 + \sqrt{(\tilde \eta - \tilde \gamma - 1)^2 + 4 \tilde \eta \gamma}}{2 \tilde \gamma}.
    \end{equation*}
\end{lemma}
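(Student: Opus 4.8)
The plan is to run the standard leave-one-out (resolvent) argument of Random Matrix Theory, reusing the Sherman--Morrison identities already established above. By the exact symmetry between the pairs $(\rmX,\rmQ,n,\gamma,\eta,\vmu_\beta)$ and $(\tilde\rmX,\rmR,N,\tilde\gamma,\tilde\eta,\vmu)$, it is enough to treat $\rmQ$. Since it suffices to control a bounded linear form on the two generating families $\rmM\mapsto\va^\top\rmM\vb$ (deterministic $\va,\vb$ with $\Vert\va\Vert,\Vert\vb\Vert=\gO(1)$) and $\rmM\mapsto\frac1n\Tr\rmM$, I would aim to prove $\va^\top(\rmQ-\bar\rmQ)\vb\asto 0$ and $\frac1n\Tr(\rmQ-\bar\rmQ)\asto 0$. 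The deterministic bounds $\Vert\rmQ\Vert,\Vert\rmQ_{-i}\Vert,\Vert\bar\rmQ\Vert\le 1/\gamma$ together with $\Vert\vmu_\beta\Vert=\gO(1)$ are what keep all error terms at order $o(1)$.

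The heart of the proof is the resolvent identity $\bar\rmQ-\rmQ=\bar\rmQ(\rmQ^{-1}-\bar\rmQ^{-1})\rmQ$. Substituting $\rmQ^{-1}=\frac1n\sum_i\vx_i\vx_i^\top+\gamma\rmI_p$ and $\bar\rmQ^{-1}=\frac{\vmu_\beta\vmu_\beta^\top+\rmI_p}{1+\delta_Q}+\gamma\rmI_p$, the $\gamma\rmI_p$ cancel and what remains is
\[
\bar\rmQ-\rmQ=\frac1n\sum_{i=1}^n\bar\rmQ\,\vx_i\vx_i^\top\,\rmQ\;-\;\bar\rmQ\,\frac{\vmu_\beta\vmu_\beta^\top+\rmI_p}{1+\delta_Q}\,\rmQ .
\]
In each summand I would use Sherman--Morrison, i.e.\ $\vx_i^\top\rmQ=\vx_i^\top\rmQ_{-i}/(1+\frac1n\vx_i^\top\rmQ_{-i}\vx_i)$, to turn the resolvent sitting next to $\vx_i$ into $\rmQ_{-i}$, which is independent of $\vx_i$. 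Writing $\vx_i=\vmu_\beta y_i+\vz_i$ with $\vz_i\sim\gN(\vzero,\rmI_p)$, the trace (quadratic-form) lemma gives $\frac1n\vx_i^\top\rmQ_{-i}\vx_i=\frac1n\Tr\rmQ_{-i}+o(1)=\delta_Q+o(1)$ almost surely (the $\vmu_\beta$-dependent pieces are $\gO(1/n)$ because $\Vert\vmu_\beta\Vert=\gO(1)$), and $\E[\vx_i\vx_i^\top]=\vmu_\beta\vmu_\beta^\top+\rmI_p$. Evaluating the first sum in the form $\va^\top(\cdot)\vb$, writing each summand as $(\bar\rmQ^\top\va)^\top\vx_i\,\vx_i^\top(\rmQ_{-i}\vb)$ and noting that both $\bar\rmQ^\top\va$ and $\rmQ_{-i}\vb$ are independent of $\vx_i$, a concentration estimate (Hanson--Wright for the bilinear form in $\vx_i$, plus the rank-one bound $\vx_i^\top(\rmQ_{-i}-\rmQ)\vb=\gO(1/n)$ to swap $\rmQ_{-i}$ back to $\rmQ$) yields
\[
\frac1n\sum_{i=1}^n\va^\top\bar\rmQ\,\vx_i\vx_i^\top\,\rmQ\,\vb=\frac{\va^\top\bar\rmQ\,(\vmu_\beta\vmu_\beta^\top+\rmI_p)\,\rmQ\,\vb}{1+\delta_Q}+o(1),
\]
which is precisely the second term, so $\va^\top(\rmQ-\bar\rmQ)\vb\to 0$; the same bookkeeping gives $\frac1n\Tr(\rmQ-\bar\rmQ)\to 0$. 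A Borel--Cantelli step upgrades these to almost sure convergence, so $\bar\rmQ$ is a deterministic equivalent of $\rmQ$.

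It then remains to identify $\delta_Q$ from its self-consistent definition $\delta_Q=\frac1n\Tr\bar\rmQ$. Because $\vmu_\beta\vmu_\beta^\top+\rmI_p$ has the eigenvalue $1+\Vert\vmu_\beta\Vert^2$ with multiplicity one and the eigenvalue $1$ with multiplicity $p-1$,
\[
\frac1n\Tr\bar\rmQ=\frac1n\left(\frac{1}{\frac{1+\Vert\vmu_\beta\Vert^2}{1+\delta_Q}+\gamma}+\frac{p-1}{\frac{1}{1+\delta_Q}+\gamma}\right)\longrightarrow \eta\,\frac{1+\delta_Q}{1+\gamma(1+\delta_Q)},
\]
the rank-one contribution being $\gO(1/n)$. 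The fixed-point equation $\delta_Q\bigl(1+\gamma(1+\delta_Q)\bigr)=\eta(1+\delta_Q)$ is the quadratic $\gamma\delta_Q^2+(1+\gamma-\eta)\delta_Q-\eta=0$, whose two roots have product $-\eta/\gamma\le 0$, so there is a unique nonnegative solution $\delta_Q=\bigl(\eta-\gamma-1+\sqrt{(\eta-\gamma-1)^2+4\eta\gamma}\bigr)/(2\gamma)$, and a matching computation shows this is also the almost sure limit of $\frac1n\Tr\rmQ$. The expression for $\bar\rmR$, and the formula for $\delta_R$, follow line by line with $(n,\gamma,\eta,\vmu_\beta)$ replaced by $(N,\tilde\gamma,\tilde\eta,\vmu)$.

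I expect the main obstacle to be the rigorous error control in the second step: one has to bound, uniformly over the $n$ summands and with high probability, the fluctuation of $\frac1n\vx_i^\top\rmQ_{-i}\vx_i$ around $\delta_Q$, the rank-one error incurred by replacing $\rmQ_{-i}$ with $\rmQ$, and the deviation of $(\bar\rmQ^\top\va)^\top\vx_i\vx_i^\top(\rmQ_{-i}\vb)$ from its conditional mean, and then assemble them into an almost sure statement via Borel--Cantelli. The algebraic parts (the cancellation in the display and the quadratic for $\delta_Q$) are routine; it is this concentration bookkeeping, resting on the uniform bound $1/\gamma$ for the resolvents and on $\Vert\vmu_\beta\Vert=\gO(1)$, that does the real work.
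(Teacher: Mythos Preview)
Your approach is essentially the same as the paper's: both use the resolvent identity $\rmQ-\bar\rmQ=\rmQ(\bar\rmQ^{-1}-\rmQ^{-1})\bar\rmQ$ (you take the symmetric version with $\bar\rmQ$ on the left) together with the Sherman--Morrison leave-one-out $\rmQ\vx_i=\rmQ_{-i}\vx_i/(1+\delta_Q)$ to decouple $\vx_i$ from the resolvent and replace $\vx_i\vx_i^\top$ by its covariance $\vmu_\beta\vmu_\beta^\top+\rmI_p$. Your write-up is in fact more complete than the paper's sketch, since you explicitly solve the fixed-point equation to obtain the closed form for $\delta_Q$ and you address the almost-sure convergence required by the definition via concentration plus Borel--Cantelli, whereas the paper's proof works only at the level of $\E[\rmQ]-\bar\rmQ$ and leaves the $\delta_Q$ formula underived.
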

\begin{proof}
    \label{proof:det-equiv}
    We will prove the deterministic equivalent of $\rmQ$, and the proof of $\bar \rmR$ can be derived similarly. In general, we want to find a deterministic equivalent $\bar \rmQ$ of the same form of $\rmQ$, i.e we consider $\bar \rmQ(\gamma) = \left( \rmS + \gamma \rmI_p \right)^{-1}$ and we want to find a deterministic matrix $\rmS \in \sR^{p \times p}$ such that for any linear form $u$:
    \begin{equation*}
        u(\rmQ) \asto u(\bar \rmQ),
    \end{equation*}
    Or more simply:
    \begin{equation*}
        u(\E[\rmQ] - \bar \rmQ) \to 0.
    \end{equation*}
    We have that:
    \begin{align*}
        \E[\rmQ] - \bar \rmQ &= \E[\rmQ - \bar \rmQ] \\
        &= \E[\rmQ \left(\rmS - \frac1n \rmX \rmX^\top \right) \bar \rmQ] \\
        &= \E \left[\left( \rmQ \rmS - \frac1n \sum_{i = 1}^n \rmQ \vx_i \vx_i^\top  \right) \bar \rmQ \right]
    \end{align*}
    And since: $\rmQ \vx_i = \frac{\rmQ_{-i} \vx_i}{1 + \delta_Q}$ and that we want $\E[\rmQ] = \bar \rmQ$ in linear forms, we get that:
    \begin{align*}
        &\E \left[\left( \rmQ \rmS - \frac1n \sum_{i = 1}^n \rmQ \vx_i \vx_i^\top  \right) \bar \rmQ \right] = \bar \rmQ \rmS \bar \rmQ - \frac1n \sum_{i = 1}^n \frac{1}{1 + \delta_Q} \E[\rmQ_{-i} \vx_i \vx_i^\top ] \bar \rmQ \\
        &= \bar \rmQ \rmS \bar \rmQ - \frac1n \sum_{i = 1}^n \frac{1}{1 + \delta_Q} \bar \rmQ (\vmu_\beta \vmu_\beta^\top + \rmI_p) \bar \rmQ & (\text{By independence of $\vx_i$ and $\rmQ_{-i}$}) \\
        &= \bar \rmQ \left( \rmS - \frac{\vmu_\beta \vmu_\beta^\top + \rmI_p}{1 + \delta_Q} \right) \bar \rmQ
    \end{align*}
    Finally, it suffices to take: $\rmS = \frac{\vmu_\beta \vmu_\beta^\top + \rmI_p}{1 + \delta_Q}$ to get the desired result. 
\end{proof}

\begin{lemma}[Trace identities]
    \label{lemma:trace-identities}
    Let $\bar \rmQ, \bar \rmR \in \sR^{p \times p} $ be the deterministic matrices defined in lemma \ref{lemma:det-equiv}. Then:
    \begin{align*}
        \frac{1}{n} \frac{\Tr((\Sigma_\beta \bar \rmQ)^2)}{(1 + \delta_Q)^2} = \frac{\eta}{(1 + \gamma(1 + \delta_Q))^2}, \quad \frac{1}{N} \frac{\Tr((\Sigma \bar \rmR)^2)}{(1 + \delta_R)^2} = \frac{\tilde \eta}{(1 + \tilde \gamma(1 + \delta_R))^2}.
    \end{align*}
    And:
    \begin{equation*}
    \frac{1}{N} \Tr(\bar \rmR^2 \bar \rmQ^2) = \tilde \eta \left( \frac{(1 + \delta_R)(1 + \delta_Q)}{(1 + \tilde \gamma(1 + \delta_R))(1 + \gamma (1 + \delta_Q))} \right)^2
\end{equation*}
\end{lemma}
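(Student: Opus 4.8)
The plan is to exploit the explicit low‑rank structure of $\bar\rmQ$ and $\bar\rmR$ and reduce every quantity to an elementary spectral computation. First I would rewrite the deterministic equivalents of Lemma~\ref{lemma:det-equiv} in the factored form $\bar\rmQ = (1+\delta_Q)\bigl(\Sigma_\beta + c\,\rmI_p\bigr)^{-1}$ and $\bar\rmR = (1+\delta_R)\bigl(\Sigma + \tilde c\,\rmI_p\bigr)^{-1}$, with $\Sigma_\beta := \rmI_p + \vmu_\beta\vmu_\beta^\top$, $\Sigma := \rmI_p + \vmu\vmu^\top$, $c := \gamma(1+\delta_Q)$ and $\tilde c := \tilde\gamma(1+\delta_R)$. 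The single structural fact driving the argument is that $\Sigma_\beta$ and $\Sigma$ are rank‑one perturbations of the identity, so every matrix appearing in the statement equals a scalar multiple of $\rmI_p$ plus a matrix of bounded operator norm and bounded rank; by Assumption~\ref{assum:growth-rates} (boundedness of $\Vert\vmu\Vert$ and $\Vert\vmu_\beta\Vert$) that bounded‑rank remainder has trace $\gO(1)$, hence vanishes after dividing by $n$ or $N$. As elsewhere in the excerpt, the stated equalities are read in the limit $p,n,N\to\infty$ (up to $o(1)$), consistently with the use of $\delta_Q = \tfrac1n\Tr\bar\rmQ$.

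For the first identity, $\Sigma_\beta$ commutes with $(\Sigma_\beta + c\,\rmI_p)^{-1}$, so $(\Sigma_\beta\bar\rmQ)^2 = (1+\delta_Q)^2\,\Sigma_\beta^2(\Sigma_\beta + c\,\rmI_p)^{-2}$. Diagonalising $\Sigma_\beta$ (eigenvalue $1+\Vert\vmu_\beta\Vert^2$ on $\mathrm{span}(\vmu_\beta)$, eigenvalue $1$ with multiplicity $p-1$ on its orthogonal complement) gives
\[
\Tr\bigl((\Sigma_\beta\bar\rmQ)^2\bigr) = (1+\delta_Q)^2\left[\left(\frac{1+\Vert\vmu_\beta\Vert^2}{1+\Vert\vmu_\beta\Vert^2+c}\right)^2 + \frac{p-1}{(1+c)^2}\right].
\]
Dividing by $n(1+\delta_Q)^2$, the first term is $\gO(1/n)\to 0$ and $(p-1)/n\to\eta$, which yields $\eta/(1+c)^2 = \eta/(1+\gamma(1+\delta_Q))^2$. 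The second identity is the verbatim same computation with $(\bar\rmR,\Sigma,\tilde c,\delta_R)$ replacing $(\bar\rmQ,\Sigma_\beta,c,\delta_Q)$ and with $p/N\to\tilde\eta$.

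For the third identity, $\bar\rmR^2\bar\rmQ^2 = (1+\delta_R)^2(1+\delta_Q)^2\,(\Sigma+\tilde c\,\rmI_p)^{-2}(\Sigma_\beta+c\,\rmI_p)^{-2}$; here $\bar\rmR$ and $\bar\rmQ$ no longer commute because $\vmu_\beta$ is in general not proportional to $\vmu$, but this is immaterial. By Sherman--Morrison (Lemma~\ref{lemma:Sherman-Morisson}), $(\Sigma+\tilde c\,\rmI_p)^{-1} = \tfrac1{1+\tilde c}\rmI_p - \tfrac1{(1+\tilde c)^2}\cdot\tfrac{\vmu\vmu^\top}{1 + \Vert\vmu\Vert^2/(1+\tilde c)}$, and analogously for $(\Sigma_\beta+c\,\rmI_p)^{-1}$. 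Multiplying the two squared inverses, the leading term is $\tfrac{1}{(1+\tilde c)^2(1+c)^2}\rmI_p$ and everything else is a matrix of bounded rank with operator norm $\gO(1)$; taking the trace gives $\Tr(\bar\rmR^2\bar\rmQ^2) = p\,\tfrac{(1+\delta_R)^2(1+\delta_Q)^2}{(1+\tilde c)^2(1+c)^2} + \gO(1)$, and dividing by $N$ with $p/N\to\tilde\eta$ produces the claimed expression.

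None of these steps presents a genuine obstacle; the only thing requiring care is the bookkeeping that separates the order‑$p$ contribution (the ``$\rmI_p$'' part) from the $\gO(1)$ contributions (the rank‑one and bounded‑rank corrections built from $\vmu,\vmu_\beta$), and checking that the latter indeed become negligible after normalising by $n$ or $N$ — which is precisely where the hypotheses $\Vert\vmu\Vert,\Vert\vmu_\beta\Vert=\gO(1)$ of Assumption~\ref{assum:growth-rates} enter.
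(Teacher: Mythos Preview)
Your proposal is correct and follows the same route the paper implicitly relies on. The paper does not spell out a separate proof of Lemma~\ref{lemma:trace-identities}; the necessary ingredients are exactly the Sherman--Morrison expansions of $\bar\rmQ$ and $\bar\rmR$ derived in the proof of Lemma~\ref{lemma:relevant-identities}, namely $\bar\rmQ = \tfrac{1+\delta_Q}{1+\gamma(1+\delta_Q)}\bigl(\rmI_p - \tfrac{\vmu_\beta\vmu_\beta^\top}{\lambda_Q}\bigr)$ and the analogous formula for $\bar\rmR$, which is precisely the ``scalar multiple of $\rmI_p$ plus bounded-rank remainder'' decomposition you exploit. Your observation that the identities are to be read up to $o(1)$ in the asymptotic regime of Assumption~\ref{assum:growth-rates} is also the correct reading and consistent with the paper's conventions.
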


\begin{lemma}[Relevant Identities]
\label{lemma:relevant-identities}
    Let $\bar \rmQ, \bar \rmR \in \sR^{p \times p} $ be the deterministic matrices defined in lemma \ref{lemma:det-equiv}. Then we have the following identities:
    \begin{align*}
        \vmu_\beta^\top \bar \rmQ \vmu_\beta = \frac{(1 + \delta_Q) \Vert \vmu_\beta \Vert^2}{\Vert \vmu_\beta \Vert^2 + 1 + \gamma (1 + \delta_Q) },\quad \vmu_\beta^\top \bar \rmQ^2 \vmu_\beta = \left (\frac{(1 + \delta_Q) \Vert \vmu_\beta \Vert}{\Vert \vmu_\beta \Vert^2 + 1 + \gamma (1 + \delta_Q)} \right)^2,
    \end{align*}
    \begin{align*}
        \vmu^\top \bar \rmR \vmu = \frac{(1 + \delta_R) \Vert \vmu \Vert^2}{\Vert \vmu \Vert^2 + 1 + \tilde \gamma (1 + \delta_R) },\quad \vmu^\top \bar \rmR^2 \vmu = \left (\frac{(1 + \delta_R) \Vert \vmu \Vert}{\Vert \vmu \Vert^2 + 1 + \tilde \gamma (1 + \delta_R)} \right)^2,
    \end{align*}
    \begin{align*}
        \vmu^\top \bar \rmR \bar \rmQ \vmu_\beta = \frac{(1 + \delta_R)(1 + \delta_Q) \beta \Vert \vmu \Vert^2}{(\Vert \vmu \Vert^2 + 1 + \tilde \gamma (1 + \delta_R))(\Vert \vmu_\beta \Vert^2 + 1 + \gamma(1 + \delta_Q))},
    \end{align*}
    \begin{align*}
        \vmu^\top \bar \rmR \bar \rmQ^2 \vmu_\beta = \frac{(1 + \delta_R)}{(\Vert \vmu \Vert^2 + 1 + \tilde \gamma (1 + \delta_R))} \left( \frac{(1 + \delta_Q)}{(\Vert \vmu_\beta \Vert^2 + 1 +\gamma (1 + \delta_Q))} \right)^2 \beta \Vert \vmu \Vert^2 ,
    \end{align*}
    And finally:
    \begin{align*}
        &\vmu^\top \bar \rmR \bar \rmQ^2 \bar \rmR \vmu \\
        &= \left( \frac{(1 + \delta_R)(1 + \delta_Q) \Vert \vmu \Vert}{(1 + \gamma (1 + \delta_Q)) (\Vert \vmu \Vert^2 + 1 + \tilde \gamma (1 + \delta_R))} \right)^2 \left( 1 + \frac{\beta^3 \Vert \vmu \Vert^4}{(\Vert \vmu_\beta \Vert^2 + 1 + \gamma (1 + \delta_Q))^2} - \frac{2 \beta^2 \Vert \vmu \Vert^2}{\Vert \vmu_\beta \Vert^2 + 1 + \gamma (1 + \delta_Q)} \right).
    \end{align*}
    
\end{lemma}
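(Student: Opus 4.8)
The whole lemma rests on a single structural fact, which I would establish first: each of $\bar\rmQ^{-1}$ and $\bar\rmR^{-1}$ is a rank-one perturbation of a scaled identity. Writing $c_Q = \tfrac{1}{1+\delta_Q}+\gamma$ we have $\bar\rmQ^{-1} = c_Q\,\rmI_p + \tfrac{1}{1+\delta_Q}\vmu_\beta\vmu_\beta^\top$, so Sherman--Morrison (Lemma~\ref{lemma:Sherman-Morisson}) yields the closed forms
\begin{equation*}
\bar\rmQ = \frac{1+\delta_Q}{1+\gamma(1+\delta_Q)}\left(\rmI_p - \frac{\vmu_\beta\vmu_\beta^\top}{\lambda_Q}\right), \qquad \bar\rmR = \frac{1+\delta_R}{1+\tilde\gamma(1+\delta_R)}\left(\rmI_p - \frac{\vmu\vmu^\top}{\lambda_R}\right),
\end{equation*}
the second obtained symmetrically. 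In particular $\vmu_\beta$ is an eigenvector of $\bar\rmQ$ with eigenvalue $\tfrac{1+\delta_Q}{\lambda_Q}$, and $\vmu$ an eigenvector of $\bar\rmR$ with eigenvalue $\tfrac{1+\delta_R}{\lambda_R}$. Combined with the alignment identity $\langle\vmu,\vmu_\beta\rangle = \beta\Vert\vmu\Vert^2$ (from $\vmu_\beta = \beta\vmu + \vmu^{\perp}$ with $\vmu^{\perp}\perp\vmu$), this is all the structure the lemma needs.

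I would then dispatch the four ``pure'' quadratic forms immediately: since $\bar\rmQ^{k}\vmu_\beta = \bigl(\tfrac{1+\delta_Q}{\lambda_Q}\bigr)^{k}\vmu_\beta$ and $\bar\rmR^{k}\vmu = \bigl(\tfrac{1+\delta_R}{\lambda_R}\bigr)^{k}\vmu$, the expressions for $\vmu_\beta^\top\bar\rmQ\vmu_\beta$, $\vmu_\beta^\top\bar\rmQ^2\vmu_\beta$, $\vmu^\top\bar\rmR\vmu$ and $\vmu^\top\bar\rmR^2\vmu$ are read off directly (recalling $\lambda_Q = \Vert\vmu_\beta\Vert^2+1+\gamma(1+\delta_Q)$ and $\lambda_R = \Vert\vmu\Vert^2+1+\tilde\gamma(1+\delta_R)$). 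For the mixed terms $\vmu^\top\bar\rmR\bar\rmQ\vmu_\beta$ and $\vmu^\top\bar\rmR\bar\rmQ^2\vmu_\beta$, I would use the symmetry of $\bar\rmR$ to let it act on $\vmu$ on the left and $\bar\rmQ^{k}$ act on $\vmu_\beta$ on the right; each quantity collapses to $\bigl(\tfrac{1+\delta_R}{\lambda_R}\bigr)\bigl(\tfrac{1+\delta_Q}{\lambda_Q}\bigr)^{k}\langle\vmu,\vmu_\beta\rangle$, and substituting $\langle\vmu,\vmu_\beta\rangle=\beta\Vert\vmu\Vert^2$ gives the stated formulas.

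The one genuinely computational identity is the last, $\vmu^\top\bar\rmR\bar\rmQ^2\bar\rmR\vmu$. Pushing both copies of $\bar\rmR$ onto $\vmu$ reduces it to $\bigl(\tfrac{1+\delta_R}{\lambda_R}\bigr)^{2}\vmu^\top\bar\rmQ^2\vmu$, but now $\vmu$ is \emph{not} a $\bar\rmQ$-eigenvector, so one must square the rank-one form,
\begin{equation*}
\bar\rmQ^2 = \left(\frac{1+\delta_Q}{1+\gamma(1+\delta_Q)}\right)^{2}\left(\rmI_p - \frac{2}{\lambda_Q}\vmu_\beta\vmu_\beta^\top + \frac{\Vert\vmu_\beta\Vert^2}{\lambda_Q^2}\vmu_\beta\vmu_\beta^\top\right),
\end{equation*}
sandwich it between $\vmu$ using $\vmu^\top\vmu=\Vert\vmu\Vert^2$ and $(\vmu^\top\vmu_\beta)^2=\beta^2\Vert\vmu\Vert^4$, factor out $\Vert\vmu\Vert^2$ together with the prefactor $\bigl(\tfrac{(1+\delta_R)(1+\delta_Q)\Vert\vmu\Vert}{(1+\gamma(1+\delta_Q))\lambda_R}\bigr)^{2}$, and regroup the remainder into the stated bracket.

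The main obstacle is exactly this last step: tracking the cross term produced by squaring the rank-one update, collecting it correctly, and cross-checking the result against the $T_3$ term of Theorem~\ref{thm:main-ridge} for internal consistency. Everything preceding it is one or two lines once the Sherman--Morrison forms of $\bar\rmQ$ and $\bar\rmR$ are written down, so no further machinery beyond Lemma~\ref{lemma:Sherman-Morisson} and Lemma~\ref{lemma:det-equiv} is required.
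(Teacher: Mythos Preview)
Your approach is essentially identical to the paper's: both derive the closed forms $\bar\rmQ = \frac{1+\delta_Q}{1+\gamma(1+\delta_Q)}\bigl(\rmI_p - \frac{\vmu_\beta\vmu_\beta^\top}{\lambda_Q}\bigr)$ and $\bar\rmR = \frac{1+\delta_R}{1+\tilde\gamma(1+\delta_R)}\bigl(\rmI_p - \frac{\vmu\vmu^\top}{\lambda_R}\bigr)$ via Sherman--Morrison (Lemma~\ref{lemma:Sherman-Morisson}), note the eigenvector relations $\bar\rmQ\vmu_\beta = \frac{1+\delta_Q}{\lambda_Q}\vmu_\beta$ and $\bar\rmR\vmu = \frac{1+\delta_R}{\lambda_R}\vmu$, square the rank-one forms, and read off all the quadratic forms directly. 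Your plan to cross-check the final identity against $T_3$ is well-advised, since a careful computation there yields $\beta^2\Vert\vmu\Vert^2\Vert\vmu_\beta\Vert^2/\lambda_Q^2$ in the bracket (consistent with the $T_3$ formula in Theorem~\ref{thm:main-ridge}) rather than the $\beta^3\Vert\vmu\Vert^4/\lambda_Q^2$ printed in the lemma statement.
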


\begin{proof}
    The proof of all these identities relies on the following results:
\begin{align*}
    \bar \rmR &= \left ( \frac{\vmu \vmu^\top}{1 + \delta_R} + \left ( \tilde \gamma + \frac{1}{1 + \delta_R} \right) \rmI_p \right )^{-1} \\
    &= (1 + \delta_R) \left ( \vmu \vmu^\top + (1 + \tilde \gamma (1 + \delta_R) \rmI_p )\right)^{-1} \\
    &= \frac{1 + \delta_R}{1 + \tilde \gamma (1 + \delta_R)} \left ( \frac{\vmu \vmu^\top}{1 + \tilde\gamma (1 + \delta_R)} + \rmI_p \right)^{-1} \\
    &= \frac{1 + \delta_R}{1 + \tilde \gamma (1 + \delta_R)} \left ( \rmI_p - \frac{\vmu \vmu^\top}{\Vert \vmu \Vert^2 + 1 + \tilde \gamma (1 + \delta_R)} \right) & (\text{lemma \ref{lemma:Sherman-Morisson}})
\end{align*}
where the last equality is obtained using Sherman-Morisson's identity (lemma \ref{lemma:Sherman-Morisson}).
Hence,
\begin{align*}
    (\bar \rmR)^2 &= \frac{(1 + \delta_R)^2}{(1 + \tilde \gamma (1 + \delta_R))^2} \left( \rmI_p + \frac{(\vmu \vmu^\top)^2}{(\Vert \vmu \Vert^2 + 1 + \tilde \gamma (1 + \delta_R))^2} - \frac{2 \vmu \vmu^\top}{\Vert \vmu \Vert^2 + 1 + \tilde \gamma (1 + \delta_R)}\right).
\end{align*}
And the same for $\bar \rmQ$:
\begin{align*}
    \bar \rmQ = \frac{1 + \delta_Q}{1 + \gamma (1 + \delta_Q)} \left ( \rmI_p - \frac{\vmu_\beta \vmu_\beta^\top}{\Vert \vmu_\beta \Vert^2 + 1 + \gamma (1 + \delta_Q)} \right),
\end{align*}
\begin{align*}
    (\bar \rmQ)^2 &= \frac{(1 + \delta_Q)^2}{(1 + \gamma (1 + \delta_Q))^2} \left( \rmI_p + \frac{(\vmu_\beta \vmu_\beta^\top)^2}{(\Vert \vmu_\beta \Vert^2 + 1 + \gamma (1 + \delta_Q))^2} - \frac{2 \vmu_\beta \vmu_\beta^\top}{\Vert \vmu_\beta \Vert^2 + 1 + \gamma (1 + \delta_Q)}\right).
\end{align*}
And using the second identity in Sherman-Morisson's lemma \ref{lemma:Sherman-Morisson}:
\begin{align*}
    \bar \rmR \vmu = \frac{(1 + \delta_R)}{\Vert \vmu \Vert^2 + 1 + \tilde \gamma (1 + \delta_R)} \vmu, \quad \bar \rmQ \vmu_\beta = \frac{(1 + \delta_Q)}{\Vert \vmu_\beta \Vert^2 + 1 + \gamma (1 + \delta_Q)} \vmu_\beta
\end{align*}

\end{proof}

\begin{lemma}[Expectation some classifiers]
\label{lemma:E[w]}
    Let $\tilde \vw$ and $\vw$ be the classifiers defined earlier \eqref{eq:w-fine-tuned}. We have that:
    \begin{equation*}
        \E[\tilde \vw] = \frac{1}{1 + \delta_R} \bar \rmR \vmu, \quad \E[\vw] = \frac{1}{1 + \delta_Q} \bar \rmQ \vmu_\beta.
    \end{equation*}
\end{lemma}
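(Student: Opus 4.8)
The plan is to evaluate the expectations directly from the closed forms $\tilde\vw = \frac1N\rmR\tilde\rmX\tilde\vy$ and $\vw = \frac1n\rmQ\rmX\vy$ (equations \eqref{eq:w-source-ridge} and \eqref{eq:w-no-ft}) by combining the leave-one-out identities $\rmR\tilde\vx_i = \frac{\rmR_{-i}\tilde\vx_i}{1+\delta_R}$ and $\rmQ\vx_i = \frac{\rmQ_{-i}\vx_i}{1+\delta_Q}$ recalled above with the deterministic equivalents of Lemma~\ref{lemma:det-equiv}. The point is that $\rmR_{-i}$ is independent of the $i$-th sample, so once $\tilde\vx_i$ is pulled out of the resolvent, the expectation factorizes into a deterministic-equivalent factor times the mean of the (suitably rescaled) data vector.

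\textbf{Main computation.} First I would simplify the signal term: since $\tilde y_i^2 = 1$ and $\vmu_a = (-1)^a\vmu$ with $\tilde y_i = (-1)^a$, one has $\tilde\vx_i\tilde y_i = \vmu + \tilde y_i\tilde\vz_i$, and likewise in the target model $\vx_i y_i = \vmu_\beta + y_i\vz_i$; since the noise vectors are centered, $\E[\tilde\vx_i\tilde y_i] = \vmu$ and $\E[\vx_i y_i] = \vmu_\beta$. Next, write $\rmR\tilde\rmX\tilde\vy = \sum_{i=1}^N \tilde y_i\,\rmR\tilde\vx_i = \sum_{i=1}^N \frac{\tilde y_i\,\rmR_{-i}\tilde\vx_i}{1+\delta_R}$. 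Taking expectations and using independence of $\rmR_{-i}$ from $(\tilde\vx_i,\tilde y_i)$ gives $\E[\tilde y_i\rmR_{-i}\tilde\vx_i] = \E[\rmR_{-i}]\,\E[\tilde y_i\tilde\vx_i] = \E[\rmR_{-i}]\vmu$, and since $\rmR_{-i}$ (a rank-one perturbation of $\rmR$) shares the deterministic equivalent $\bar\rmR$, this equals $\bar\rmR\vmu$ in any bounded linear form. Summing the $N$ identical terms and dividing by $N$ yields $\E[\tilde\vw] = \frac{1}{1+\delta_R}\bar\rmR\vmu$. The argument for $\vw$ is word-for-word the same with $(\rmR,\tilde\rmX,\tilde\vy,\rmR_{-i},\delta_R,\bar\rmR,\vmu)$ replaced by $(\rmQ,\rmX,\vy,\rmQ_{-i},\delta_Q,\bar\rmQ,\vmu_\beta)$, giving $\E[\vw] = \frac{1}{1+\delta_Q}\bar\rmQ\vmu_\beta$.

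\textbf{Main obstacle.} The only genuine subtlety lies in the two approximation steps, both of which are implicit already in the statement of the leave-one-out identity: replacing the random scalar $\frac1N\tilde\vx_i^\top\rmR_{-i}\tilde\vx_i$ by its limit $\delta_R$, and replacing $\E[\rmR_{-i}]$ by $\bar\rmR$ in linear forms. Under Assumption~\ref{assum:growth-rates} these follow from standard random-matrix concentration (the trace lemma and resolvent-perturbation bounds of Lemmas~\ref{lemma:resolvent-identity}--\ref{lemma:Sherman-Morisson}), with each per-sample error of order $O(N^{-1/2})$ uniformly in $i$, so that the accumulated error over the $N$ summands is still $o(1)$ in any bounded linear form. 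This bookkeeping is the main, though routine, technical step; everything else is the elementary algebra above.
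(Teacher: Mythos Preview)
Your argument is correct and matches the paper's proof essentially line for line: expand $\tilde\vw$ as $\frac1N\sum_i \tilde y_i\rmR\tilde\vx_i$, apply the leave-one-out identity $\rmR\tilde\vx_i=\frac{\rmR_{-i}\tilde\vx_i}{1+\delta_R}$, use independence to pull out $\E[\tilde y_i\tilde\vx_i]=\vmu$, and replace $\E[\rmR_{-i}]$ by $\bar\rmR$. Your discussion of the two approximation steps (the trace-lemma replacement of the random denominator by $\delta_R$ and the deterministic-equivalent substitution) is in fact more careful than the paper, which silently absorbs both into the leave-one-out identity.
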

\begin{proof}
    \begin{align*}
        \E[\tilde \vw] &= \frac1N \sum_{i = 1}^N \E[\tilde y_i \rmR \tilde \vx_i ] \\
        &= \frac1N \sum_{i = 1}^N \frac{1}{1 + \delta_R} \E[\tilde y_i \rmR_{-i} \tilde \vx_i] \\
        &= \frac{1}{1 + \delta_R} \bar \rmR \vmu
    \end{align*}
    The proof of $\E[\vw]$ is similar to this latter.
\end{proof}

\begin{lemma}[Deterministic equivalent]
    \label{lemma: DE of QAQ}
    For any positive semi-definite matrix $\rmA$, we have:
    \begin{equation*}
        \rmQ \rmA \rmQ \leftrightarrow \bar \rmQ \rmA \bar \rmQ + \frac{1}{n} \frac{\Tr(\Sigma_\beta \bar \rmQ \rmA \bar \rmQ)}{(1 + \delta_Q)^2} \E[\rmQ \Sigma_\beta \rmQ],
    \end{equation*}
    and:
    \begin{equation*}
        \rmR \rmA \rmR \leftrightarrow \bar \rmR \rmA \bar \rmR + \frac{1}{N} \frac{\Tr(\Sigma \bar \rmR \rmA \bar \rmR)}{(1 + \delta_R)^2} \E[\rmR \Sigma \rmR].
    \end{equation*}
    In particular for every $\va, \vb \in \sR^p$:
    \begin{align*}
        \va^\top \E[\rmQ \Sigma_\beta \rmQ] \vb = \frac{1}{h} \va^\top \bar \rmQ \Sigma_\beta \bar \rmQ \vb, \quad \va^\top\E[\rmR \Sigma \rmR] \vb = \frac{1}{\tilde h} \va^\top \bar \rmR \Sigma \bar \rmR \vb.
    \end{align*}
    
\end{lemma}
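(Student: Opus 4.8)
The plan is to derive the structural deterministic equivalent for $\rmQ\rmA\rmQ$ by one application of the resolvent identity followed by a leave-one-out decoupling of the resulting quadratic term, and then to specialize to $\rmA=\Sigma_\beta$ to read off the factor $h^{-1}$ (here $\Sigma_\beta=\vmu_\beta\vmu_\beta^\top+\rmI_p$ is the population covariance of the target columns, so $\bar\rmQ=(\tfrac{\Sigma_\beta}{1+\delta_Q}+\gamma\rmI_p)^{-1}$, and $\Sigma=\vmu\vmu^\top+\rmI_p$). The statement for $\rmR$ is obtained by the identical argument with $(\Sigma_\beta,\delta_Q,\eta,\gamma,h)$ replaced by $(\Sigma,\delta_R,\tilde\eta,\tilde\gamma,\tilde h)$. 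Throughout I would use the facts already assembled in this subsection: the leave-one-out relations $\rmQ=\rmQ_{-i}-\tfrac1n\tfrac{\rmQ_{-i}\vx_i\vx_i^\top\rmQ_{-i}}{1+\alpha_i}$ and $\rmQ\vx_i=\tfrac{\rmQ_{-i}\vx_i}{1+\alpha_i}$ with $\alpha_i=\tfrac1n\vx_i^\top\rmQ_{-i}\vx_i\asto\delta_Q$; the a priori bounds $\Vert\rmQ\Vert,\Vert\rmQ_{-i}\Vert\le\gamma^{-1}$; the deterministic equivalent $\bar\rmQ$ of $\rmQ$ (so $\E[\rmQ\rmM]$ has deterministic equivalent $\bar\rmQ\rmM$ for bounded deterministic $\rmM$); and the Hanson--Wright estimate $\tfrac1n\vx_i^\top\rmM\vx_i-\tfrac1n\Tr(\Sigma_\beta\rmM)\to0$ a.s.\ for $\rmM$ independent of $\vx_i$ with bounded norm.

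For the structural identity, since $\bar\rmQ^{-1}-\rmQ^{-1}=\tfrac{\Sigma_\beta}{1+\delta_Q}-\tfrac1n\rmX\rmX^\top$, Lemma~\ref{lemma:resolvent-identity} gives $\rmQ-\bar\rmQ=\bar\rmQ\big(\tfrac{\Sigma_\beta}{1+\delta_Q}-\tfrac1n\rmX\rmX^\top\big)\rmQ$; substituting this into the right factor of $\rmQ\rmA\rmQ$ and expanding $\tfrac1n\rmX\rmX^\top=\tfrac1n\sum_i\vx_i\vx_i^\top$ yields
\begin{equation*}
\rmQ\rmA\rmQ=\rmQ\rmA\bar\rmQ+\frac{1}{1+\delta_Q}\rmQ\rmA\bar\rmQ\Sigma_\beta\rmQ-\frac1n\sum_{i=1}^n\rmQ\rmA\bar\rmQ\vx_i\vx_i^\top\rmQ .
\end{equation*}
Here $\E[\rmQ\rmA\bar\rmQ]$ has deterministic equivalent $\bar\rmQ\rmA\bar\rmQ$ (from the deterministic equivalent of $\rmQ$). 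For the sum, insert $\rmQ=\rmQ_{-i}-\tfrac1n\tfrac{\rmQ_{-i}\vx_i\vx_i^\top\rmQ_{-i}}{1+\alpha_i}$ in the left factor and $\vx_i^\top\rmQ=\tfrac{\vx_i^\top\rmQ_{-i}}{1+\alpha_i}$ in the right factor, so that
\begin{equation*}
\rmQ\rmA\bar\rmQ\vx_i\vx_i^\top\rmQ=\frac{\rmQ_{-i}\rmA\bar\rmQ\vx_i\vx_i^\top\rmQ_{-i}}{1+\alpha_i}-\frac1n\frac{\rmQ_{-i}\vx_i\,(\vx_i^\top\rmQ_{-i}\rmA\bar\rmQ\vx_i)\,\vx_i^\top\rmQ_{-i}}{(1+\alpha_i)^2}.
\end{equation*}
Conditioning on $\rmQ_{-i}$ (independent of $\vx_i$), I would replace $\alpha_i$ by $\delta_Q$, $\vx_i\vx_i^\top$ by $\Sigma_\beta$, and decouple the scalar $\vx_i^\top\rmQ_{-i}\rmA\bar\rmQ\vx_i\to\Tr(\Sigma_\beta\rmQ_{-i}\rmA\bar\rmQ)$ from the rank-one matrix $\rmQ_{-i}\vx_i\vx_i^\top\rmQ_{-i}$, all up to errors that vanish in linear forms after the $\tfrac1n\sum_i$, then swap $\rmQ_{-i}$ for $\rmQ$ and, in the trace, for $\bar\rmQ$. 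The first piece then contributes $-\tfrac1{1+\delta_Q}\E[\rmQ\rmA\bar\rmQ\Sigma_\beta\rmQ]$, which cancels the second term of the display above, while the second piece contributes $+\tfrac1n\tfrac{\Tr(\Sigma_\beta\bar\rmQ\rmA\bar\rmQ)}{(1+\delta_Q)^2}\E[\rmQ\Sigma_\beta\rmQ]$; collecting gives exactly $\rmQ\rmA\rmQ\leftrightarrow\bar\rmQ\rmA\bar\rmQ+\tfrac1n\tfrac{\Tr(\Sigma_\beta\bar\rmQ\rmA\bar\rmQ)}{(1+\delta_Q)^2}\E[\rmQ\Sigma_\beta\rmQ]$, the first assertion (and the $\rmR$ version by the same steps).

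To get the ``in particular'' part, apply this identity with $\rmA=\Sigma_\beta$: for bounded $\va,\vb$,
\begin{equation*}
\va^\top\E[\rmQ\Sigma_\beta\rmQ]\vb=\va^\top\bar\rmQ\Sigma_\beta\bar\rmQ\vb+\frac1n\frac{\Tr((\Sigma_\beta\bar\rmQ)^2)}{(1+\delta_Q)^2}\,\va^\top\E[\rmQ\Sigma_\beta\rmQ]\vb+o(1).
\end{equation*}
By Lemma~\ref{lemma:trace-identities}, $\tfrac1n\tfrac{\Tr((\Sigma_\beta\bar\rmQ)^2)}{(1+\delta_Q)^2}=\tfrac{\eta}{(1+\gamma(1+\delta_Q))^2}=1-h$, so this rearranges to $h\,\va^\top\E[\rmQ\Sigma_\beta\rmQ]\vb=\va^\top\bar\rmQ\Sigma_\beta\bar\rmQ\vb+o(1)$, i.e.\ $\va^\top\E[\rmQ\Sigma_\beta\rmQ]\vb\to\tfrac1h\va^\top\bar\rmQ\Sigma_\beta\bar\rmQ\vb$; the identity for $\rmR$ follows identically, the companion trace identity and $1-\tilde h=\tfrac{\tilde\eta}{(1+\tilde\gamma(1+\delta_R))^2}$ producing the factor $\tfrac1{\tilde h}$.

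The algebra is short because of the cancellation in the second step, so the main obstacle is the \emph{uniform} control of the replacement errors: that $\alpha_i$ may be replaced by $\delta_Q$ inside $\tfrac1n\sum_i$ uniformly in $i$, that each $\vx_i$-sandwiched quadratic form concentrates on its (normalized) trace, that $\rmQ_{-i}$ may be swapped for $\rmQ$ (then $\bar\rmQ$) at $O(1/n)$ per term, and that the resulting $O(n^{-1/2})$ fluctuations survive only as $o(1)$ after averaging --- the standard but bookkeeping-heavy concentration argument of random matrix theory. Minor extra care is needed because sandwiching matrices such as $\rmQ_{-i}\rmA\bar\rmQ$ are not symmetric, but $\E[\vx_i^\top\rmM\vx_i]=\Tr(\Sigma_\beta\rmM)$ still holds and the concentration estimates are unaffected.
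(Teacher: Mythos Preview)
Your proposal is correct and follows essentially the same route as the paper. The only cosmetic difference is orientation: the paper expands the \emph{left} factor via $\rmQ-\bar\rmQ=\rmQ\big(\tfrac{\Sigma_\beta}{1+\delta_Q}-\tfrac1n\rmX\rmX^\top\big)\bar\rmQ$ and then applies $\rmQ\vx_i=\tfrac{\rmQ_{-i}\vx_i}{1+\delta_Q}$ on the left together with the Sherman--Morrison expansion of the rightmost $\rmQ$, whereas you expand the \emph{right} factor with the mirrored identity and decouple symmetrically; the cancellation and the surviving trace correction are identical. Your explicit derivation of the $h^{-1}$ factor by specializing to $\rmA=\Sigma_\beta$ and invoking Lemma~\ref{lemma:trace-identities} is exactly what the paper leaves implicit after stating the structural identity.
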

\begin{proof}
    The proof is derived similarly as in the appendix of \cite{firdoussi2024high}. Again, the proof is similar for both $\rmQ$ and $\rmR$.\\
    Let $\bar \rmQ$ be a deterministic equivalent of $\rmQ$. The following equations and identities are valid in terms of linear forms. We have that:
    \begin{align*}
        \E[\rmQ \rmA \rmQ] &= \E[\bar \rmQ \rmA \rmQ] + \E[(\rmQ - \bar \rmQ) \rmA \rmQ] \\
        &= \bar \rmQ(\E[\rmA \rmQ] + \rmA \E[\rmQ - \bar \rmQ]) + \E[(\rmQ - \bar \rmQ) \rmA \rmQ] \\
        &= \bar \rmQ \rmA \bar \rmQ + \E[(\rmQ - \bar \rmQ) \rmA \rmQ]
    \end{align*}
    Using lemma \ref{lemma:resolvent-identity}, we have that:
    \begin{align*}
        \rmQ - \bar \rmQ &= \rmQ (\bar \rmQ^{-1} - \rmQ^{-1}) \bar \rmQ \\
        &= \rmQ \left( \frac{\Sigma_\beta}{1 + \delta_Q} - \frac1n \rmX \rmX^\top \right) \bar \rmQ \\
        &= \rmQ \left( \rmS - \frac1n \rmX \rmX^\top \right) \bar \rmQ
    \end{align*}
    Thus:
    \begin{align*}
        \E[\rmQ \rmA \rmQ] &= \bar \rmQ \rmA \bar \rmQ + \E[\rmQ (\rmS - \frac1n \rmX \rmX^\top) \bar \rmQ \rmA \rmQ] \\
        &= \bar \rmQ \rmA \bar \rmQ + \E[\rmQ \rmS \bar \rmQ \rmA \rmQ] - \frac1n \sum_{i = 1}^n \E[\rmQ \vx_i \vx_i^\top \bar \rmQ \rmA \rmQ]
    \end{align*}
    We have that:
    \begin{align*}
        \E[\rmQ \vx_i \vx_i^\top \bar \rmQ \rmA \rmQ] &= \frac{1}{1 + \delta_Q} \E[\rmQ_{-i} \vx_i \vx_i^\top \bar \rmQ \rmA \rmQ] \\
        &= \frac{1}{1 + \delta_Q} \left( \E[\rmQ_{-i} \vx_i \vx_i^\top \bar \rmQ \rmQ_{-i}] - \E[\rmQ_{-i} \vx_i \vx_i^\top \bar \rmQ \rmA \frac{\rmQ_{-i} \vx_i \vx_i^\top \rmQ_{-i}}{n (1 + \delta_Q)}] \right) \\
        &= \frac{1}{1 + \delta_Q} \left( \E[\rmQ_{-i} \Sigma_\beta \bar \rmQ \rmA \rmQ_{-i}] - \E[\rmQ_{-i} \vx_i \vx_i^\top \bar \rmQ \rmA \frac{\rmQ_{-i} \vx_i \vx_i^\top \rmQ_{-i}}{n (1 + \delta_Q)}] \right) \\
        &= \frac{1}{1 + \delta_Q} \left( \E[\rmQ \Sigma_\beta \bar \rmQ \rmA \rmQ] - \E[\rmQ_{-i} \vx_i \vx_i^\top \bar \rmQ \rmA \frac{\rmQ_{-i} \vx_i \vx_i^\top \rmQ_{-i}}{n (1 + \delta_Q)}] \right)
    \end{align*}
    Therefore, by replacing the obtained expression of $\E[\rmQ \vx_i \vx_i^\top \bar \rmQ \rmA \rmQ]$ in the equation of $\E[\rmQ \rmA \rmQ]$, we get that:
    \begin{align*}
        \E[\rmQ \rmA \rmQ] &= \bar \rmQ \rmA \bar \rmQ + \frac{1}{n^2(1 + \delta_Q)^2} \sum_{i = 1}^n \E[\rmQ_{-i} \vx_i \vx_i^\top \bar \rmQ \rmA \rmQ_{-i} \vx_i \vx_i^\top \rmQ_{-i}] \\
        &= \bar \rmQ \rmA \bar \rmQ + \frac{1}{n^2 (1 + \delta_Q)^2} \sum_{i = 1}^n \Tr(\Sigma_\beta \bar \rmQ \rmA \bar \rmQ) \E[\rmQ_{-i} \vx_i \vx_i^\top \rmQ_{-i}] \\
        &= \bar \rmQ \rmA \bar \rmQ + \frac{1}{n^2 (1 + \delta_Q)^2} \sum_{i = 1}^n \Tr(\Sigma_\beta \bar \rmQ \rmA \bar \rmQ) \E[\rmQ_{-i} \Sigma_\beta \rmQ_{-i}] \\
        &= \bar \rmQ \rmA \bar \rmQ + \frac{1}{n} \frac{\Tr(\Sigma_\beta \bar \rmQ \rmA \bar \rmQ)}{(1 + \delta_Q)^2} \E[\rmQ \Sigma_\beta \rmQ]
    \end{align*}
    Which finally concludes the proof.
\end{proof}
Now we will provide the result of a useful quantity that we will be using for computing the variance.
\begin{lemma}[Expectation of $\tilde \vw^\top \rmA \tilde \vw$]
\label{lemma: E[w A w]}
    Let $\rmA \in \sR^{p \times p}$ be a random matrix independent of $\tilde \vw$. We have that:
    \begin{equation*}
        \E[\tilde \vw^\top \rmA \tilde \vw] = \frac{1}{(1 + \delta_R)^2} \left( \vmu^\top \E[\rmR \rmA \rmR] \vmu - \frac{2}{N (1 + \delta_R)} \Tr(\Sigma \E[\rmR \rmA \rmR]) \vmu^\top \bar \rmR \vmu + \frac{1}{N} \Tr(\Sigma \E[\rmR \rmA \rmR]) \right)
    \end{equation*}
\end{lemma}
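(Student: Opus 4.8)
The plan is to write the quadratic form as a double sum over the source samples and separate its diagonal from its off-diagonal part, using the leave-one-out identities recalled above. Starting from $\tilde\vw=\frac1N\rmR\tilde\rmX\tilde\vy=\frac1N\sum_{i=1}^N\tilde y_i\rmR\tilde\vx_i$, one has
\begin{equation*}
\tilde\vw^\top\rmA\tilde\vw=\frac{1}{N^2}\sum_{i,j}\tilde y_i\tilde y_j\,\tilde\vx_i^\top\rmR\rmA\rmR\tilde\vx_j .
\end{equation*}
Applying $\rmR\tilde\vx_i=\rmR_{-i}\tilde\vx_i/(1+\delta_R)$ to both copies of $\rmR$ (with the replacement of $\tfrac1N\tilde\vx_i^\top\rmR_{-i}\tilde\vx_i$ by $\delta_R$ producing only a vanishing error, by concentration of quadratic forms of the resolvent) turns this into $\frac{1}{N^2(1+\delta_R)^2}\sum_{i,j}\tilde y_i\tilde y_j\,\tilde\vx_i^\top\rmR_{-i}\rmA\rmR_{-j}\tilde\vx_j$, which I would split according to whether $i=j$ or $i\neq j$.

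For the diagonal part, $\tilde\vx_i$ is independent of both $\rmR_{-i}$ and $\rmA$, so $\E[\tilde\vx_i^\top\rmR_{-i}\rmA\rmR_{-i}\tilde\vx_i]=\E[\Tr(\Sigma\,\rmR_{-i}\rmA\rmR_{-i})]$ with $\Sigma\equiv\E[\tilde\vx_i\tilde\vx_i^\top]=\vmu\vmu^\top+\rmI_p$; substituting $\rmR$ for $\rmR_{-i}$ (a rank-one perturbation costing only $O(1/N)$ per term) and summing over $i$ produces the contribution $\tfrac{1}{N(1+\delta_R)^2}\Tr(\Sigma\,\E[\rmR\rmA\rmR])$, i.e.\ the last term of the statement.

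The off-diagonal part is the core of the proof. Since $\rmR_{-i}$ still depends on $\tilde\vx_j$ and $\rmR_{-j}$ on $\tilde\vx_i$, I would pass to the doubly-reduced resolvent $\rmR_{-ij}$ (independent of both $\tilde\vx_i$ and $\tilde\vx_j$) and expand, via Sherman--Morrison, $\rmR_{-i}=\rmR_{-ij}-\tfrac{1}{N(1+\delta_R)}\rmR_{-ij}\tilde\vx_j\tilde\vx_j^\top\rmR_{-ij}$ and symmetrically for $\rmR_{-j}$. Multiplying out $\tilde\vx_i^\top\rmR_{-i}\rmA\rmR_{-j}\tilde\vx_j$ gives four terms; taking expectations over $\tilde\vx_i,\tilde\vx_j$ — using $\E[\tilde\vx_i]=\tilde y_i\vmu$, $\E[\tilde\vx_i\tilde\vx_i^\top]=\Sigma$, independence of $\tilde\vx_i$ and $\tilde\vx_j$, and Wick's theorem for the cubic and quartic Gaussian terms — leaves, at leading order,
\begin{equation*}
\tilde y_i\tilde y_j\Big(\vmu^\top\rmR_{-ij}\rmA\rmR_{-ij}\vmu-\frac{2}{N(1+\delta_R)}(\vmu^\top\rmR_{-ij}\vmu)\,\Tr(\Sigma\,\rmR_{-ij}\rmA\rmR_{-ij})\Big),
\end{equation*}
the remaining fully quartic-in-data term being $O(p/N^2)=o(1)$ and therefore discarded (its purely-noise part is odd in $\tilde\vz_i$, hence of zero mean). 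Substituting $\rmR$ for $\rmR_{-ij}$, using $\tilde y_i^2\tilde y_j^2=1$, summing over the $N(N-1)\sim N^2$ ordered pairs, and using the concentration $\vmu^\top\rmR\vmu\to\vmu^\top\bar\rmR\vmu$ (so that $\E[(\vmu^\top\rmR\vmu)\Tr(\Sigma\rmR\rmA\rmR)]\to(\vmu^\top\bar\rmR\vmu)\,\Tr(\Sigma\,\E[\rmR\rmA\rmR])$), one obtains the first two terms of the claim; adding the diagonal contribution and pulling out the factor $(1+\delta_R)^{-2}$ finishes the argument.

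The step I expect to be the main obstacle is the off-diagonal one: correctly evaluating the mixed moments in which $\tilde\vx_i$ appears simultaneously as a linear factor $\tilde\vx_i^\top\rmR_{-ij}\vmu$ and as a quadratic factor $\tilde\vx_i^\top\rmR_{-ij}\rmA\rmR_{-ij}\tilde\vx_i$ within the same product, verifying rigorously that the quartic term is subleading, and controlling the accumulation of the many $O(1/N)$ leave-one-out replacement errors over an $O(N^2)$-term sum, together with the concentration rates of the resolvent bilinear forms. The independence of $\rmA$ from the source data and the Gaussianity of $\tilde\rmZ$ (hence of the $\tilde\vx_i$) are used throughout.
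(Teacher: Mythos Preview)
Your proposal is correct and follows essentially the same route as the paper's own proof: split the double sum into diagonal and off-diagonal parts, use the Sherman--Morrison leave-one-out identity $\rmR\tilde\vx_i=\rmR_{-i}\tilde\vx_i/(1+\delta_R)$, then on the off-diagonal pass to $\rmR_{-ij}$ and expand into the four terms (the paper calls them $A_{11},A_{12},A_{13},A_{14}$), discarding $A_{14}=O(N^{-1})$ and replacing $\rmR_{-ij}$ by $\rmR$ and $\vmu^\top\rmR\vmu$ by $\vmu^\top\bar\rmR\vmu$ at leading order. Your worries about the cubic/quartic moment bookkeeping and the error accumulation are exactly the places where the paper is informal (it simply asserts $A_{14}=\gO(N^{-1})$ and silently trades $\rmR_{-ij}$ for $\rmR$), so you have in fact identified the right points to justify.
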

\begin{proof}
    We have that:
    \begin{align*}
        \E[\tilde \vw^\top \rmA \tilde \vw] &= \frac{1}{N^2} \sum_{i, j = 1}^N \E[\tilde y_i \tilde y_j \tilde \vx_i^\top \rmR \rmA \rmR \tilde \vx_j ] \\
        &= \frac{1}{N^2} \sum_{i \neq j} \E[\tilde y_i \tilde y_j \tilde \vx_i^\top \rmR \rmA \rmR \tilde \vx_j ] + \frac{1}{N^2} \sum_{i = 1}^N \E[ \tilde \vx_i^\top \rmR \rmA \rmR \tilde \vx_i ] 
    \end{align*}

    We have for $i \neq j$:
    \begin{align*}
        &\E[\tilde y_i \tilde y_j \tilde \vx_i^\top \rmR \rmA \rmR \tilde \vx_j ] = \frac{1}{(1 + \delta_R)^2} \E[\tilde y_i \tilde y_j \tilde \vx_i \rmR_{-i} \rmA \rmR_{-i} \tilde \vx_j] \\
        &= \frac{1}{(1 + \delta_R)^2} \E \left[ \tilde y_i \tilde y_j \tilde \vx_i^\top \left( \rmR_{-ij} - \frac{\frac1N \rmR_{-ij} \tilde \vx_j \tilde \vx_j^\top \rmR_{-ij}}{1 + \delta_R}\right) \rmA \left( \rmR_{-ij} - \frac{\frac1N \rmR_{-ij} \tilde \vx_i \tilde \vx_i^\top \rmR_{-ij}}{1 + \delta_R}\right) \tilde \vx_j \right] \\
        &= A_{11} - A_{12} - A_{13} + A_{14}
    \end{align*}
    So let us compute each term independently:
    \begin{align*}
        A_{11} &= \frac{1}{(1 + \delta_R)^2} \E[\tilde y_i \tilde y_j \tilde \vx_i^\top \rmR_{-ij} \rmA \rmR_{-ij} \tilde \vx_j] \\
        &= \frac{1}{(1 + \delta_R)^2} \vmu^\top \E[\rmR \rmA \rmR] \vmu
    \end{align*}
    And :
    \begin{align*}
        A_{12} &= \frac{1}{N(1 + \delta_R)^3} \E[\tilde y_i \tilde y_j \tilde \vx_i^\top \rmR_{-ij} \rmA \rmR_{-ij} \tilde \vx_i \tilde \vx_i^\top \rmR_{-ij} \tilde \vx_j] \\
        &= \frac{1}{N(1 + \delta_R)^3} \Tr(\Sigma \E[\rmR \rmA \rmR]) \E[\tilde y_i \tilde y_j \tilde \vx_i^\top \rmR_{-ij} \tilde \vx_j] \\
        &= \frac{1}{N(1 + \delta_R)^3} \Tr(\Sigma \E[\rmR \rmA \rmR]) \vmu^\top \bar \rmR \vmu
    \end{align*}
    And also we can easily observe that:
    \begin{align*}
        A_{13} = A_{12}, \quad A_{14} = \gO(N^{-1}).
    \end{align*}
    Thus:
    \begin{align*}
        \E[\tilde y_i \tilde y_j \tilde \vx_i^\top \rmR \rmA \rmR \tilde \vx_j] = \frac{1}{(1 + \delta_R)^2} \left( \vmu^\top \E[\rmR \rmA \rmR]\vmu - \frac{2}{N(1 + \delta_R)} \Tr(\Sigma \E[\rmR \rmA \rmR]) \vmu^\top \bar \rmR \vmu \right)
    \end{align*}
    And for the second term in the equation of $\E[\tilde \vw \rmA \tilde \vw]$, we have:
    \begin{align*}
        \E[\tilde \vx_i^\top \rmR \rmA \rmR \tilde \vx_i] &= \frac{1}{(1 + \delta_R)^2} \E[\tilde \vx_i^\top \rmR_{-i} \rmA \rmR_{-i} \tilde \vx_i]  \\
        &= \frac{1}{(1 + \delta_R)^2} \E[\Tr(\tilde \vx_i \tilde \vx_i^\top \rmR_{-i} \rmA \rmR_{-i})] \\
        &= \frac{1}{(1 + \delta_R)^2} \Tr(\E[\tilde \vx_i \tilde \vx_i^\top] \E[\rmR_{-i} \rmA \rmR_{-i}]) \\
        &= \frac{1}{(1 + \delta_R)^2} \Tr(\Sigma \E[\rmR \rmA \rmR])
    \end{align*}
    Hence, finally:
    \begin{equation*}
        \E[\tilde \vw^\top \rmA \tilde \vw] = \frac{1}{(1 + \delta_R)^2} \left( \vmu^\top \E[\rmR \rmA \rmR] \vmu - \frac{2}{N (1 + \delta_R)} \Tr(\Sigma \E[\rmR \rmA \rmR]) \vmu^\top \bar \rmR \vmu + \frac{1}{N} \Tr(\Sigma \E[\rmR \rmA \rmR]) \right)
    \end{equation*}
\end{proof}

\begin{lemma}[Commutativity]
\label{lemma:commutativity}
Let $\bar \rmR$ and $\bar \rmQ$ be the resolvent matrices defined in lemma \ref{lemma:det-equiv}. We have that:
\begin{align*}
    \bar \rmQ \Sigma_\beta = \Sigma_\beta \bar \rmQ, \quad \bar \rmR \Sigma = \Sigma \bar \rmR.
\end{align*}
    
\end{lemma}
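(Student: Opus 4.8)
The plan is to exploit the fact that, by construction in Lemma~\ref{lemma:det-equiv}, both $\bar{\rmQ}$ and $\bar{\rmR}$ are rational functions of the population covariances $\Sigma_\beta = \vmu_\beta \vmu_\beta^\top + \rmI_p$ and $\Sigma = \vmu \vmu^\top + \rmI_p$ respectively, and any function of a matrix commutes with that matrix. No random-matrix machinery is needed; this is a purely structural consequence of the closed forms already derived.

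Concretely, I would first recall from Lemma~\ref{lemma:det-equiv} that $\bar{\rmQ}^{-1} = \tfrac{1}{1 + \delta_Q}\Sigma_\beta + \gamma \rmI_p$, so that $\bar{\rmQ}^{-1}$ is an affine combination of $\Sigma_\beta$ and the identity; hence $\bar{\rmQ}^{-1}\Sigma_\beta = \Sigma_\beta \bar{\rmQ}^{-1}$ trivially. The only point to spell out is the elementary linear-algebra fact that if an invertible matrix $\rmA$ commutes with $\rmB$, then $\rmA^{-1}$ also commutes with $\rmB$: indeed $\rmA^{-1}\rmB = \rmA^{-1}\rmB\rmA\rmA^{-1} = \rmA^{-1}\rmA\rmB\rmA^{-1} = \rmB\rmA^{-1}$. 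Applying this with $\rmA = \bar{\rmQ}^{-1}$ and $\rmB = \Sigma_\beta$ yields $\bar{\rmQ}\Sigma_\beta = \Sigma_\beta \bar{\rmQ}$. The identity $\bar{\rmR}\Sigma = \Sigma \bar{\rmR}$ follows by the identical argument, using the analogous expression $\bar{\rmR}^{-1} = \tfrac{1}{1 + \delta_R}\Sigma + \tilde\gamma \rmI_p$ in place of $\bar{\rmQ}^{-1}$.

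There is essentially no obstacle here, so the ``hard part'' is merely deciding how much detail to include: the whole argument is two lines. If one prefers a spectral viewpoint, one could instead diagonalize $\Sigma_\beta = \rmU \Lambda \rmU^\top$ and observe that $\bar{\rmQ} = \rmU \bigl(\tfrac{1}{1+\delta_Q}\Lambda + \gamma \rmI_p\bigr)^{-1}\rmU^\top$ shares the eigenbasis $\rmU$, which makes commutation immediate; but the inverse-of-a-commuting-matrix argument above is cleaner and avoids introducing eigendecompositions altogether.
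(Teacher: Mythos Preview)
Your proposal is correct and follows essentially the same idea as the paper: both exploit the relation $\bar{\rmQ}^{-1} = \tfrac{1}{1+\delta_Q}\Sigma_\beta + \gamma\rmI_p$ from Lemma~\ref{lemma:det-equiv}. The paper simply rewrites this as $\Sigma_\beta = (1+\delta_Q)(\bar{\rmQ}^{-1} - \gamma\rmI_p)$ and multiplies out both $\bar{\rmQ}\Sigma_\beta$ and $\Sigma_\beta\bar{\rmQ}$ to obtain the common value $(1+\delta_Q)(\rmI_p - \gamma\bar{\rmQ})$, whereas you phrase the same computation via the general fact that the inverse of a commuting matrix also commutes; the content is identical.
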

\begin{proof}
    We will just prove it for $\bar \rmQ$ and $\Sigma_\beta$ because the other proof of the second identity is similar. We know that:
    \begin{equation*}
        \Sigma_\beta = (1 + \delta_Q) (\bar \rmQ^{-1} - \gamma \rmI_p)
    \end{equation*}
    Thus:
    \begin{align*}
        & \bar \rmQ \Sigma_\beta = (1 + \delta_Q) \bar \rmQ (\bar \rmQ^{-1} - \gamma \rmI_p) = (1 + \delta_Q) (\rmI_p - \gamma \bar \rmQ) \\
        & \Sigma_\beta \bar \rmQ = (1 + \delta_Q) (\bar \rmQ^{-1} - \gamma \rmI_p) \bar \rmQ = (1 + \delta_Q) (\rmI_p - \gamma \bar \rmQ)
    \end{align*}
    which concludes the proof.
\end{proof}

\section{RMT Analysis of the fine-tuned classifier}\label{sec:rmt_analysis_1}
Let $\vx \sim \gN((-1)^a\vmu_\beta, \rmI_p)$ independent of the fine-tuning dataset $\rmX$. We recall that:
\begin{align*}
    \vw_\alpha = \vw + \alpha \tilde \vw - \frac{\alpha}{n} \rmQ(\gamma) \rmX \rmX^\top \tilde \vw,
\end{align*}
where:
\begin{align*}
    \vw =\frac1n \rmQ(\gamma) \rmX \vy, \quad \tilde \vw = \frac1N \rmR(\tilde \gamma) \tilde \rmX \tilde \vy
\end{align*}
\subsection{Test Expectation}
We have that:
\begin{align}
    \E[\vw_\alpha^\top \vx] = \E[\vw^\top \vx] + \alpha \E[\tilde \vw^\top \vx] - \frac{\alpha}{n} \E[\tilde \vw^\top \rmX \rmX^\top \rmQ \vx]
\end{align}
Let us compute each term of this previous sum.\\
First, using lemma \ref{lemma:E[w]}, we have that, since $\vx$ is independent of $\rmX$ and of $\tilde \rmX$:
\begin{align*}
    &\E[\vw^\top \vx] = \E[\vw]^\top \E[\vx] =  \frac{(-1)^a}{1 + \delta_Q} \vmu_\beta^\top \bar \rmQ \vmu_\beta \\
    &\E[\tilde \vw^\top \vx] = \E[\tilde \vw]^\top \E[\vx] = \frac{(-1)^a}{1 + \delta_R} \vmu^\top \bar \rmR \vmu_\beta
\end{align*}
And we have that:
\begin{align*}
    \E[\tilde \vw^\top \rmX \rmX^\top \rmQ \vx] &=  \E[\tilde \vw]^\top \E[\rmX \rmX^\top \rmQ] \E[\vx]
\end{align*}
And:
\begin{align*}
    \E[\rmX \rmX^\top \rmQ] &= \sum_{i = 1}^n \E[\vx_i \vx_i^\top \rmQ] \\
    &= \sum_{i = 1}^n \frac{1}{1 + \delta_Q} \E[\vx_i \vx_i^\top \rmQ_i] \\
    &= \sum_{i = 1}^n \frac{1}{1 + \delta_Q} \E[\vx_i \vx_i^\top] \bar \rmQ \\
    &= \frac{n}{1 + \delta_Q} \Sigma_\beta \bar\rmQ
\end{align*}
Thus:
\begin{align*}
    \frac1n \E[\tilde \vw^\top \rmX \rmX^\top \rmQ \vx] &= \frac{(-1)^a}{(1 + \delta_R)} \frac{1}{(1 + \delta_Q)} \vmu^\top \bar \rmR \Sigma_\beta \bar \rmQ \vmu_\beta \\
    &= \frac{(-1)^a}{1 + \delta_R} \vmu^\top \bar \rmR (\rmI_p - \gamma \bar \rmQ) \vmu_\beta 
\end{align*}
Finally:
\begin{align*}
    \E[\vw_\alpha^\top \vx] &= (-1)^a \left( \frac{1}{1 + \delta_Q} \vmu_\beta^\top \bar \rmQ \vmu_\beta + \frac{\alpha}{1 + \delta_R} \vmu^\top \bar \rmR \vmu_\beta - \frac{\alpha}{1 + \delta_R} \vmu^\top \bar \rmR (\rmI_p - \gamma \bar \rmQ) \vmu_\beta  \right) \\
    &= (-1)^a \left( \frac{1}{1 + \delta_Q} \vmu_\beta^\top \bar \rmQ \vmu_\beta + \frac{\alpha \gamma}{1 + \delta_R} \vmu^\top \bar \rmR \bar \rmQ \vmu_\beta \right)
\end{align*}

And using the identities in lemma \ref{lemma:relevant-identities}:
\begin{align}
    \E[\vw_\alpha^\top \vx] &= \frac{(-1)^a}{(\Vert \vmu_\beta \Vert^2 + 1 + \gamma(1 + \delta_Q))} \left( \Vert \vmu_\beta \Vert^2 + \frac{\alpha \gamma (1 + \delta_Q)}{(\Vert \vmu \Vert^2 + 1 + \tilde \gamma(1 + \delta_R))} \beta \Vert \vmu \Vert^2 \right)\\
    &= \frac{(-1)^a}{\lambda_Q} \left( \Vert \vmu_\beta \Vert^2 + \frac{\alpha \beta \gamma (1 + \delta_Q)}{\lambda_R} \Vert \vmu \Vert^2 \right)
\end{align}
\subsection{Test Variance}
To compute the variance of $\vw_\alpha^\top \vx$, it suffices to compute the second moment: $\E[(\vw_\alpha^\top \vx)^2]$.
\begin{align}
\label{eq:mean_2-global}
    \E[(\vw_\alpha^\top \vx)^2] = \E[(\vw^\top \vx + \alpha \tilde \vw^\top \vx)^2 + \frac{\alpha^2}{n^2} (\tilde \vw^\top \rmX \rmX^\top \rmQ \vx)^2 - \frac{2 \alpha}{n} \tilde \vw^\top \rmX \rmX^\top \rmQ \vx (\vw^\top \vx + \alpha \tilde \vw^\top \vx)]
\end{align}
\paragraph{First term:} We have that, as proved in \cite{firdoussi2024high}:
\begin{align*}
    \E[(\vw^\top \vx)^2] &= \frac{1}{h(1 + \delta_Q)} \left ( \frac{1}{1 + \delta_Q}\vmu_\beta^\top \bar \rmQ \Sigma_\beta \bar \rmQ \vmu_\beta - 2 (1 - h) \vmu_\beta^\top \bar \rmQ \vmu_\beta \right) + \frac{1 - h}{h} \\
    &= \frac{1}{h(1 + \delta_Q)} \left ( \frac{1}{1 + \delta_Q} \left( (\vmu_\beta^\top \bar \rmQ \vmu_\beta)^2 + \vmu_\beta^\top \bar \rmQ^2 \vmu_\beta \right) - 2 (1 - h) \vmu_\beta^\top \bar \rmQ \vmu_\beta \right) + \frac{1 - h}{h} \\
    &= \frac{\Vert \vmu_\beta \Vert^2}{h(\Vert \vmu_\beta \Vert^2 + 1 + \gamma(1 + \delta_Q))} \left( \frac{\Vert \vmu_\beta \Vert^2 + 1}{\Vert \vmu_\beta \Vert^2 + 1 + \gamma (1 + \delta_Q)} - 2(1 - h)\right) + \frac{1 - h}{h}
\end{align*}

And:
\begin{align*}
    \E[(\tilde \vw^\top x)^2] &= \E[\tilde \vw^\top \vx \tilde \vw^\top \vx] \\
    &= \E[\tilde \vw^\top \vx \vx^\top \tilde \vw] \\
    &= \E[\tilde \vw^\top \Sigma_\beta \tilde \vw]
\end{align*}
Therefore by lemma \ref{lemma: E[w A w]}:
\begin{equation}
    \E[(\tilde \vw^\top x)^2] = \frac{1}{(1 + \delta_R)^2} \left ( \vmu^\top \E[\rmR \Sigma_\beta \rmR] \vmu - \frac{2}{(1 + \delta_R)} \frac1N \Tr(\Sigma \E[\rmR \Sigma_\beta \rmR]) \vmu^\top \bar \rmR \vmu + \frac1N \Tr(\Sigma \E[\rmR \Sigma_\beta \rmR]) \right)
\end{equation}
And, we have that:
\begin{align*}
    \E[\vw^\top \vx \tilde \vw^\top \vx] &= \E[\vw^\top \vx \vx^\top \tilde \vw] \\
    &= \E[\vw]^\top \Sigma_\beta \E[\tilde \vw] \\
    &= \frac{1}{(1 + \delta_Q)(1 + \delta_R)} \vmu_\beta^\top \bar \rmQ \Sigma_\beta \bar \rmR \vmu \\
    &= \frac{1}{(1 + \delta_R)} \vmu_\beta^\top (\rmI_p - \gamma \bar \rmQ) \bar \rmR \vmu \\
    &= \frac{1}{(1 + \delta_R)} \vmu_\beta^\top \bar \rmR \vmu - \frac{\gamma}{(1 + \delta_R)} \vmu_\beta^\top \bar \rmQ \bar \rmR \vmu
\end{align*}
And since $\E[\vw^\top \vx \tilde \vw^\top \vx] = \E[\tilde \vw^\top \vx \vw^\top \vx]$, then:
\begin{equation*}
    \E[\vw^\top \vx \tilde \vw^\top \vx] = \frac{1}{(1 + \delta_R)} \vmu_\beta^\top \bar \rmR \vmu - \frac{\gamma}{(1 + \delta_R)} \vmu_\beta^\top \bar \rmR \bar \rmQ  \vmu
\end{equation*}
and thus:
\begin{equation}
\label{eq: commutativité}
    \vmu_\beta^\top \bar \rmR \bar \rmQ  \vmu = \vmu_\beta^\top \bar \rmQ \bar \rmR \vmu
\end{equation}

\paragraph{Second term:} Now let us compute the expectation of the second term in \eqref{eq:mean_2-global}:
\begin{align*}
    \frac{1}{n^2}\E[(\tilde \vw^\top \rmX \rmX^\top \rmQ \vx)^2] &= \frac{1}{n^2}\E[\tilde \vw^\top \rmX \rmX^\top \rmQ \vx \tilde \vw^\top \rmX \rmX^\top \rmQ \vx] \\
    &= \frac{1}{n^2} \E[\tilde \vw^\top \rmX \rmX^\top \rmQ \vx \vx^\top \rmX \rmX^\top \rmQ \tilde \vw] \\
    &= \frac{1}{n^2} \E[\tilde \vw^\top \rmX\rmX^\top \rmQ \Sigma_\beta \rmX \rmX^\top \rmQ \tilde \vw] \\
    &= \E[\tilde \vw^\top (\rmI_p - \gamma \rmQ) \Sigma_\beta (\rmI_p - \gamma \rmQ) \tilde \vw] 
\end{align*}
Therefore, by lemma \ref{lemma: E[w A w]}:
\begin{align*}
    & \frac{1}{n^2}\E[(\tilde \vw^\top \rmX \rmX^\top \rmQ \vx)^2] = \frac{1}{(1 + \delta_R)^2} \vmu^\top \E[\rmR (\rmI_p - \gamma \rmQ) \Sigma_\beta (\rmI_p - \gamma \rmQ) \rmR] \vmu \\
    &+ \frac{\Tr(\Sigma \E[\rmR (\rmI_p - \gamma \rmQ) \Sigma_\beta (\rmI_p - \gamma \rmQ) \rmR])}{N(1 + \delta_R)^2} \left(1 - \frac{2}{ (1 + \delta_R)} \vmu^\top \bar \rmR \vmu   \right)
\end{align*}

\paragraph{Third term:} Now we want to compute $\frac{2 \alpha}{n}\E[\tilde \vw^\top \rmX \rmX^\top \rmQ\vx(\vw^\top \vx + \alpha \tilde \vw^\top \vx)]$. So we have that:
\begin{align*}
    \E[\tilde \vw^\top \rmX \rmX^\top \rmQ \vx \vw^\top \vx] &= \E[\tilde \vw]^\top \E[\rmX \rmX^\top \rmQ \vx \vx^\top \vw] \\
    &= \E[\tilde \vw]^\top \E[\rmX \rmX^\top \rmQ \Sigma_\beta \vw] \\
    &= \E[\tilde \vw]^\top \E[\frac1n \rmX \rmX^\top \rmQ \Sigma_\beta \rmQ \rmX \vy] \\
    &= \E[\tilde \vw]^\top \E[(\rmQ^{-1} - \gamma \rmI_p) \rmQ \Sigma_\beta \rmQ \rmX \vy] \\
    &= \E[\tilde \vw]^\top \E[(\rmI_p - \gamma \rmQ) \Sigma_\beta \rmQ \rmX \vy] \\
    &= \E[\tilde \vw]^\top \left( \E[\Sigma_\beta \rmQ \rmX \vy] - \gamma \E[\rmQ \Sigma_\beta \rmQ \rmX \vy] \right)
\end{align*}
And we have that:
\begin{align*}
    \E[\Sigma_\beta \rmQ \rmX \vy] &= \sum_{i = 1}^n \E[y_i \Sigma_\beta \rmQ \vx_i] \\
    &= \frac{n}{(1 + \delta_Q)} \E[y_i \Sigma_\beta \rmQ_{-i} \vx_i] \\
    &= \frac{n}{(1 + \delta_Q)} \Sigma_\beta \bar \rmQ \vmu_\beta \\
    &= n(\rmI_p - \gamma \bar \rmQ) \vmu_\beta
\end{align*}
And:
\begin{align*}
    \E[\rmQ \Sigma_\beta \rmQ \rmX \vy] &= \sum_{i = 1}^n \E[y_i \rmQ \Sigma_\beta \rmQ \vx_i] \\
    &= \frac{n}{(1 + \delta_Q)} \E[y_i \rmQ \Sigma_\beta \rmQ_{-i} \vx_i] \\
    &= \frac{n}{(1 + \delta_Q)} \E\left[y_i \left( \rmQ_{-i} - \frac{\frac1n \rmQ_{-i} \vx_i \vx_i^\top \rmQ_{-i}}{1 + \delta_Q}  \right) \Sigma_\beta \rmQ_{-i} \vx_i \right] \\
    &= \frac{n}{(1 + \delta_Q)} \left( \E[y_i \rmQ_{-i} \Sigma_\beta \rmQ_{-i} \vx_i] - \frac{1}{n(1 + \delta_Q)} \E[y_i \rmQ_{-i} \vx_i \vx_i^\top \rmQ_{-i} \Sigma_\beta \rmQ_{-i} \vx_i] \right) \\
    &= \frac{n}{(1 + \delta_Q)} \left( \E[\rmQ \Sigma_\beta \rmQ] \vmu_\beta - \frac{1}{n(1 + \delta_Q)} \Tr(\Sigma_\beta \E[\rmQ \Sigma_\beta \rmQ]) \bar \rmQ \vmu_\beta \right) \\
    &= \frac{n}{h(1 + \delta_Q)} \bar \rmQ \Sigma_\beta \bar \rmQ \vmu_\beta - \frac{n(1 - h)}{h} \bar \rmQ \vmu_\beta \\
    &= n \left( \frac{1}{h} (\rmI_p - \gamma \bar \rmQ) \bar \rmQ \vmu_\beta - \frac{1 - h}{h} \bar \rmQ \vmu_\beta \right) \\
    &= n(\bar \rmQ \vmu_\beta - \frac{\gamma}{h} \bar \rmQ^2 \vmu_\beta)
\end{align*}
Thus:
\begin{equation}
    \frac{1}{n} \E[\tilde \vw^\top \rmX \rmX^\top \rmQ \vx \vw^\top \vx] = \frac{1}{(1 + \delta_R)} \vmu^\top \bar \rmR \left( \rmI_p - 2 \gamma \bar \rmQ + \frac{\gamma^2}{h} \bar\rmQ^2 \right) \vmu_\beta
\end{equation}
Let us now compute the remaining term:
\begin{align*}
    \frac1n \E[\tilde \vw^\top \rmX \rmX^\top \rmQ \vx \tilde \vw^\top \vx ] &= \frac1n \E[\tilde \vw^\top \rmX \rmX^\top \rmQ \vx \vx^\top \tilde \vw] \\
    &= \frac1n \E[\tilde \vw^\top \rmX \rmX^\top \rmQ \Sigma_\beta \tilde \vw] \\
    &= \E[\tilde \vw^\top (\rmI_p - \gamma \rmQ) \Sigma_\beta \tilde \vw] 
\end{align*}
And again by lemma \ref{lemma: E[w A w]}:
\begin{align*}
    \frac1n \E[\tilde \vw^\top \rmX \rmX^\top \rmQ \vx \tilde \vw^\top \vx ] = \frac{1}{(1 + \delta_R)^2} \vmu^\top \E[\rmR (\rmI_p - \gamma \rmQ) \Sigma_\beta \rmR] \vmu + \frac{\Tr(\Sigma \E[\rmR (\rmI_p - \gamma \rmQ) \Sigma_\beta \rmR])}{N(1 + \delta_R)^2} \left(1 - \frac{2}{ (1 + \delta_R)} \vmu^\top \bar \rmR \vmu  \right)
\end{align*}

Now let us group all the results as follows.
\paragraph{Terms without $\alpha$:} There is only one term which is:
\begin{align*}
    T_1 = \E[(\vw^\top \vx)^2] &= \frac{1}{h(1 + \delta_Q)} \left( (2 h - 1) \vmu_\beta^\top \bar \rmQ \vmu_\beta - \gamma \vmu_\beta^\top \bar \rmQ^2 \vmu_\beta \right) + \frac{1 - h}{h} \\
    &= \frac{\Vert \vmu_\beta \Vert^2}{h(\Vert \vmu_\beta \Vert^2 + 1 + \gamma(1 + \delta_Q))} \left( \frac{\Vert \vmu_\beta \Vert^2 + 1}{\Vert \vmu_\beta \Vert^2 + 1 + \gamma (1 + \delta_Q)} - 2(1 - h)\right) + \frac{1 - h}{h}
\end{align*}

\paragraph{Terms in $\alpha$:} There are two: $2 \E[\vw^\top \vx \tilde \vw^\top \vx]$ and $\frac{2}{n} \E[\tilde \vw^\top \rmX \rmX^\top \rmQ \vx \vw^\top \vx] $:
\begin{align*}
    T_2 &= 2 \E[\vw^\top \vx \tilde \vw^\top \vx] - \frac{2}{n} \E[\tilde \vw^\top \rmX \rmX^\top \rmQ \vx \vw^\top \vx] \\
    &= \frac{2}{(1 + \delta_R)} \left( \vmu_\beta \bar \rmR \vmu - \gamma \vmu_\beta^\top \bar \rmR \bar \rmQ \vmu - \vmu^\top \bar \rmR (\rmI_p - 2 \gamma \bar \rmQ + \frac{\gamma^2}{h} \bar \rmQ^2) \vmu_\beta \right) \\
    &= \frac{ 2 \gamma}{(1 + \delta_R)} \vmu^\top \bar \rmR \bar \rmQ \left( \rmI_p - \frac{\gamma}{h} \bar \rmQ \right) \vmu_\beta
\end{align*}
And using lemma \ref{lemma:relevant-identities}:
\begin{equation*}
    T_2 = \frac{2 \gamma(1 + \delta_Q) \beta \Vert \vmu \Vert^2}{\left(\Vert \vmu \Vert^2 + 1 + \tilde \gamma (1 + \delta_R) \right)(\Vert \vmu_\beta \Vert^2 + 1 + \gamma (1 + \delta_Q))} \left( 1 - \frac{\gamma (1 + \delta_Q)}{h (\Vert \vmu_\beta \Vert^2 + 1 + \gamma (1 + \delta_Q))}\right)
\end{equation*}
\paragraph{Terms in $\alpha^2$}: we have three terms: $\E[(\tilde \vw^\top \vx)^2]$, $\frac{1}{n^2} \E[(\tilde \vw^\top \rmX \rmX^\top \rmQ \vx)^2]$ and $\frac{-2}{n}\E[\tilde \vw^\top \rmX \rmX^\top \rmQ \vx \tilde \vw^\top \vx]$:
\begin{align*}
    T_3 &= \E[(\tilde \vw^\top \vx)^2] +\frac{1}{n^2} \E[(\tilde \vw^\top \rmX \rmX^\top \rmQ \vx)^2] - \frac{2}{n}\E[\tilde \vw^\top \rmX \rmX^\top \rmQ \vx \tilde \vw^\top \vx] \\
    &= \frac{\gamma}{(1 + \delta_R)^2} \vmu^\top \left( \E[\rmR \bar \rmQ \Sigma_\beta \rmR] - \E[\rmR \Sigma_\beta \bar \rmQ \rmR] + \gamma \E[\rmR \rmQ \Sigma_\beta \rmQ \rmR] \right) \vmu \\
    &+ \frac{\gamma}{N(1 + \delta_R)^2} \left( 1 - \frac{2}{(1 + \delta_R) } \vmu^\top \bar \rmR \vmu\right) \Tr \left( \Sigma (\E[\rmR \bar \rmQ \Sigma_\beta \rmR] - \E[\rmR \Sigma_\beta \bar \rmQ \rmR] + \gamma \E[\rmR \rmQ \Sigma_\beta \rmQ \rmR]) \right) \\
    &= \frac{\gamma^2}{(1 + \delta_R)^2} \left [ \vmu^\top \E[\rmR \rmQ \Sigma_\beta \rmQ \rmR] \vmu + \left( 1 - \frac{2}{(1 + \delta_R)} \vmu^\top \bar \rmR \vmu \right) \frac1N \Tr(\Sigma \E[\rmR \rmQ \Sigma_\beta \rmQ \rmR] ) \right]
\end{align*}
where the last equality is gotten using lemma \ref{lemma:commutativity}.\\
We also have that:
\begin{align*}
    \frac1N \Tr(\Sigma \E[\rmR \rmQ \Sigma_\beta \rmQ \rmR]) &= \frac1N \Tr( \E[\Sigma \rmR \rmQ \Sigma_\beta \rmQ \rmR]) \\
    &= \frac1N \E[\Tr (\Sigma \rmR \rmQ \Sigma_\beta \rmQ \rmR)] \\
    &= \frac1N \E[\Tr (\rmR \Sigma \rmR \rmQ \Sigma_\beta \rmQ)] \\
    &= \frac1N \Tr (\E[\rmR \Sigma \rmR \rmQ \Sigma_\beta \rmQ]) \\
    &= \frac1N \Tr (\E[\rmR \Sigma \rmR] \E[\rmQ \Sigma_\beta \rmQ]) \\
    &= \frac{1}{h \tilde h} \frac1N \Tr(\bar \rmR \Sigma \bar \rmR \bar \rmQ \Sigma_\beta \bar \rmQ) 
\end{align*}
And:
\begin{align*}
    \vmu^\top \E[\rmR \rmQ \Sigma_\beta \rmQ \rmR] \vmu &= \Tr(\E[\vmu^\top \rmR \rmQ \Sigma_\beta \rmQ \rmR \vmu]) \\
    &= \E[\Tr(\rmR \vmu \vmu^\top \rmR \rmQ \Sigma_\beta \rmQ)] \\
    &= \Tr(\E[\rmR \vmu \vmu^\top \rmR ] \E[\rmQ \Sigma_\beta \rmQ]) \\
    &= \frac{1}{h} \Tr(\E[\rmR \vmu \vmu^\top \rmR] \bar \rmQ \Sigma_\beta \bar \rmQ)
\end{align*}
Thus:
\begin{equation*}
    T_3 = \frac{\gamma^2}{h (1 + \delta_R)^2} \left[ \Tr(\E[\rmR\vmu \vmu^\top \rmR ] \bar \rmQ \Sigma_\beta \bar \rmQ) + \left(1 - \frac{2}{(1 + \delta_R)} \vmu^\top \bar \rmR \vmu \right) \frac{1}{ \tilde h} \frac1N \Tr(\bar \rmR \Sigma \bar \rmR \bar \rmQ \Sigma_\beta \bar \rmQ)  \right]
\end{equation*}
Now remains to compute $\E[\rmR\vmu \vmu^\top \rmR ]$. For that, we use lemma \ref{lemma: DE of QAQ}:
\begin{align*}
    \E[\rmR \vmu \vmu^\top \rmR] &= \bar \rmR \vmu \vmu^\top \bar \rmR + \frac1N \frac{\Tr(\Sigma \bar \rmR \vmu \vmu^\top \bar \rmR)}{(1 + \delta_R)^2} \E[\rmR \Sigma \rmR] \\
    &= \bar \rmR \vmu \vmu^\top \bar \rmR + \frac1N \frac{\vmu^\top \bar \rmR \Sigma \bar \rmR \vmu}{(1 + \delta_R)^2} \frac{1}{\tilde h} \bar \rmR \Sigma \bar \rmR
\end{align*}
And since we are in the regime of $N \to \infty$, then:
\begin{equation*}
    \frac1N \vmu^\top \bar \rmR \Sigma \bar \rmR \vmu = \gO(N^{-1})
\end{equation*}
Thus:
\begin{equation}
    \E[\rmR \vmu \vmu^\top \rmR] = \bar \rmR \vmu \vmu^\top \bar \rmR
\end{equation}
Hence, $T_3$ becomes:
\begin{align*}
    T_3 = \frac{\gamma^2}{h (1 + \delta_R)^2} \left[ \vmu^\top \bar \rmR \bar \rmQ \Sigma_\beta \bar \rmQ \bar \rmR \vmu + \left(1 - \frac{2}{(1 + \delta_R)} \vmu^\top \bar \rmR \vmu \right) \frac{1}{ \tilde h} \frac1N \Tr(\bar \rmR \Sigma \bar \rmR \bar \rmQ \Sigma_\beta \bar \rmQ)  \right]
\end{align*}
And we also have that:
\begin{align*}
    \vmu^\top \bar \rmR \bar \rmQ \Sigma_\beta \bar \rmQ \bar \rmR \vmu &= \vmu^\top \bar \rmR \bar \rmQ \vmu_\beta \vmu_\beta^\top \bar \rmQ \bar \rmR \vmu + \vmu^\top \bar \rmR \bar \rmQ^2 \bar \rmR \vmu \\
    &= \left( \vmu^\top \bar \rmR \bar \rmQ \vmu_\beta \right)^2 + \vmu^\top \bar \rmR \bar \rmQ^2 \bar \rmR \vmu
\end{align*}
And:
\begin{align*}
    \frac1N \Tr(\bar \rmR \Sigma \bar \rmR \bar \rmQ \Sigma_\beta \bar \rmQ) &= \frac1N \Tr(\bar \rmR^2 \bar \rmQ^2)
\end{align*}
Therefore:
\begin{equation}
    T_3 = \frac{\gamma^2}{h (1 + \delta_R)^2} \left[ \left( \vmu^\top \bar \rmR \bar \rmQ \vmu_\beta \right)^2 + \vmu^\top \bar \rmR \bar \rmQ^2 \bar \rmR \vmu + \left(1 - \frac{2}{(1 + \delta_R)} \vmu^\top \bar \rmR \vmu \right) \frac{1}{ \tilde h} \frac1N \Tr(\bar \rmR^2 \bar \rmQ^2)  \right]
\end{equation}

Then using lemmas \ref{lemma:trace-identities} and \ref{lemma:relevant-identities}:
\begin{align*}
    T_3 &= \frac{\gamma^2}{h (1 + \delta_R)^2} \left[ \left( \vmu^\top \bar \rmR \bar \rmQ \vmu_\beta \right)^2 + \vmu^\top \bar \rmR \bar \rmQ^2 \bar \rmR \vmu \right] + \frac{\gamma^2}{h (1 + \delta_R)^2}\left(1 - \frac{2}{(1 + \delta_R)} \vmu^\top \bar \rmR \vmu \right) \frac{1}{ \tilde h} \frac1N \Tr(\bar \rmR^2 \bar \rmQ^2)  \\
    &= \frac{\gamma^2 (1 + \delta_Q)^2}{h} [ \frac{\Vert \vmu \Vert^2}{\lambda_R^2} \left( \frac{\beta^2 \Vert \vmu \Vert^2}{\lambda_Q^2} + \frac{1}{(1 + \gamma (1 + \delta_Q))^2} \left( 1 + \frac{\beta^2 \Vert \vmu \Vert^2 \Vert \vmu_\beta \Vert^2}{\lambda_Q^2} - \frac{2 \beta^2 \Vert \vmu \Vert^2}{\lambda_Q} \right)  \right) + \\
    & \frac{\tilde \eta}{(1 + \gamma (1 + \delta_Q))^2(1 + \tilde \gamma (1 + \delta_R))^2} \left( 1 - \frac{2 \Vert \vmu \Vert^2}{\lambda_R}\right) ] \\
    &= \frac{\gamma^2 (1 + \delta_Q)^2}{h} \left [ \frac{\Vert \vmu \Vert^2}{\lambda_R^2} \left( \frac{\beta^2 \Vert \vmu \Vert^2}{\lambda_Q^2} + \frac{1 - h}{\eta } \left( 1 + \frac{\beta^2 \Vert \vmu \Vert^2 \Vert \vmu_\beta \Vert^2}{\lambda_Q^2} - \frac{2 \beta^2 \Vert \vmu \Vert^2}{\lambda_Q} + (1 - \tilde h) \left( 1 - \frac{2 \Vert \vmu \Vert^2}{\lambda_R} \right) \right)  \right) \right]
\end{align*}
Finally:
\begin{align}
    &T_1 = \frac{\Vert \vmu_\beta \Vert^2}{h \lambda_Q} \left( \frac{\Vert \vmu_\beta \Vert^2 + 1}{\lambda_Q} - 2 (1 - h) \right) + \frac{1 - h}{h} \\
    &T_2 = \frac{2 \gamma \beta (1 + \delta_Q) \Vert \vmu \Vert^2}{\lambda_R \lambda_Q} \left( 1 - \frac{\gamma (1 + \delta_Q)}{h \lambda_Q}\right)\\
    &T_3 = \frac{\gamma^2 (1 + \delta_Q)^2}{h} \left [ \frac{\Vert \vmu \Vert^2}{\lambda_R^2} \left( \frac{\beta^2 \Vert \vmu \Vert^2}{\lambda_Q^2} + \frac{1 - h}{\eta } \left( 1 + \frac{\beta^2 \Vert \vmu \Vert^2 \Vert \vmu_\beta \Vert^2}{\lambda_Q^2} - \frac{2 \beta^2 \Vert \vmu \Vert^2}{\lambda_Q} + (1 - \tilde h) \left( 1 - \frac{2 \Vert \vmu \Vert^2}{\lambda_R} \right) \right)  \right) \right]
\end{align}
And the expression of the second order expectation reads:
\begin{equation}
    \E[(\vw_\alpha^\top \vx)^2] = T_1 + \alpha T_2 + \alpha^2 T_3
\end{equation}
And finally, Theorem \ref{thm:main-ridge} follows:

\begin{theorem}[Gaussianity of the fine-tuned Ridge model]
    Let $\vw_\alpha$ be the fine-tuned classifier as defined in \eqref{eq:w-fine-tuned} and suppose that Assumption \ref{assum:growth-rates} holds. The decision function $\vw_\alpha^\top \vx$, on some test sample $\vx \in \gC_a$ independent of $\rmX$, satisfies:
    \begin{align*}
    \vw_\alpha^\top \vx \,\, \toind  \,\, \gN\left( (-1)^a m_\alpha,\,  \nu_\alpha - m_\alpha^2 \right),
    \end{align*}

    where:
    \begin{align*}
        m_\alpha &= \frac{1}{\lambda_Q} \left( \Vert \vmu_\beta \Vert^2 + \frac{\alpha \beta \gamma (1 + \delta_Q)}{\lambda_R} \Vert \vmu \Vert^2 \right), \\
        \nu_\alpha &= T_1 + \alpha T_2 + \alpha^2 T_3.
    \end{align*}
    With:
    \begin{align*}
    &T_1 = \frac{\Vert \vmu_\beta \Vert^2}{h \lambda_Q} \left( \frac{\Vert \vmu_\beta \Vert^2 + 1}{\lambda_Q} - 2 (1 - h) \right) + \frac{1 - h}{h}, \\
    &T_2 = \frac{2\gamma \beta (1 + \delta_Q) \Vert \vmu \Vert^2}{\lambda_R \lambda_Q} \left( 1 - \frac{\gamma (1 + \delta_Q)}{h \lambda_Q}\right) ,\\
    &T_3 = \frac{\gamma^2 (1 + \delta_Q)^2}{h} \times \\
    &\left [ \frac{\Vert \vmu \Vert^2}{\lambda_R^2} \left( \frac{\beta^2 \Vert \vmu \Vert^2}{\lambda_Q^2} + \frac{1 - h}{\eta } \left( 1 + \frac{\beta^2 \Vert \vmu \Vert^2 \Vert \vmu_\beta \Vert^2}{\lambda_Q^2} - \frac{2 \beta^2 \Vert \vmu \Vert^2}{\lambda_Q} \right)  \right) + \frac{(1 - h)(1 - \tilde h)}{\eta} \left( 1 - \frac{2 \Vert \vmu \Vert^2}{\lambda_R}\right) \right].
\end{align*}

\end{theorem}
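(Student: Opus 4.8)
The plan is to combine a conditioning argument for the Gaussianity with the deterministic-equivalent machinery of Lemma~\ref{lemma:det-equiv} to pin down the constants $m_\alpha$ and $\nu_\alpha$.

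\emph{Step 1: conditional Gaussianity.} Write the test point as $\vx = (-1)^a \vmu_\beta + \vz$ with $\vz \sim \gN(\vzero,\rmI_p)$ independent of $\rmX$ and $\tilde\rmX$. Conditionally on the training data, $\vw_\alpha = \frac1n\rmQ\rmX\vy + \alpha\gamma\rmQ\tilde\vw$ is deterministic, so
\begin{equation*}
    \vw_\alpha^\top \vx \mid (\rmX,\tilde\rmX) \;\sim\; \gN\!\left( (-1)^a\, \vw_\alpha^\top \vmu_\beta,\; \Vert \vw_\alpha \Vert_2^2 \right).
\end{equation*}
Hence it suffices to show $\vw_\alpha^\top \vmu_\beta \asto m_\alpha$ and $\Vert \vw_\alpha \Vert_2^2 \asto \nu_\alpha - m_\alpha^2$: convergence in distribution of the unconditional law to $\gN((-1)^a m_\alpha,\ \nu_\alpha - m_\alpha^2)$ then follows from continuity of the Gaussian family in its parameters and dominated convergence. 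Both $\vw_\alpha^\top\vmu_\beta$ and $\Vert\vw_\alpha\Vert_2^2$ are finite combinations of linear and bilinear forms in $\rmQ$ and $\rmR$ against bounded-norm deterministic vectors; such forms concentrate almost surely around their expectations by the Gaussian/martingale concentration estimates underlying Lemma~\ref{lemma:det-equiv}, so only the limiting expectations remain to be identified. Moreover $\E[(\vw_\alpha^\top\vx)^2]=\E[\Vert\vw_\alpha\Vert_2^2]+\E[(\vw_\alpha^\top\vmu_\beta)^2]$, so once the mean is fixed the limit of $\E[\Vert\vw_\alpha\Vert_2^2]$ is automatically $\nu_\alpha-m_\alpha^2$.

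\emph{Step 2: the mean.} From $\vw_\alpha = \vw + \alpha\tilde\vw - \frac{\alpha}{n}\rmQ\rmX\rmX^\top\tilde\vw$ and independence of $\vx$ from the data, $\E[\vw_\alpha^\top\vx] = \E[\vw]^\top\E[\vx] + \alpha\E[\tilde\vw]^\top\E[\vx] - \frac{\alpha}{n}\E[\tilde\vw]^\top\E[\rmX\rmX^\top\rmQ]\E[\vx]$. Lemma~\ref{lemma:E[w]} gives $\E[\vw]=\frac{1}{1+\delta_Q}\bar\rmQ\vmu_\beta$ and $\E[\tilde\vw]=\frac{1}{1+\delta_R}\bar\rmR\vmu$, a leave-one-out step gives $\E[\rmX\rmX^\top\rmQ]=\frac{n}{1+\delta_Q}\Sigma_\beta\bar\rmQ$, and $\Sigma_\beta\bar\rmQ=(1+\delta_Q)(\rmI_p-\gamma\bar\rmQ)$ by Lemma~\ref{lemma:commutativity}. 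The identity contributions cancel, leaving $\E[\vw_\alpha^\top\vx]=(-1)^a\big(\frac{1}{1+\delta_Q}\vmu_\beta^\top\bar\rmQ\vmu_\beta + \frac{\alpha\gamma}{1+\delta_R}\vmu^\top\bar\rmR\bar\rmQ\vmu_\beta\big)$, and substituting the scalar evaluations of Lemma~\ref{lemma:relevant-identities} yields $m_\alpha = \frac{1}{\lambda_Q}\big(\Vert\vmu_\beta\Vert^2 + \frac{\alpha\beta\gamma(1+\delta_Q)}{\lambda_R}\Vert\vmu\Vert^2\big)$.

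\emph{Step 3: the second moment, and the obstacle.} Square the three-term decomposition of $\vw_\alpha$ and group by powers of $\alpha$. The $\alpha^0$ term is the single-task ``no-FT'' variance $\E[(\vw^\top\vx)^2]$ computed in \cite{firdoussi2024high}, giving $T_1$. The $\alpha^1$ terms $2\E[\vw^\top\vx\,\tilde\vw^\top\vx]$ and $-\frac2n\E[\tilde\vw^\top\rmX\rmX^\top\rmQ\vx\,\vw^\top\vx]$ collapse, after writing $\frac1n\rmX\rmX^\top\rmQ=\rmI_p-\gamma\rmQ$ and using independence of $\tilde\rmX$ from $\rmX$ together with Lemma~\ref{lemma: DE of QAQ} on $\E[\rmQ\Sigma_\beta\rmQ\rmX\vy]$, to $T_2$ via Lemma~\ref{lemma:relevant-identities}. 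The $\alpha^2$ terms $\E[(\tilde\vw^\top\vx)^2]$, $\frac1{n^2}\E[(\tilde\vw^\top\rmX\rmX^\top\rmQ\vx)^2]$ and $-\frac2n\E[\tilde\vw^\top\rmX\rmX^\top\rmQ\vx\,\tilde\vw^\top\vx]$ all reduce (replacing $\vx\vx^\top$ by $\Sigma_\beta$ in expectation and using $\frac1n\rmX\rmX^\top\rmQ=\rmI_p-\gamma\rmQ$) to quadratic forms $\E[\tilde\vw^\top\rmA\tilde\vw]$ with $\rmA$ a function of $\rmQ$ only, hence independent of $\tilde\vw$; applying Lemma~\ref{lemma: E[w A w]}, then Lemma~\ref{lemma: DE of QAQ} to the residual $\rmR\cdot\rmR$ and $\rmQ\cdot\rmQ$ sandwiches (which factorize as $\E[\rmR\cdot\rmR]\,\E[\rmQ\cdot\rmQ]$ precisely because $\rmR$ and $\rmQ$ are independent, producing the factors $1/\tilde h$ and $1/h$), and finally Lemmas~\ref{lemma:trace-identities}--\ref{lemma:relevant-identities} for the scalar traces, produces $T_3$. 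Collecting gives $\nu_\alpha=T_1+\alpha T_2+\alpha^2 T_3$ and the variance $\nu_\alpha-m_\alpha^2$. The main obstacle is the $\alpha^2$ block, specifically $\E[\tilde\vw^\top(\rmI_p-\gamma\rmQ)\Sigma_\beta(\rmI_p-\gamma\rmQ)\tilde\vw]$: one must apply the deterministic equivalent for $\rmR\,\rmA\,\rmR$ with a \emph{random} $\rmA=(\rmI_p-\gamma\rmQ)\Sigma_\beta(\rmI_p-\gamma\rmQ)$ depending on the independent block $\rmX$, then a second reduction for $\rmQ\,\Sigma_\beta\,\rmQ$, carefully tracking that the cross-expectation factorizes only by independence and that lower-order $\gO(N^{-1})$ pieces (such as $\frac1N\vmu^\top\bar\rmR\Sigma\bar\rmR\vmu$) vanish. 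All the real work is this bookkeeping through the six identities of Lemma~\ref{lemma:relevant-identities}; the Gaussianity itself (Step~1) is routine once concentration of resolvent forms is granted.
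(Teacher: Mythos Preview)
The proposal is correct and follows essentially the same route as the paper: the mean is obtained via $\E[\vw]$, $\E[\tilde\vw]$ and the leave-one-out identity $\E[\rmX\rmX^\top\rmQ]=\tfrac{n}{1+\delta_Q}\Sigma_\beta\bar\rmQ$, and the second moment is obtained by expanding the square, grouping by powers of $\alpha$, and reducing with Lemmas~\ref{lemma: E[w A w]}, \ref{lemma: DE of QAQ}, \ref{lemma:trace-identities}--\ref{lemma:relevant-identities}, exactly as in the appendix. Your Step~1 (conditional Gaussianity plus concentration of $\vw_\alpha^\top\vmu_\beta$ and $\Vert\vw_\alpha\Vert_2^2$) is slightly more explicit than the paper, which simply computes $\E[\vw_\alpha^\top\vx]$ and $\E[(\vw_\alpha^\top\vx)^2]$ and asserts the Gaussian limit, but this is a clarification rather than a different argument.
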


\begin{figure}[t]
    \centering
    \includegraphics[width=\textwidth]{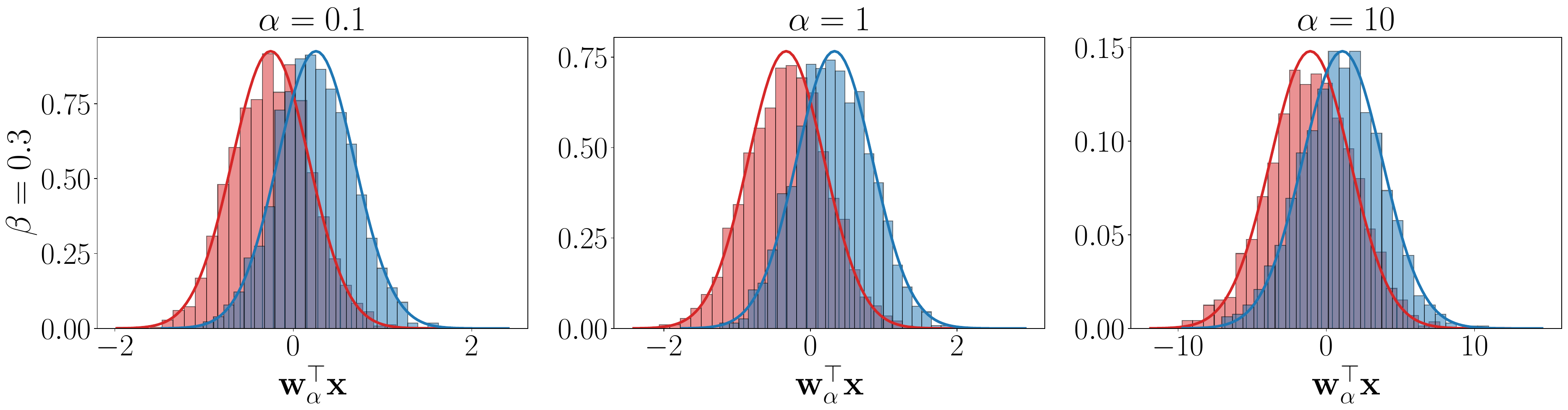}
    \includegraphics[width=\textwidth]{figures/distribution-N-5000-n-200-p-400-mu-1.5-mu_orth-1-alpha-10-beta-0.3-gamma_pre-1-gamma_ft-1.pdf}
   \includegraphics[width=0.8\textwidth]{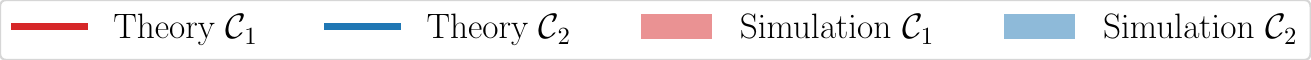}
    \caption{Distribution of the decision function $\vw_\alpha^\top \vx$ for different values of $\alpha$ (per column) and $\beta$ (per row). Here we have $N = 5000$, $n = 200$, $p = 400$, $ \Vert \vmu \Vert = 1.5 $, $\Vert \vmu^{\perp} \Vert = 1$, $\gamma = \tilde \gamma = 1$. The theoretical Gaussian distributions are predicted as per Theorem \ref{thm:main-ridge}.}
    \label{fig:distribution-decision-ridge}
\end{figure}

\subsection{Finding optimal $\alpha^*$}
Since the test accuracy is given by $\gA_{\text{test}} = 1 - \varphi\left( (\nu_\alpha - m_\alpha^2)^{-\frac12} m_\alpha \right)$ as in Proposition \ref{prop:test-perf}, and that $\phi(x)$ is a non-increasing function, then finding the optimal $\alpha^*$ that maximizes the test accuracy boils down to maximizing the term inside $\phi$. Thus, by computing the derivative with respect to $\alpha$ of $(\nu_\alpha - m_\alpha^2)^{-\frac12} m_\alpha$ and finding the zero of the gradient gives us the final form of the best scaling parameter $\alpha^*$:
\begin{equation*}
    \alpha^* = \frac{\lambda_R T_2 \Vert \vmu_\beta \Vert^2 - 2 \beta \gamma T_1 (1 + \delta_Q) \Vert \vmu \Vert^2}{\beta \gamma  T_2 (1 + \delta_Q) \Vert \vmu \Vert^2 - 2 \lambda_R T_3 \Vert \vmu_\beta \Vert^2}
\end{equation*}
And since the worst test accuracy is $50 \%$ (random classification), which is obtained for $m_\alpha = 0$, then solving the previous equation gives the worst scaling $\bar \alpha$ to use:
\begin{equation*}
    \bar \alpha = - \frac{\lambda_R \Vert \vmu_\beta \Vert^2}{\beta \gamma (1 + \delta_Q) \Vert \vmu \Vert^2}
\end{equation*}

\section{RMT Analysis of the fine-tuned classifier: the case of random source vector} \label{sec:rmt_analysis_random}
Let $\vx \sim \gN((-1)^a \vmu_\beta, \rmI_p)$ be an independent test sample. Let $\tilde \vw$ be the source classifier (obtained through some optimization algorithm). We recall that:
\begin{equation*}
    \vw_\alpha = \vw + \alpha \tilde \vw - \frac{\alpha}{n} \rmQ(\gamma) \rmX \rmX^\top \tilde \vw, \quad \vw = \frac{1}{n} \rmQ(\gamma) \rmX \vy
\end{equation*}

\subsection{Test Expectation}
We have that:
\begin{align}
    \E[\vw_\alpha^\top \vx] = \E[\vw^\top \vx] + \alpha \E[\tilde \vw^\top \vx] - \frac{\alpha}{n} \E[\tilde \vw^\top \rmX \rmX^\top \rmQ \vx]
\end{align}
Let us compute each term of this previous sum.\\
First, using lemma \ref{lemma:E[w]}, we have that, since $\vx$ is independent of $\rmX$:
\begin{align*}
    \E[\vw^\top \vx] = \E[\vw]^\top \E[\vx] =  \frac{(-1)^a}{1 + \delta_Q} \vmu_\beta^\top \bar \rmQ \vmu_\beta 
\end{align*}
And we have that:
\begin{equation*}
    \E[\tilde \vw^\top \vx] = (-1)^a \tilde \vw^\top \vmu_\beta
\end{equation*}

And:
\begin{align*}
    \frac{\alpha}{n} \E[\tilde \vw^\top \rmX \rmX^\top \rmQ \vx] &= \frac{\alpha}{n} \sum_{i = 1}^n \E[\tilde \vw^\top \vx_i \vx_i^\top \rmQ \vx] \\
    &= \frac{\alpha}{n(1 + \delta_Q)} \sum_{i = 1}^n  \E[\tilde \vw^\top \vx_i \vx_i^\top \rmQ_{-i} \vx] \\
    &= \frac{\alpha}{n(1 + \delta_Q)} \sum_{i = 1}^n \E[\tilde \vw^\top \Sigma_\beta \rmQ_{-i} \vx] \\
    &= \frac{(-1)^a \alpha}{1 + \delta_Q} \tilde \vw^\top \Sigma_\beta \bar \rmQ \vmu_\beta
\end{align*}
Thus:
\begin{align*}
    \E[\vw_\alpha^\top \vx] &= (-1)^a \left( \frac{1}{1 + \delta_Q} \vmu_\beta^\top \bar \rmQ \vmu_\beta + \alpha \tilde \vw^\top \vmu_\beta - \frac{\alpha}{1 + \delta_Q} \tilde \vw^\top \Sigma_\beta \bar \rmQ \vmu_\beta \right) \\
    &= (-1)^a \left(\frac{1}{1 + \delta_Q} \vmu_\beta^\top \bar \rmQ \vmu_\beta + \alpha \tilde \vw^\top \vmu_\beta - \alpha \tilde \vw^\top (\bar \rmQ^{-1} - \gamma \rmI_p) \bar \rmQ \vmu_\beta \right)\\
    &= (-1)^a \left ( \frac{1}{1 + \delta_Q}  \vmu_\beta^\top \bar \rmQ \vmu_\beta + \alpha \gamma \tilde \vw^\top \bar \rmQ \vmu_\beta \right)
\end{align*}
Using the forumlas in lemma \ref{lemma:relevant-identities}:
\begin{equation}
    \E[\vw_\alpha^\top \vx] = \frac{(-1)^a}{\Vert \vmu_\beta \Vert^2 + 1 + \gamma (1 + \delta_Q)} \left ( \Vert \vmu_\beta \Vert^2 + \alpha \gamma (1 + \delta_Q) \tilde \vw^\top \vmu_\beta \right)
\end{equation}

\subsection{Test variance}
To compute the variance of $\vw_\alpha^\top \vx$, it suffices to compute the second moment: $\E[(\vw_\alpha^\top \vx)^2]$.
\begin{align}
\label{eq:mean_2-global}
    \E[(\vw_\alpha^\top \vx)^2] = \E[(\vw^\top \vx + \alpha \tilde \vw^\top \vx)^2 + \frac{\alpha^2}{n^2} (\tilde \vw^\top \rmX \rmX^\top \rmQ \vx)^2 - \frac{2 \alpha}{n} \tilde \vw^\top \rmX \rmX^\top \rmQ \vx (\vw^\top \vx + \alpha \tilde \vw^\top \vx)]
\end{align}
\paragraph{First term:} We start by computing
\begin{equation*}
    \E[(\vw^\top \vx + \alpha \tilde \vw^\top \vx)^2] = \E[(\vw^\top \vx)^2] + \alpha^2 \E[(\tilde \vw^\top \vx)^2] + 2 \alpha \E[\vw^\top \vx \tilde \vw^\top \vx]
\end{equation*}
We have that, as proved in \cite{firdoussi2024high}:

\begin{align*}
    \E[(\vw^\top \vx)^2] &= \frac{1}{h(1 + \delta_Q)} \left ( \frac{1}{1 + \delta_Q}\vmu_\beta^\top \bar \rmQ \Sigma_\beta \bar \rmQ \vmu_\beta - 2 (1 - h) \vmu_\beta^\top \bar \rmQ \vmu_\beta \right) + \frac{1 - h}{h} \\
    &= \frac{1}{h(1 + \delta_Q)} \left ( \frac{1}{1 + \delta_Q} \left( (\vmu_\beta^\top \bar \rmQ \vmu_\beta)^2 + \vmu_\beta^\top \bar \rmQ^2 \vmu_\beta \right) - 2 (1 - h) \vmu_\beta^\top \bar \rmQ \vmu_\beta \right) + \frac{1 - h}{h} \\
    &= \frac{\Vert \vmu_\beta \Vert^2}{h(\Vert \vmu_\beta \Vert^2 + 1 + \gamma(1 + \delta_Q))} \left( \frac{\Vert \vmu_\beta \Vert^2 + 1}{\Vert \vmu_\beta \Vert^2 + 1 + \gamma (1 + \delta_Q)} - 2(1 - h)\right) + \frac{1 - h}{h}
\end{align*}

And we have that:
\begin{equation*}
    \E[(\tilde \vw^\top \vx)^2] = \tilde \vw^\top \Sigma_\beta \tilde \vw
\end{equation*}
And:
\begin{equation*}
    \E[\vw^\top \vx \tilde \vw^\top \vx] = \E[\vw]^\top \Sigma_\beta \tilde \vw  = \frac{1}{1 + \delta_Q} \vmu_\beta^\top \bar \rmQ \Sigma_\beta \tilde \vw
\end{equation*}
Thus we have the first sum.

\paragraph{Second term:} Now let us compute the expectation of the second term:
\begin{align*}
    \frac{1}{n^2} \E[(\tilde \vw^\top \rmX \rmX^\top \rmQ \vx)^2] &= \frac{1}{n^2} \E[\tilde \vw^\top \rmX \rmX^\top \rmQ \vx \vx^\top \rmX \rmX^\top \rmQ \tilde \vw] \\
    &= \tilde \vw^\top \E[\frac{1}{n} \rmX \rmX^\top \rmQ \Sigma_\beta \frac1n\rmX \rmX^\top \rmQ ] \tilde \vw \\
    &= \tilde \vw^\top \E[(\rmQ^{-1} - \gamma \rmI_p) \rmQ \Sigma_\beta (\rmQ^{-1} - \gamma \rmI_p) \rmQ] \tilde \vw \\
    &= \tilde \vw^\top \E[(\rmI_p - \gamma \rmQ) \Sigma_\beta (\rmI_p - \gamma \rmQ)] \tilde \vw \\
    &= \tilde \vw^\top \E\left [ \Sigma_\beta - \gamma \Sigma_\beta \rmQ - \gamma \rmQ \Sigma_\beta + \gamma^2 \rmQ \Sigma_\beta \rmQ \right] \tilde \vw \\
    &= \tilde \vw^\top \left( \Sigma_\beta - \gamma \Sigma_\beta \bar \rmQ - \gamma \bar \rmQ \Sigma_\beta + \gamma^2  \right) \tilde \vw \\
    &= \tilde \vw^\top \Sigma_\beta \vw - 2 \gamma \tilde \vw^\top \Sigma_\beta \bar \rmQ \tilde \vw + \gamma^2 \tilde \vw^\top \E[\rmQ \Sigma_\beta \rmQ] \tilde \vw
\end{align*}
\paragraph{Third term:} Now we will compute the last term: $\frac{2\alpha}{n}\E[\tilde \vw^\top \rmX \rmX^\top \rmQ \vx (\vw^\top \vx + \alpha \tilde \vw^\top \vx)] $. \\
We have that:
\begin{align*}
    &\frac1n \E[\tilde \vw^\top \rmX \rmX^\top \rmQ \vx \vx^\top \vw] = \tilde \vw^\top \E[(\rmQ^{-1} - \gamma \rmI_p) \rmQ \Sigma_\beta \vw] \\
    &= \tilde \vw^\top \E[(\rmI_p - \gamma \rmQ) \Sigma_\beta \vw] \\
    &= \tilde \vw^\top \Sigma_\beta \E[\vw] - \gamma \tilde \vw^\top \E[\rmQ \Sigma_\beta \vw] \\
    &= \tilde \vw^\top \frac{\Sigma_\beta}{1 + \delta_Q} \bar \rmQ \vmu_\beta - \gamma \tilde \vw^\top \E[\rmQ \Sigma_\beta \vw] \\
    &= \tilde \vw^\top \frac{\Sigma_\beta}{1 + \delta_Q} \bar \rmQ \vmu_\beta - \gamma \frac1n \sum_{i= 1}^n \tilde \vw^\top \E[\rmQ \Sigma_\beta \rmQ y_i \vx_i] \\
    &= \tilde \vw^\top \frac{\Sigma_\beta}{1 + \delta_Q} \bar \rmQ \vmu_\beta - \gamma \tilde \vw^\top \E[\rmQ \Sigma_\beta \rmQ y_i \vx_i] \\
    &= \tilde \vw^\top \frac{\Sigma_\beta}{1 + \delta_Q} \bar \rmQ \vmu_\beta - \frac{\gamma}{1 + \delta_Q} \tilde \vw^\top \E[\rmQ \Sigma_\beta \rmQ_{-i} y_i \vx_i] \\
    &= \tilde \vw^\top \frac{\Sigma_\beta}{1 + \delta_Q} \bar \rmQ \vmu_\beta - \frac{\gamma}{1 + \delta_Q} \tilde \vw^\top \E \left[\left(\rmQ_{-i} - \frac{\frac1n \rmQ_{-i} \vx_i \vx_i^\top \rmQ_{-i}}{1 + \delta_Q} \right) \Sigma_\beta \rmQ_{-i} y_i \vx_i \right] \\
    &= \tilde \vw^\top \frac{\Sigma_\beta}{1 + \delta_Q} \bar \rmQ \vmu_\beta - \frac{\gamma}{1 + \delta_Q} \tilde \vw^\top \E[\rmQ_{-i} \Sigma_\beta \rmQ_{-i} y_i \vx_i] + \frac{\gamma}{n (1 + \delta_Q)^2} \tilde \vw^\top \E[\rmQ_{-i} \vx_i \vx_i^\top \rmQ_{-i} \Sigma_\beta \rmQ_{-i} y_i \vx_i ] \\
    &= \tilde \vw^\top \frac{\Sigma_\beta}{1 + \delta_Q} \bar \rmQ \vmu_\beta - \frac{\gamma}{1 + \delta_Q} \tilde \vw^\top \E[\rmQ \Sigma_\beta \rmQ] \vmu_\beta + \frac{\gamma}{n (1 + \delta_Q)^2} \Tr(\Sigma_\beta \E[\rmQ \Sigma_\beta \rmQ]) \tilde \vw^\top \bar \rmQ \vmu_\beta 
\end{align*}
And:
\begin{align*}
    &\frac{1}{n} \E[\tilde \vw^\top \rmX \rmX^\top \rmQ \vx \vx^\top \tilde \vw] = \tilde \vw^\top \E[(\rmQ^{-1} - \gamma \rmI_p) \rmQ \Sigma_\beta] \tilde \vw \\
    &= \tilde \vw^\top \E[(\rmI_p - \gamma \rmQ) \Sigma_\beta] \tilde \vw \\
    &= \tilde \vw^\top \Sigma_\beta \tilde \vw - \gamma \tilde \vw^\top \bar \rmQ \Sigma_\beta \tilde \vw
\end{align*}

\paragraph{Grouping all the terms:}
Thus, we now that we have the expression of all the term, we will group them in the following way:
\begin{align*}
    \E[(\vw_\alpha^\top \vx)^2] = T_1 + \alpha T_2 + \alpha^2 T_3
\end{align*}

\paragraph{Terms without $\alpha$:}
\begin{equation}
    T_1 = \frac{\Vert \vmu_\beta \Vert^2}{h(\Vert \vmu_\beta \Vert^2 + 1 + \gamma(1 + \delta_Q))} \left( \frac{\Vert \vmu_\beta \Vert^2 + 1}{\Vert \vmu_\beta \Vert^2 + 1 + \gamma (1 + \delta_Q)} - 2(1 - h)\right) + \frac{1 - h}{h}
\end{equation}

\paragraph{Terms in $\alpha$:} There are two : $2\E[\vw^\top \vx \tilde \vw^\top \vx]$ and $\frac{2}{n} \E[\tilde \vw^\top \rmX \rmX^\top \rmQ \vx \vw^\top \vx]$:
\begin{align*}
    T_2 &= 2 \E[\vw^\top \vx \tilde \vw^\top \vx] - \frac{2}{n} \E[\tilde \vw^\top \rmX \rmX^\top \rmQ \vx \vw^\top \vx] \\
    &= \frac{2 \gamma}{h (1 + \delta_Q)} \left( \tilde \vw^\top \bar \rmQ \Sigma_\beta \bar \rmQ \vmu_\beta - (1 -h) (1 + \delta_Q) \tilde \vw^\top \bar \rmQ \vmu_\beta \right) \\
    &= \frac{2 \gamma}{h (1 + \delta_Q)} \left( \tilde \vw^\top \bar \rmQ \vmu_\beta \vmu_\beta^\top \bar \rmQ \vmu_\beta + \tilde \vw^\top \bar \rmQ^2 \vmu_\beta - (1 - h)(1 + \delta_Q) \tilde \vw^\top \bar \rmQ \vmu_\beta \right) 
\end{align*}
And we have that:
\begin{align*}
    \tilde \vw^\top \bar \rmQ \vmu_\beta \vmu_\beta^\top \bar \rmQ \vmu_\beta = \frac{(1 + \delta_Q)^2 \Vert \vmu_\beta \Vert^2 \tilde \vw^\top \vmu_\beta}{(\Vert \vmu_\beta \Vert^2 + 1 + \gamma (1 + \delta_Q))^2}, \quad \tilde \vw^\top \bar \rmQ^2 \vmu_\beta = \frac{(1 + \delta_Q)^2 \tilde \vw^\top \vmu_\beta }{(\Vert \vmu_\beta \Vert^2 + 1 + \gamma (1 + \delta_Q))^2}.
\end{align*}
Thus:
\begin{align*}
    T_2 = \frac{2 \gamma (1 + \delta_Q) \tilde \vw^\top \vmu_\beta}{h (\Vert \vmu_\beta \Vert^2 + 1 + \gamma (1 + \delta_Q))} \left(\frac{\Vert \vmu_\beta \Vert^2 + 1}{\Vert \vmu_\beta \Vert^2 + 1 + \gamma (1 + \delta_Q)} - (1 - h) \right)
\end{align*}

\paragraph{Terms in $\alpha^2$:}
we have three terms: $\E[(\tilde \vw^\top \vx)^2]$, $\frac{1}{n^2} \E[(\tilde \vw^\top \rmX \rmX^\top \rmQ \vx)^2]$ and $\frac{-2}{n}\E[\tilde \vw^\top \rmX \rmX^\top \rmQ \vx \tilde \vw^\top \vx]$:

\begin{align*}
    &T_3 = \E[(\tilde \vw^\top \vx)^2] +\frac{1}{n^2} \E[(\tilde \vw^\top \rmX \rmX^\top \rmQ \vx)^2] - \frac{2}{n}\E[\tilde \vw^\top \rmX \rmX^\top \rmQ \vx \tilde \vw^\top \vx] \\
    &= \tilde \vw^\top \Sigma_\beta \tilde \vw + \tilde \vw^\top \Sigma_\beta \tilde \vw - 2 \gamma \tilde \vw^\top \Sigma_\beta \bar \rmQ \tilde \vw + \gamma^2 \tilde \vw^\top \E[\rmQ \Sigma_\beta \rmQ] \tilde \vw - 2 \tilde \vw^\top \Sigma_\beta \tilde \vw + 2 \gamma \tilde \vw^\top \bar \rmQ \Sigma_\beta \tilde \vw \\
    &= \gamma^2 \tilde \vw^\top \E[\rmQ \Sigma_\beta \rmQ] \tilde \vw \\
    &= \frac{\gamma^2}{h} \tilde \vw^\top \bar \rmQ \Sigma_\beta \bar \rmQ \tilde \vw \\
    &= \frac{\gamma^2}{h} \left ( (\tilde \vw^\top \bar \rmQ \vmu_\beta)^2 + \tilde \vw^\top \bar \rmQ^2 \tilde \vw \right) \\
    &= \frac{\gamma^2 (1 + \delta_Q)^2}{h} \left( \frac{(\tilde \vw^\top \vmu_\beta)^2}{(\Vert \vmu_\beta \Vert^2 + 1 + \gamma (1 + \delta_Q))^2} + \frac{1 - h}{\eta} \left( \Vert \tilde \vw \Vert^2 + \frac{\Vert \vmu_\beta \Vert^2 (\tilde \vw^\top \vmu_\beta)^2}{(\Vert \vmu_\beta \Vert^2 + 1 + \gamma (1 + \delta_Q))^2} - \frac{2 (\tilde \vw^\top \vmu_\beta)^2}{\Vert \vmu_\beta \Vert^2 + 1 + \gamma (1 + \delta_Q)}\right) \right) \\
    &= \frac{\gamma^2 (1 + \delta_Q)^2}{h} \left( \frac{(\tilde \vw^\top \vmu_\beta)^2}{\lambda_Q^2} + \frac{1 - h}{\eta} \Vert \tilde \vw \Vert^2 + \frac{(1 - h) (\tilde \vw^\top \vmu_\beta)^2 }{\eta \lambda_Q} \left( \frac{\Vert \vmu_\beta \Vert^2}{\lambda_Q} - 2 \right) \right)
\end{align*}

Which finally gives the following theorem:
\begin{theorem}[Gaussianity of the fine-tuned model for an arbitrary $\tilde \vw$]
\label{thm:main-arbitrary}
    Let $\vw_\alpha$ be the fine-tuned classifier as defined in \eqref{eq:w-fine-tuned} and suppose that Assumption \ref{assum:growth-rates} holds. The decision function $\vw_\alpha^\top \vx$, on some test sample $\vx \in \gC_a$ independent of $\rmX$, satisfies:
    \begin{align*}
    \vw_\alpha^\top \vx \,\, \toind  \,\, \gN\left( (-1)^a m_\alpha,\,  \nu_\alpha - m_\alpha^2 \right),
    \end{align*}
    where:
    \begin{align*}
        m_\alpha &= \frac{\Vert \vmu_\beta \Vert^2 + \alpha \gamma (1 + \delta_Q) \langle \tilde \vw, \vmu_\beta \rangle }{\Vert \vmu_\beta \Vert^2 + 1 + \gamma (1 + \delta_Q)} , \\
        \nu_\alpha &= T_1 + \alpha T_2 + \alpha^2 T_3.
    \end{align*}
    with:
    \begin{align*}
    &T_1 = \frac{\Vert \vmu_\beta \Vert^2}{h \lambda_Q} \left( \frac{\Vert \vmu_\beta \Vert^2 + 1}{\lambda_Q} - 2(1 - h)\right) + \frac{1 - h}{h}, \\
    &T_2 = \frac{2 \gamma (1 + \delta_Q) \langle \tilde \vw, \vmu_\beta \rangle}{h \lambda_Q} \left(\frac{\Vert \vmu_\beta \Vert^2 + 1}{\lambda} - (1 - h) \right) ,\\
    &T_3 = \frac{\gamma^2 (1 + \delta_Q)^2}{h} \left( \frac{\langle \tilde \vw, \vmu_\beta \rangle^2}{\lambda_Q^2} + \frac{1 - h}{\eta} \Vert \tilde \vw \Vert^2 + \frac{(1 - h) \langle \tilde \vw, \vmu_\beta \rangle^2 }{\eta \lambda_Q} \left( \frac{\Vert \vmu_\beta \Vert^2}{\lambda_Q} - 2 \right) \right).
\end{align*}
\end{theorem}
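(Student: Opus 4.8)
The plan is to mirror the two-step strategy used for Theorem~\ref{thm:main-ridge}: first reduce the claim to computing two deterministic scalars, namely the asymptotic mean and second moment of the decision function, and then evaluate those scalars via the deterministic equivalents of Lemma~\ref{lemma:det-equiv} and the auxiliary identities. The Gaussianity is immediate by conditioning. Write $\vx = (-1)^a\vmu_\beta + \vz$ with $\vz \sim \gN(\vzero,\rmI_p)$ independent of $\rmX$, hence of $\vw_\alpha$; conditionally on $\rmX$ one has exactly $\vw_\alpha^\top\vx \sim \gN\big((-1)^a\vw_\alpha^\top\vmu_\beta,\ \Vert\vw_\alpha\Vert^2\big)$. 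It then suffices to show that $\vw_\alpha^\top\vmu_\beta$ and $\Vert\vw_\alpha\Vert^2$ concentrate around deterministic limits $m_\alpha$ and $\nu_\alpha - m_\alpha^2$, which follows from the concentration of linear and bilinear forms of $\rmQ$ underlying the deterministic-equivalent machinery (using $\Vert\vmu\Vert,\Vert\vmu_\beta\Vert = \gO(1)$ and the fact that $\tilde\vw$ is deterministic and independent of $\rmX$). The conditional characteristic function $e^{it(-1)^a\vw_\alpha^\top\vmu_\beta - t^2\Vert\vw_\alpha\Vert^2/2}$ then converges to that of $\gN((-1)^a m_\alpha, \nu_\alpha - m_\alpha^2)$, with $m_\alpha := \lim\E[\vw_\alpha^\top\vmu_\beta]$ and $\nu_\alpha := \lim\E[(\vw_\alpha^\top\vx)^2]$.

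For the mean I would expand $\vw_\alpha = \vw + \alpha\tilde\vw - \tfrac{\alpha}{n}\rmQ\rmX\rmX^\top\tilde\vw$ and handle the three pieces: Lemma~\ref{lemma:E[w]} gives $\E[\vw] = \tfrac{1}{1+\delta_Q}\bar\rmQ\vmu_\beta$; the term $\alpha\tilde\vw$ is deterministic; and the leave-one-out identity $\rmQ\vx_i = \rmQ_{-i}\vx_i/(1+\delta_Q)$ with independence of $\rmQ_{-i}$ and $\vx_i$ gives $\tfrac1n\E[\rmQ\rmX\rmX^\top] = \tfrac{1}{1+\delta_Q}\bar\rmQ\Sigma_\beta$. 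Using the commutativity identity $\bar\rmQ\Sigma_\beta = (1+\delta_Q)(\rmI_p - \gamma\bar\rmQ)$ from Lemma~\ref{lemma:commutativity}, the $\alpha\tilde\vw$ contributions cancel and $\E[\vw_\alpha] = \tfrac{1}{1+\delta_Q}\bar\rmQ\vmu_\beta + \alpha\gamma\bar\rmQ\tilde\vw + o(1)$; then $\vmu_\beta^\top\bar\rmQ\vmu_\beta$ and $\bar\rmQ\vmu_\beta = \tfrac{1+\delta_Q}{\lambda_Q}\vmu_\beta$ from Lemma~\ref{lemma:relevant-identities} yield $m_\alpha = \tfrac{1}{\lambda_Q}\big(\Vert\vmu_\beta\Vert^2 + \alpha\gamma(1+\delta_Q)\langle\tilde\vw,\vmu_\beta\rangle\big)$.

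For the second moment I would expand $\E[(\vw_\alpha^\top\vx)^2]$ as in \eqref{eq:mean_2-global} and collect terms by powers of $\alpha$. The $\alpha^0$ term is $\E[(\vw^\top\vx)^2]$, whose limit $T_1$ is quoted from \cite{firdoussi2024high}. The recurring tools are: $\tfrac1n\rmX\rmX^\top\rmQ = \rmI_p - \gamma\rmQ$ (so every $\rmX\rmX^\top\rmQ$ collapses); $\E[\rmQ]\to\bar\rmQ$ in linear forms; $\va^\top\E[\rmQ\Sigma_\beta\rmQ]\vb = \tfrac1h\va^\top\bar\rmQ\Sigma_\beta\bar\rmQ\vb$ from Lemma~\ref{lemma: DE of QAQ}; and the nested leave-one-out evaluation $\E[\rmQ\Sigma_\beta\vw] = \bar\rmQ\vmu_\beta - \tfrac{\gamma}{h}\bar\rmQ^2\vmu_\beta$, which is exactly the computation of $\tfrac1n\E[\rmQ\Sigma_\beta\rmQ\rmX\vy]$ already carried out in Section~\ref{sec:rmt_analysis_1}. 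Since $\tilde\vw$ is deterministic, $\E[(\tilde\vw^\top\vx)^2] = \tilde\vw^\top\Sigma_\beta\tilde\vw$ and $\E[\vw^\top\vx\,\tilde\vw^\top\vx] = \tfrac{1}{1+\delta_Q}\vmu_\beta^\top\bar\rmQ\Sigma_\beta\tilde\vw$. Grouping the $\alpha^1$ contributions $2\E[\vw^\top\vx\,\tilde\vw^\top\vx] - \tfrac2n\E[\tilde\vw^\top\rmX\rmX^\top\rmQ\vx\,\vw^\top\vx]$ and simplifying with Lemma~\ref{lemma:commutativity} and the scalar identities $\tilde\vw^\top\bar\rmQ\vmu_\beta = \tfrac{1+\delta_Q}{\lambda_Q}\langle\tilde\vw,\vmu_\beta\rangle$, $\tilde\vw^\top\bar\rmQ^2\vmu_\beta = \tfrac{(1+\delta_Q)^2}{\lambda_Q^2}\langle\tilde\vw,\vmu_\beta\rangle$ gives $T_2$; grouping the $\alpha^2$ contributions $\E[(\tilde\vw^\top\vx)^2] + \tfrac1{n^2}\E[(\tilde\vw^\top\rmX\rmX^\top\rmQ\vx)^2] - \tfrac2n\E[\tilde\vw^\top\rmX\rmX^\top\rmQ\vx\,\tilde\vw^\top\vx]$, the $\Sigma_\beta$ and $\bar\rmQ\Sigma_\beta$ pieces cancel and one is left with $\gamma^2\tilde\vw^\top\E[\rmQ\Sigma_\beta\rmQ]\tilde\vw = \tfrac{\gamma^2}{h}\big((\tilde\vw^\top\bar\rmQ\vmu_\beta)^2 + \tilde\vw^\top\bar\rmQ^2\tilde\vw\big)$, which, after inserting $\bar\rmQ^2 = \tfrac{(1+\delta_Q)^2}{(1+\gamma(1+\delta_Q))^2}\big(\rmI_p + \tfrac{(\vmu_\beta\vmu_\beta^\top)^2}{\lambda_Q^2} - \tfrac{2\vmu_\beta\vmu_\beta^\top}{\lambda_Q}\big)$ and $1-h = \tfrac{\eta}{(1+\gamma(1+\delta_Q))^2}$, becomes the stated $T_3$. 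Setting $\nu_\alpha = T_1 + \alpha T_2 + \alpha^2 T_3$ and feeding $m_\alpha,\nu_\alpha$ into the limit of the first step finishes the proof.

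The step I expect to be the main obstacle is the nested Sherman–Morrison expansion behind $\E[\rmQ\Sigma_\beta\vw]$, i.e. the cross term $\tfrac1n\E[\tilde\vw^\top\rmX\rmX^\top\rmQ\vx\,\vw^\top\vx]$: the $\tfrac1h$-type correction from $\E[\rmQ\Sigma_\beta\rmQ]$ enters there through a delicate cancellation, and one must keep careful track of which resolvent is "deterministic" at each stage. Everything else is either a direct factorization, because $\tilde\vw$ is deterministic and independent of $\rmX$, or a reuse of the bookkeeping from Section~\ref{sec:rmt_analysis_1}. A secondary, more routine point is making the concentration in the Gaussianity step rigorous — bounding the variances of $\vw_\alpha^\top\vmu_\beta$ and $\Vert\vw_\alpha\Vert^2$ — but this is controlled by the same deterministic-equivalent estimates, together with the $\gO(n^{-1})$ and $\gO(N^{-1})$ remainder control already invoked for Lemmas~\ref{lemma: DE of QAQ} and~\ref{lemma: E[w A w]}.
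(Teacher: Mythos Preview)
Your proposal is correct and follows essentially the same approach as the paper's proof in Section~\ref{sec:rmt_analysis_random}: both compute the mean and second moment of $\vw_\alpha^\top\vx$ by expanding $\vw_\alpha = \vw + \alpha\tilde\vw - \tfrac{\alpha}{n}\rmQ\rmX\rmX^\top\tilde\vw$, use the identity $\tfrac1n\rmX\rmX^\top\rmQ = \rmI_p - \gamma\rmQ$, the deterministic equivalent $\va^\top\E[\rmQ\Sigma_\beta\rmQ]\vb = \tfrac1h\va^\top\bar\rmQ\Sigma_\beta\bar\rmQ\vb$, and the same nested leave-one-out computation of $\E[\rmQ\Sigma_\beta\rmQ\rmX\vy]$ for the cross term, then group by powers of $\alpha$. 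Your conditioning argument for the Gaussianity is slightly more explicit than what the paper writes, but the substance is identical.
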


\subsection{Finding optimal $\alpha^*$}
Since the test accuracy is given by $\gA_{\text{test}} = 1 - \varphi\left( (\nu_\alpha - m_\alpha^2)^{-\frac12} m_\alpha \right)$ as in Proposition \ref{prop:test-perf}, and that $\phi(x)$ is a non-increasing function, then finding the optimal $\alpha^*$ that maximizes the test accuracy boils down to maximizing the term inside $\phi$. Thus, by computing the derivative with respect to $\alpha$ of $(\nu_\alpha - m_\alpha^2)^{-\frac12} m_\alpha$ and finding the zero of the gradient gives us the final form of the best scaling parameter $\alpha^*$:
\begin{equation*}
    \alpha^* = \frac{\eta (1 + \gamma (1 + \delta_Q)) \langle \tilde \vw, \vmu_\beta \rangle }{\gamma (1 + \delta_Q) \left(  \lambda\Vert \vmu_\beta \Vert^2 \Vert \tilde \vw \Vert^2 - (\lambda - \eta) \langle \tilde \vw, \vmu_\beta \rangle^2 \right)}
\end{equation*}
And since the worst test accuracy is $50 \%$ (random classification), which is obtained for $m_\alpha = 0$, then solving the previous equation gives the worst scaling $\bar \alpha$ to use:
\begin{equation*}
    \bar \alpha = \frac{- \Vert \vmu_\beta \Vert^2}{\gamma (1 + \delta_Q) \langle \tilde \vw, \vmu_\beta \rangle}
\end{equation*}

\section{Extension to multi-source classifiers}\label{sec:appendix-multi-source}
Given $T$ source classifiers $\{\vw_t\}_{t = 1}^T$ and a single target task, the goal is to fine-tune a mixture of these classifiers on the target task. Specifically, we want to find the optimal fine-tuned classifier $\vw_{\Omega}$ that is written as:
\begin{equation*}
    \vw_{\Omega} = \sum_{t = 1}^T \alpha_t \vw_t + \va
\end{equation*}
where $\alpha_t \in \sR$ and $\va$ is an adapter trained on the target dataset as follows:
\begin{equation*}
    \va = \argmin_{\vv} \frac1n \Vert \rmX^\top (\sum_{t = 1}^T \alpha_t \vw_t + \vv ) - \vy \Vert^2 + \gamma \Vert \vv \Vert^2 
\end{equation*}
Then, $\va$ expresses as:
\begin{equation*}
    \va = \frac1n \left( \frac1n \rmX \rmX^\top + \gamma \rmI_p \right)^{-1} \left( \rmX \vy - \rmX \rmX^\top \sum_{t = 1}^T \alpha_t \vw_t \right)
\end{equation*}
Thus, our new fine-tuned classifier writes as:
\begin{equation*}
    \label{eq:w-multi-source}
    \vw_\Omega = \sum_{t = 1}^T \alpha_t \vw_t + \va = \frac1n \rmQ \rmX \vy + \gamma \sum_{t = 1}^T \alpha_t \rmQ \vw_t
\end{equation*}
To compute the theoretical test accuracy of this classifier, we will take a test sample $\vx \sim \gN((-1)^a\vmu_\beta,  \rmI_p)$, independent from the training data $(\vx_i)_{i = 1}^n$, and we compute the statistics of the decision function $\vw_\Omega^\top \vx$.

\subsection{Test Expectation}
We have that:
\begin{align*}
    \E[\vw_\Omega^\top \vx] &= \E[\vw^\top \vx] + \gamma \sum_{t = 1}^T \alpha_t \E[\vw_t^\top \rmQ \vx] \\
    &= \E[\vw^\top \vx] + (-1)^a \gamma \sum_{t = 1}^T \alpha_t \vw_t^\top \bar \rmQ \vmu_\beta
\end{align*}
From the previous section, we have that:
\begin{equation*}
    \E[\vw^\top \vx] = \frac{(-1)^a}{1 + \delta_Q} \vmu_\beta^\top \bar \rmQ \vmu_\beta = \frac{(-1)^a \Vert \vmu_\beta \Vert^2}{\Vert \vmu_\beta \Vert^2 + 1 + \gamma (1 + \delta_Q)}
\end{equation*}
And from lemma \ref{lemma:relevant-identities}, we have that:
\begin{equation*}
    \vw_t^\top \bar \rmQ \vmu_\beta = \frac{(1 + \delta_Q) \langle \vw_t, \vmu_\beta \rangle}{\Vert \vmu_\beta \Vert^2 + 1 + \gamma (1 + \delta_Q)}
\end{equation*}
Finally, we get that:
\begin{equation*}
    \boxed{
    \E[\vw_\Omega^\top \vx] = \frac{(-1)^a}{\Vert \vmu_\beta \Vert^2 + 1 + \gamma (1 + \delta_Q)} \left( \Vert \vmu_\beta \Vert^2 + \gamma (1 + \delta_Q) \sum_{t = 1}^T \alpha_t \langle \vw_t, \vmu_\beta \rangle \right) }
\end{equation*}

In a vectorized form, denote by $\bm{\alpha} = (\alpha_1, \dots, \alpha_T)^\top$ the vector of coefficients and by $\rmW = (\vw_1, \dots, \vw_T) \in \sR^{p \times T}$, then we have that:
\begin{equation*}
    \boxed{
    \E[\vw_\Omega^\top \vx] = (-1)^a \frac{ \Vert \vmu_\beta \Vert^2 + \gamma (1 + \delta_Q) \bm{\alpha}^\top \rmW^\top \vmu_\beta}{\Vert \vmu_\beta \Vert^2 + 1 + \gamma (1 + \delta_Q)}}
\end{equation*}
\subsection{Test variance}
Now we will compute the expectation of the second order moment of $\vw_\Omega^\top \vx$:
\begin{align*}
    \E[(\vw_\Omega^\top \vx)^2] = \E\left[(\vw^\top \vx)^2 + \gamma^2 \left(\sum_{t = 1}^T \alpha_t \vw_t^\top \rmQ \vx \right)^2 + 2 \gamma \sum_{t = 1}^T \alpha_t \vw_t^\top \rmQ \vx \vw^\top \vx \right]
\end{align*}
Let us compute each term of this sum and then aggregate the results at the end.
\paragraph{First term.} We have that:
\begin{equation*}
    \E[(\vw^\top \vx)^2] = \frac{\Vert \vmu_\beta \Vert^2 }{h \lambda_Q} \left( \frac{\Vert \vmu_\beta \Vert^2 + 1}{\lambda_Q} - 2 (1 - h) \right) + \frac{1 - h}{h}
\end{equation*}
\paragraph{Second term.} Now let us compute the second term of the sum:
\begin{align*}
    \E\left [\sum_{t = 1}^T \alpha_t \vw_t^\top \rmQ \vx \vw^\top \vx \right] &= \sum_{t = 1}^T \alpha_t \E[\vw_t^\top \rmQ \vx \vx^\top \vw] \\
    &= \sum_{t = 1}^T \alpha_t \E[\vw_t^\top \rmQ \Sigma_\beta \vw] \\
    &= \sum_{t = 1}^T \alpha_t \vw_t^\top \E[\rmQ \Sigma_\beta \frac1n \sum_{i = 1}^n y_i \rmQ \vx_i] \\
    &= \sum_{t = 1}^T \alpha_t \vw_t^\top \E[\rmQ \Sigma_\beta \rmQ y_i \vx_i] & (\vx_i \text{ i.i.d}) \\
    &= \frac{1}{1 + \delta_Q} \sum_{t = 1}^T \alpha_t \vw_t^\top \E[\rmQ \Sigma_\beta \rmQ_{-i} y_i \vx_i]
\end{align*}
And since we have that: 
\begin{equation*}
    \rmQ = \rmQ_{-i} - \frac{\rmQ_{-i} \vx_i \vx_i^\top \rmQ_{-i}}{n (1 + \delta_Q)}
\end{equation*}

Then:
\begin{align*}
    &\E\left [\sum_{t = 1}^T \alpha_t \vw_t^\top \rmQ \vx \vw^\top \vx \right] = \frac{1}{1 + \delta_Q} \sum_{t = 1}^T \alpha_t \vw_t^\top \E \left[ \left( \rmQ_{-i} - \frac{\rmQ_{-i} \vx_i \vx_i^\top \rmQ_{-i}}{n (1 + \delta_Q)} \right) \Sigma_\beta \rmQ_{-i} y_i \vx_i \right] \\
    &= \frac{1}{1 + \delta_Q} \sum_{t = 1}^T \alpha_t \vw_t^\top \E[\rmQ_{-i} \Sigma_\beta \rmQ_{-i} y_i \vx_i] - \frac{1}{n(1 + \delta_Q)^2} \sum_{t = 1}^T \alpha_t \vw_t^\top \E[\rmQ_{-i} \vx_i \vx_i^\top \rmQ_{-i} \Sigma_\beta \rmQ_{-i} y_i \vx_i]
\end{align*}
We have that:
\begin{align*}
    \sum_{t = 1}^T \alpha_t \vw_t^\top \E[\rmQ_{-i} \Sigma_\beta \rmQ_{-i} y_i \vx_i] &= \sum_{t = 1}^T \alpha_t \vw_t^\top \E[\rmQ \Sigma_\beta \rmQ] \vmu_\beta \\
    &= \frac1h \sum_{t = 1}^T \alpha_t \vw_t^\top \bar \rmQ \Sigma_\beta \bar \rmQ \vmu_\beta \\
    &= \frac1h \sum_{t = 1}^T \alpha_t \frac{(1 + \delta_Q)^2}{\lambda_Q^2} \langle \vw_t, \vmu_\beta \rangle \left(\Vert \vmu_\beta \Vert^2 + 1 \right)
\end{align*}
And we have that:
\begin{align*}
    &\frac{1}{n(1 + \delta_Q)^2} \sum_{t = 1}^T \alpha_t \vw_t^\top \E[\rmQ_{-i} \vx_i \vx_i^\top \rmQ_{-i} \Sigma_\beta \rmQ_{-i} y_i \vx_i] =  \frac{1}{n(1 + \delta_Q)^2} \sum_{t = 1}^T \alpha_t \vw_t^\top \E[\rmQ_{-i} y_i \vx_i \Tr(\vx_i \vx_i^\top \rmQ_{-i} \Sigma_\beta \rmQ_{-i})] \\
    &= \frac{1}{n(1 + \delta_Q)^2} \sum_{t = 1}^T \alpha_t \vw_t^\top \E[\rmQ_{-i} y_i \vx_i \Tr(\Sigma_\beta \E[\rmQ \Sigma_\beta \rmQ])] \\
    &= \frac{1}{n(1 + \delta_Q)^2} \sum_{t = 1}^T \alpha_t \vw_t^\top \E[\rmQ_{-i} y_i \vx_i] \frac{1}{h} \Tr((\Sigma_\beta \bar \rmQ)^2) \\
    &= \frac{1 -h}{h} \sum_{t = 1}^T \alpha_t \vw_t^\top \bar \rmQ \vmu_\beta \\
    &= \frac{1 -h}{h} \sum_{t = 1}^T \alpha_t \frac{(1 + \delta_Q) \langle \vw_t, \vmu_\beta \rangle}{\lambda_Q}
\end{align*}

Thus the second term is given by:
\begin{align*}
    \E\left [\sum_{t = 1}^T \alpha_t \vw_t^\top \rmQ \vx \vw^\top \vx \right] &= \frac{(1 + \delta_Q)}{h \lambda_Q} \sum_{t = 1}^T \alpha_t \left( \frac{\Vert \vmu_\beta \Vert^2 + 1}{\lambda_Q} - (1 - h) \right) \langle \vw_t, \vmu_\beta \rangle \\
    &= \frac{(1 + \delta_Q)}{h \lambda_Q} \left( \frac{\Vert \vmu_\beta \Vert^2 + 1}{\lambda_Q} - (1 - h) \right) \bm{\alpha}^\top \rmW^\top \vmu_\beta 
\end{align*}

\paragraph{Third term.} We have that:
\begin{align*}
    \gamma^2 \E\left[ \left( \sum_{t = 1}^T \alpha_t \vw_t^\top \rmQ \vx \right)^2 \right] &= \gamma^2 \E \left[ \sum_{t = 1}^T \alpha_t \vw_t^\top \rmQ \vx \sum_{k = 1}^T \alpha_k \vw_k^\top \rmQ \vx \right] \\
    &= \gamma^2 \sum_{t, k = 1}^T \E[\alpha_t \alpha_k \vw_t^\top \rmQ \vx \vx^\top \rmQ \vw_k] \\
    &= \gamma^2 \sum_{t, k = 1}^T \E[\vw_t^\top \rmQ \Sigma_\beta \rmQ \vw_k] \\
    &= \gamma^2 \sum_{t, k = 1}^T \vw_t^\top \E[\rmQ \Sigma_\beta \rmQ] \vw_k \\
    &= \frac{\gamma^2}{h} \sum_{t, k = 1}^T \alpha_t \alpha_k \vw_t^\top \bar \rmQ \Sigma_\beta \bar \rmQ \vw_k
\end{align*}
And we have that:
\begin{align*}
    \bar \rmQ \Sigma_\beta \bar \rmQ &= \bar \rmQ \left( \vmu_\beta \vmu_\beta^\top + \rmI_p \right) \bar \rmQ \\
    &= \bar \rmQ \vmu_\beta \vmu_\beta^\top \bar \rmQ + \bar \rmQ^2 \\
    &= \frac{(1 + \delta_Q)^2}{\lambda_Q^2} \vmu_\beta \vmu_\beta^\top  + \frac{(1 + \delta_Q)^2}{(1 + \gamma(1 + \delta_Q))^2} \left( \rmI_p + \frac{(\vmu_\beta \vmu_\beta^\top)^2}{\lambda_Q^2} - \frac{2 \vmu_\beta \vmu_\beta^\top}{\lambda_Q} \right) 
\end{align*}
Thus the last term is given by:
\begin{align*}
    &\gamma^2 \E\left[ \left( \sum_{t = 1}^T \alpha_t \vw_t^\top \rmQ \vx \right)^2 \right] = \frac{\gamma^2 (1 + \delta_Q)^2}{h} \times \\
    &\sum_{t, k = 1}^T \alpha_t \alpha_k \left [ \frac{\langle \vw_t, \vmu_\beta \rangle \langle \vw_k, \vmu_\beta \rangle}{\lambda_Q^2} + \frac{1}{(1 + \gamma(1 + \delta_Q))^2} \left(\langle \vw_t, \vw_k\rangle + \frac{\Vert \vmu_\beta \Vert^2 \langle \vw_t, \vmu_\beta \rangle \langle \vw_k, \vmu_\beta \rangle}{\lambda_Q^2} - \frac{2 \langle \vw_t, \vmu_\beta \rangle \langle \vw_k, \vmu_\beta \rangle}{\lambda_Q} \right)  \right]
\end{align*}
In a vectorized form, we have that:
\begin{align*}
    &\gamma^2 \E\left[ \left( \sum_{t = 1}^T \alpha_t \vw_t^\top \rmQ \vx \right)^2 \right] = \frac{\gamma^2(1 + \delta_Q)^2}{h} \times \\
    &\left[ \frac{(\bm{\alpha}^\top \rmW^\top \vmu_\beta)^2}{\lambda_Q^2} + \frac{1}{(1 + \gamma(1 + \delta_Q))^2} \left( \bm{\alpha}^\top \rmW^\top \rmW \bm{\alpha} + \frac{\Vert \vmu_\beta \Vert^2 (\bm{\alpha}^\top \rmW^\top \vmu_\beta )^2 }{\lambda_Q^2} - \frac{2 (\bm{\alpha}^\top \rmW^\top \vmu_\beta)^2}{\lambda_Q} \right) \right] \\
    &= \frac{\gamma^2(1 + \delta_Q)^2}{h} \bm{\alpha}^\top \rmM \bm{\alpha}
\end{align*}
where:
\begin{equation*}
    \rmM = \frac{(1 - h)}{\eta} \rmW^\top \rmW + \left( \frac{1}{\lambda_Q^2} + \frac{(1 - h)}{\eta \lambda_Q} \left( \frac{\Vert \vmu_\beta \Vert^2}{\lambda_Q} - 2 \right) \right) \rmW^\top \vmu_\beta \vmu_\beta^\top \rmW^\top 
\end{equation*}  

Finally gives us the expression of the second order moment of $\vw_\Omega^\top \vx$ as follows:
\begin{equation*}
    \E[(\vw_\Omega^\top \vx)^2] = T_1 + T_2 + T_3
\end{equation*}
where:
\begin{align*}
    &T_1 = \frac{\Vert \vmu_\beta \Vert^2 }{h \lambda_Q} \left( \frac{\Vert \vmu_\beta \Vert^2 + 1}{\lambda_Q} - 2 (1 - h) \right) + \frac{1 - h}{h} \\
    &T_2 = \frac{2 \gamma (1 + \delta_Q)}{h \lambda_Q} \sum_{t = 1}^T \alpha_t \left( \frac{\Vert \vmu_\beta \Vert^2 + 1}{\lambda_Q} - (1 - h) \right) \langle \vw_t, \vmu_\beta \rangle \\
    &T_3 = \frac{\gamma^2 (1 + \delta_Q)^2}{h} \times \\
    &\sum_{t, k = 1}^T \alpha_t \alpha_k \left [ \frac{\langle \vw_t, \vmu_\beta \rangle \langle \vw_k, \vmu_\beta \rangle}{\lambda_Q^2} + \frac{1}{(1 + \gamma(1 + \delta_Q))^2} \left(\langle \vw_t, \vw_k\rangle + \frac{\Vert \vmu_\beta \Vert^2 \langle \vw_t, \vmu_\beta \rangle \langle \vw_k, \vmu_\beta \rangle}{\lambda_Q^2} - \frac{2 \langle \vw_t, \vmu_\beta \rangle \langle \vw_k, \vmu_\beta \rangle}{\lambda_Q} \right)  \right]
\end{align*}

Which also writes in a vectorized form:
\begin{align*}
    &T_1 = \frac{\Vert \vmu_\beta \Vert^2 }{h \lambda_Q} \left( \frac{\Vert \vmu_\beta \Vert^2 + 1}{\lambda_Q} - 2 (1 - h) \right) + \frac{1 - h}{h} \\
    &T_2 = \frac{2 \gamma (1 + \delta_Q)}{h \lambda_Q} \left( \frac{\Vert \vmu_\beta \Vert^2 + 1}{\lambda_Q} - (1 - h) \right) \bm \alpha^\top \rmW^\top \vmu_\beta \\
    &T_3 = \frac{\gamma^2 (1 + \delta_Q)^2}{h} \bm \alpha^\top \rmM \bm \alpha
\end{align*}

\subsection{Finding optimal $\alpha$}
The theoretical test accuracy writes as follows:
\begin{equation*}
    \gA_{\text{test}}(\bm{\alpha}) = \varphi\left(\frac{a_1 + \bm{\alpha}^\top \vv_1}{\sqrt{a_2 + \bm{\alpha}^\top \vv_2 + \bm{\alpha}^\top \tilde \rmM \bm{\alpha}}} \right)
\end{equation*}
where:
\begin{align*}
    &a_1 = \frac{\Vert \vmu_\beta \Vert^2}{\lambda_Q}, \quad \vv_1 = \frac{\gamma (1 + \delta_Q)}{\lambda_Q} \rmW^\top \vmu_\beta, \quad a_2 = T_1 - a_1^2 = \frac{\Vert \vmu_\beta \Vert^2}{\lambda_Q} \left( \frac{\Vert \vmu_\beta \Vert^2 + 1}{h \lambda_Q} - \frac{\Vert \vmu_\beta \Vert^2}{\lambda_Q} - \frac{2(1 - h)}{h} \right) + \frac{1 - h}{h} \\
    &\vv_2 = \frac{(1 + \delta_Q)}{\lambda_Q} \left( \frac{\Vert \vmu_\beta \Vert^2 + 1}{h \lambda_Q} - \frac{2 \gamma \Vert \vmu_\beta \Vert^2 }{\lambda_Q} - \frac{1 - h}{h} \right) \rmW^\top \vmu_\beta, \\
    &\tilde M = \frac{\gamma^2 (1 + \delta_Q)^2 (1 - h)}{h} \left( \frac{1}{\eta} \rmW^\top \rmW + \left( \frac{1}{\lambda_Q^2} + \frac{1}{\eta \lambda_Q} \left( \frac{\Vert \vmu_\beta \Vert^2}{\lambda_Q} - 2 \right) \right) \rmW^\top \vmu_\beta \vmu_\beta^\top \rmW^\top \right)
\end{align*}
And therefore, since $\varphi$ is non-decreasing, maximizing this test accuracy boils down to maximizing the term inside it, i.e we want to find $\bm{\alpha}^*$ that satisfies:
\begin{equation*}
    \bm{\alpha}^* \in \argmax_{\bm{\alpha}} \frac{a_1 + \bm{\alpha}^\top \vv_1}{\sqrt{a_2 + \bm{\alpha}^\top \vv_2 + \bm{\alpha}^\top \tilde M \bm{\alpha}}} = \argmax_{\bm{\alpha}} g(\bm \alpha)
\end{equation*}

We compute the gradient of $g$ with respect to $\bm \alpha$ to find the extremum values of these mixing parameters:
\begin{equation*}
    \nabla_\alpha g(\bm \alpha) = \frac{\sqrt{a_2 + \bm \alpha^\top \vv_2 + \bm \alpha \tilde \rmM \bm \alpha} \,\ \vv_1 - (a_1 + \bm \alpha^\top \vv_1) \frac{\vv_2 + 2 \tilde \rmM \bm \alpha}{\sqrt{a_2 + \bm \alpha^\top \vv_2 + \bm \alpha^\top \tilde \rmM \bm \alpha}}}{a_2 + \bm \alpha^\top \vv_2 + \bm \alpha^\top \tilde \rmM \bm \alpha}
\end{equation*}

Thus the roots $\bm \alpha$ of $\nabla g(\bm \alpha)$ satisfy the following equation:
\begin{equation*}
    \boxed{
    (a_2 + \bm \alpha^\top \vv_2 + \bm \alpha^\top \tilde \rmM \bm \alpha) \vv_1 - (a_1 + \bm \alpha^\top \vv_1) (\vv_2 + 2 \tilde \rmM \bm \alpha) = 0}
\end{equation*}

\section{LLMs Experimental details} \label{sec:appendix-llms-exp-details}
The pseudo-code algorithm for training with $\alpha$-LoRA is given as follows in \ref{algo}.

\begin{algorithm}[h!]
    \caption{\textsc{$\alpha$-LoRA Fine-Tuning}}\label{algo}
    \begin{algorithmic}[1]
        \REQUIRE Base model weights $\{ \rmW^*_i\}_{i = 1}^N$, 
                 fine-tuning dataset $\gD = \{ B_j\}_{j = 1}^b$ divided into batches, 
                 update period $T$, 
                 optimizers \texttt{optim} (for LoRA modules) and \texttt{optim\_alpha} (for $\bm\alpha = \{\alpha_i\}_{i=1}^N$), 
                 number of epochs $n$.
        
        \FOR{$k = 1 \dots n$}
            \FOR{batch $B_j$ in $\gD$}
                \STATE Update LoRA modules $ \{(A_i, B_i)\}_{i = 1}^N$ with a gradient step on $B$ using \texttt{optim}.
                \IF{$j \bmod T = 0$}
                    \STATE Sample a fresh batch $B_{\alpha}$ from $\gD$
                        \STATE Update $\boldsymbol{\alpha}$ with a gradient step on $B_{\alpha}$ using \texttt{optim\_alpha}.
                \ENDIF
            \ENDFOR
        \ENDFOR
    \end{algorithmic}
\end{algorithm}
\subsection{Hyperparameters}
In this section, we summarize all the details about our experiments on Fine-tuning \texttt{roberta-base} model on GLUE tasks. Let us define some notations first then give their corresponding values in each experiment: \texttt{lora\_r} denotes the rank of LoRA modules, \texttt{lora\_alpha} denotes the LoRA scaling parameter, \texttt{lr\_adapter} means the learning rate used to train LoRA modules, \texttt{batch\_size} and \texttt{batch\_alpha} is the training batch size for LoRA modules and the vectors $\bm \alpha$ respectively, \texttt{lr\_alpha} is the learning rate used to update $\bm \alpha$, \texttt{optim\_alpha} is the optimizer used to train the vectors $\bm \alpha$, \texttt{val\_split} is the percentage of the training set used to train $\bm \alpha$. 

\paragraph{Common to all experiments.} We optimize the LoRA modules using \texttt{AdamW} for all the benchmarks and with a linear scheduler for the learning rate. We initialize the vectors $\bm \alpha$ to the vector $\bm 1$. The target modules are: the final classifier layer \texttt{classifier} (full training) and the attention modules \texttt{query} and \texttt{value} (Low Rank Adaptation).

\begin{table}[h!]
\centering
\begin{tabular}{|l|l|}
\hline
\textbf{Parameter}                          & \textbf{Value}       \\ \hline
\texttt{optimizer}           & AdamW                 \\ \hline
\multicolumn{2}{|c|}{\textbf{LoRA Arguments}}                 \\ \hline
\texttt{lora\_r}                            & 8                  \\ \hline
\texttt{lora\_alpha}                        & 8                  \\ \hline
\texttt{lr\_adapter}        & $10^{-4}$                   \\ \hline
\multicolumn{2}{|c|}{\textbf{Trainer Arguments}}           \\ \hline
\texttt{n\_epochs}                               & 10                 \\ \hline
\texttt{batch\_size}                               & 64                 \\ \hline
\texttt{optim\_alpha}                        & AdamW                  \\ \hline
\texttt{batch\_alpha}                        & 64                  \\ \hline
\texttt{lr\_alpha}                        & $10^{-2}$                  \\ \hline
\texttt{T}                        & 1                  \\ \hline
\texttt{val\_split}                        & 1                  \\ \hline
\texttt{seeds}                               & 1, 5, 123                   \\ \hline
\end{tabular}
\caption{Implementation Details for the fine-tuning experiment on MNLI.}
\label{tab:experiment-mnli}
\end{table}

\begin{table}[h!]
\centering
\begin{tabular}{|l|l|}
\hline
\textbf{Parameter}                          & \textbf{Value}       \\ \hline
\texttt{optimizer}           & AdamW                 \\ \hline
\multicolumn{2}{|c|}{\textbf{LoRA Arguments}}                 \\ \hline
\texttt{lora\_r}                            & 8                  \\ \hline
\texttt{lora\_alpha}                        & 8                  \\ \hline
\texttt{lr\_adapter}        & $10^{-4}$ for LoRA and $2.10^{-4}$  for $\alpha$-LoRA                 \\ \hline
\multicolumn{2}{|c|}{\textbf{Trainer Arguments}}           \\ \hline
\texttt{n\_epochs}                               & 10                 \\ \hline
\texttt{batch\_size}                               & 64                 \\ \hline
\texttt{optim\_alpha}                        & Adam                  \\ \hline
\texttt{batch\_alpha}                        & 64                  \\ \hline
\texttt{lr\_alpha}                        & $5. 10^{-3}$                  \\ \hline
\texttt{T}                        & 20                  \\ \hline
\texttt{val\_split}                        & 0.2                  \\ \hline
\texttt{seeds}                               & 1, 3, 123                   \\ \hline
\end{tabular}
\caption{Implementation Details for the fine-tuning experiment on QNLI.}
\label{tab:experiment-qnli}
\end{table}

\begin{table}[h!]
\centering
\begin{tabular}{|l|l|}
\hline
\textbf{Parameter}                          & \textbf{Value}       \\ \hline
\texttt{optimizer}           & AdamW                 \\ \hline
\multicolumn{2}{|c|}{\textbf{LoRA Arguments}}                 \\ \hline
\texttt{lora\_r}                            & 8                  \\ \hline
\texttt{lora\_alpha}                        & 8                  \\ \hline
\texttt{lr\_adapter}        & $10^{-4}$ for LoRA and $2.10^{-4}$  for $\alpha$-LoRA                 \\ \hline
\multicolumn{2}{|c|}{\textbf{Trainer Arguments}}           \\ \hline
\texttt{n\_epochs}                               & 40                 \\ \hline
\texttt{batch\_size}                               & 64                 \\ \hline
\texttt{optim\_alpha}                        & Adam                  \\ \hline
\texttt{batch\_alpha}                        & 64                  \\ \hline
\texttt{lr\_alpha}                        & $5. 10^{-3}$                  \\ \hline
\texttt{T}                        & 20                  \\ \hline
\texttt{val\_split}                        & 0.2                  \\ \hline
\texttt{seeds}                               & 3, 5, 123                   \\ \hline
\end{tabular}
\caption{Implementation Details for the fine-tuning experiment on MRPC.}
\label{tab:experiment-mrpc}
\end{table}

\begin{table}[h!]
\centering
\begin{tabular}{|l|l|}
\hline
\textbf{Parameter}                          & \textbf{Value}       \\ \hline
\texttt{optimizer}           & AdamW                 \\ \hline
\multicolumn{2}{|c|}{\textbf{LoRA Arguments}}                 \\ \hline
\texttt{lora\_r}                            & 8                  \\ \hline
\texttt{lora\_alpha}                        & 8                  \\ \hline
\texttt{lr\_adapter}        & $10^{-4}$                 \\ \hline
\multicolumn{2}{|c|}{\textbf{Trainer Arguments}}           \\ \hline
\texttt{n\_epochs}                               & 40                 \\ \hline
\texttt{batch\_size}                               & 64                 \\ \hline
\texttt{optim\_alpha}                        & AdamW                  \\ \hline
\texttt{batch\_alpha}                        & 64                  \\ \hline
\texttt{lr\_alpha}                        & $5. 10^{-3}$                  \\ \hline
\texttt{T}                        & 20                  \\ \hline
\texttt{val\_split}                        & 0.8 (and 0.2 for seed 123)                  \\ \hline
\texttt{seeds}                               & 3, 5, 123                   \\ \hline
\end{tabular}
\caption{Implementation Details for the fine-tuning experiment on RTE.}
\label{tab:experiment-rte}
\end{table}

\begin{table}[h!]
\centering
\begin{tabular}{|l|l|}
\hline
\textbf{Parameter}                          & \textbf{Value}       \\ \hline
\texttt{optimizer}           & AdamW                 \\ \hline
\multicolumn{2}{|c|}{\textbf{LoRA Arguments}}                 \\ \hline
\texttt{lora\_r}                            & 8                  \\ \hline
\texttt{lora\_alpha}                        & 8                  \\ \hline
\texttt{lr\_adapter}        & $10^{-4}$ for LoRA and $2.10^{-4}$ for $\alpha$-LoRA                 \\ \hline
\multicolumn{2}{|c|}{\textbf{Trainer Arguments}}           \\ \hline
\texttt{n\_epochs}                               & 10                 \\ \hline
\texttt{batch\_size}                               & 128                 \\ \hline
\texttt{optim\_alpha}                        & AdamW                  \\ \hline
\texttt{batch\_alpha}                        & 128                  \\ \hline
\texttt{lr\_alpha}                        & $5. 10^{-3}$                  \\ \hline
\texttt{T}                        & 10 (and 20 for seed 5)                  \\ \hline
\texttt{val\_split}                        & 0.5 (and 0.9 for seed 5)                  \\ \hline
\texttt{seeds}                               & 1, 3, 5                   \\ \hline
\end{tabular}
\caption{Implementation Details for the fine-tuning experiment on SST2.}
\label{tab:experiment-sst2}
\end{table}

\begin{table}[h!]
\centering
\begin{tabular}{|l|l|}
\hline
\textbf{Parameter}                          & \textbf{Value}       \\ \hline
\texttt{optimizer}           & AdamW                 \\ \hline
\multicolumn{2}{|c|}{\textbf{LoRA Arguments}}                 \\ \hline
\texttt{lora\_r}                            & 8                  \\ \hline
\texttt{lora\_alpha}                        & 8                  \\ \hline
\texttt{lr\_adapter}        & $5.10^{-4}$                 \\ \hline
\multicolumn{2}{|c|}{\textbf{Trainer Arguments}}           \\ \hline
\texttt{n\_epochs}                               & 5                 \\ \hline
\texttt{batch\_size}                               & 256                 \\ \hline
\texttt{optim\_alpha}                        & Adam, AdamW (seed 123)                  \\ \hline
\texttt{batch\_alpha}                        & 64                  \\ \hline
\texttt{lr\_alpha}                        & $5. 10^{-3}$                  \\ \hline
\texttt{T}                        & 1 (seed 3), 10 (seed 5) and 20 (seed 123)         \\ \hline
\texttt{val\_split}                        & 0.8                 \\ \hline
\texttt{seeds}                               & 3, 5, 123               \\ \hline
\end{tabular}
\caption{Implementation Details for the fine-tuning experiment on QQP.}
\label{tab:experiment-qqp}
\end{table}

\subsection{Values of $\alpha$}
We report in the following plots some metrics (mean, standard deviation, percentiles) describing the obtained values of the vectors $\bm \alpha$ for each module after the training phase.
\begin{figure}[h!]
    \centering
    \includegraphics[width=\textwidth]{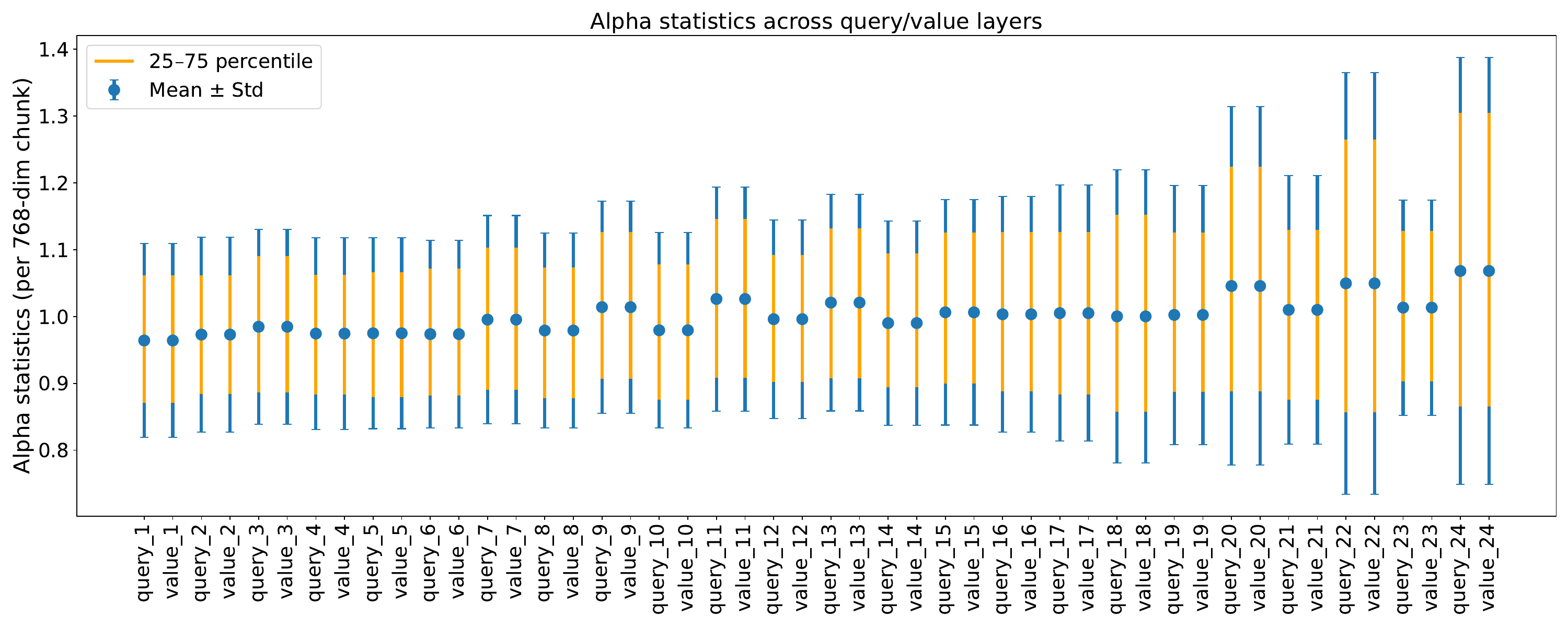}
    \caption{Statistics of the vectors $\bm \alpha$ for the MNLI benchmark}
    \label{fig:alpha-mnli}
\end{figure}

\begin{figure}[h!]
    \centering
    \includegraphics[width=\textwidth]{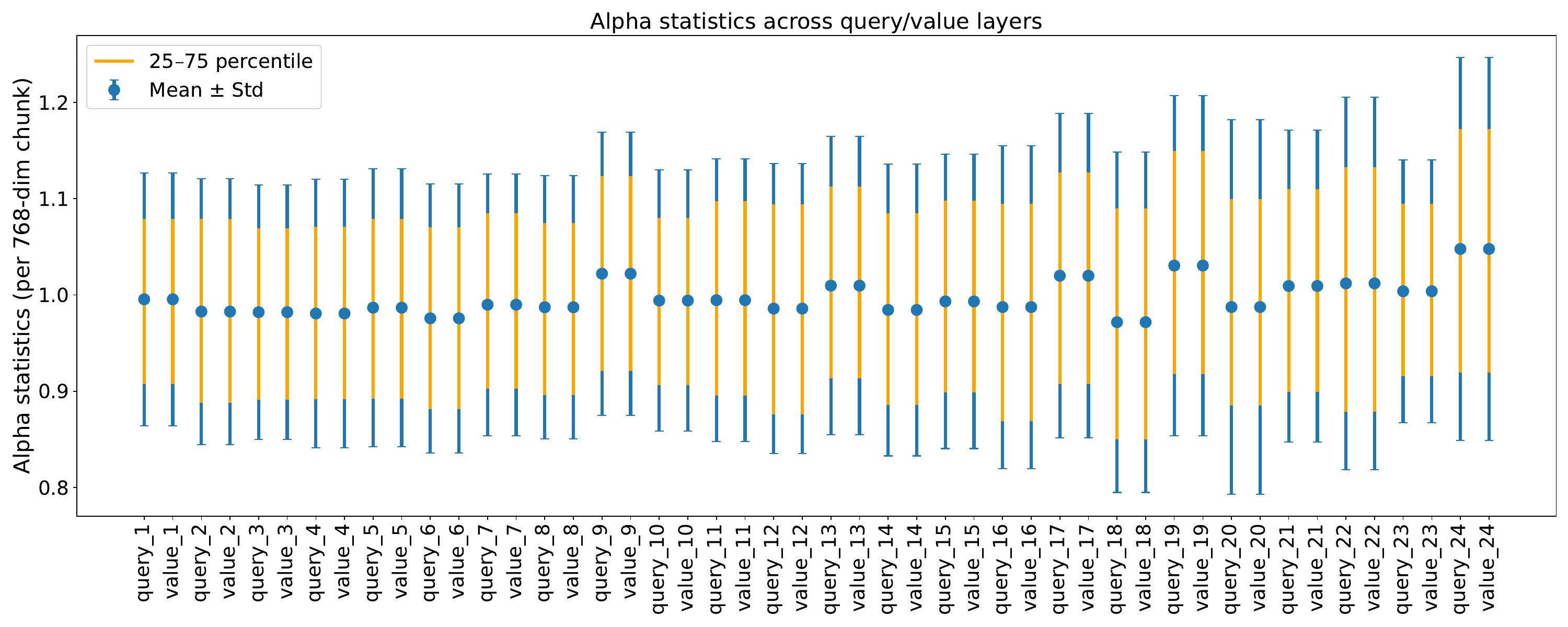}
    \caption{Statistics of the vectors $\bm \alpha$ for the QNLI benchmark}
    \label{fig:alpha-qnli}
\end{figure}

\begin{figure}[h!]
    \centering
    \includegraphics[width=\textwidth]{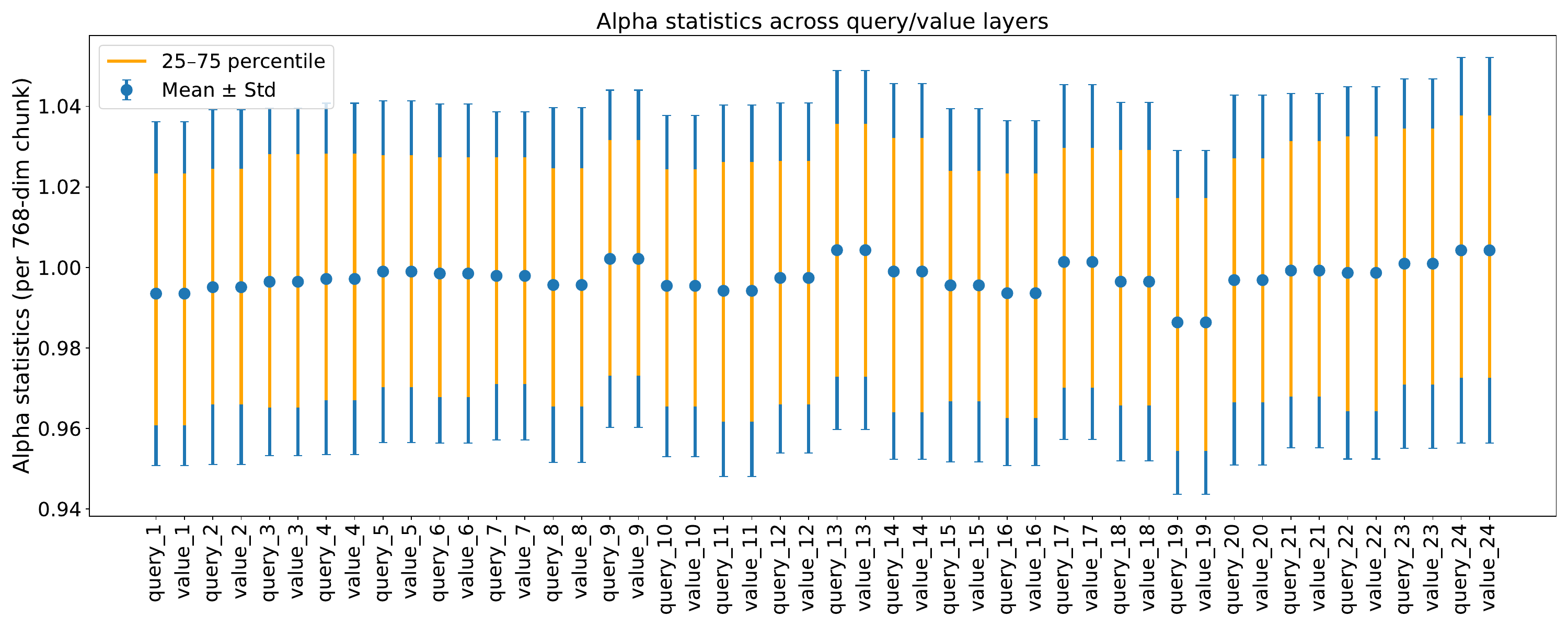}
    \caption{Statistics of the vectors $\bm \alpha$ for the RTE benchmark}
    \label{fig:alpha-rte}
\end{figure}

\end{document}